\newcommand{\mat}[1]{\vec{#1}} 
\newcommand*{\matLambda}{\bm{\Lambda}}
\newcommand*{\matSigma}{\bm{\sigma}}
\DeclareMathOperator{\Tr}{Tr} 
\DeclareMathOperator{\vect}{vec} 
\DeclareMathOperator*{\argmin}{\arg\!\min}
\DeclareMathOperator{\diag}{diag}
\DeclareMathOperator{\rank}{rank}
\DeclareMathOperator{\proj}{proj}
\DeclareMathOperator{\mean}{mean}
\DeclareMathOperator{\mydist}{d}
\DeclareMathOperator{\mydistrot}{d_{rot}}
\DeclareMathOperator{\mydisttrans}{d_{trans}}
\DeclareMathOperator{\mydistrotsquare}{d_{rot}^2} 
\DeclareMathOperator{\mydistnosym}{d_{no\_sym}}
\DeclareMathOperator{\mydistnosymsquare}{d_{no\_sym}^2}
\DeclareMathOperator{\frechetvariance}{\Phi}
\newcommand*{\eg}{e.g.\@\xspace}
\newcommand*{\ie}{i.e.\@\xspace}
\newcommand*{\etc}{%
    \@ifnextchar{.}%
        {etc}%
        {etc.\@\xspace}%
}
\definecolor{color1}{RGB}{0, 90, 212} 
\definecolor{color2}{RGB}{255, 200, 50} 
\definecolor{color3}{RGB}{120, 209, 81} 
\newcommand\myurl[1]{\nolinkurl{#1}}
\newlength{\mynegativeindentforframeddefinitions}
\newcommand*{\vsepfbox}[1]{%
  \begingroup
    \sbox0{\fbox{#1}}%
    \setlength{\fboxrule}{0pt}%
    \mbox{\kern-\fboxsep\fbox{\unhbox0}\kern-\fboxsep}%
  \endgroup
}
\newcommand*{\myframedminipage}[1]{%
\par\noindent\vsepfbox{%
\noindent\begin{minipage}{\dimexpr\linewidth-2\fboxrule-2\fboxsep}#1\end{minipage}}}
\newcommand*{\bibliographyInSubfile}{%
\bibliographystyle{spbasic}      
\bibliography{biblio}   
}
\journalname{}
\begin{document}
\renewcommand*{\bibliographyInSubfile}{}

\title{Defining the Pose of any 3D Rigid Object and an Associated Distance}



\author{Romain Brégier \and Frédéric Devernay \and Laetitia Leyrit \and James L. Crowley}


\institute{R. Brégier, L. Leyrit \at
              Siléane, Saint-Étienne, FRANCE \\
              \email{r.bregier at sileane.com} \\
           \and
           R. Brégier, F. Devernay, J. L. Crowley \at
              Inria Grenoble Rhône-Alpes \\
              UGA (Université Grenoble Alpes)
}

\date{}

\maketitle

\begin{abstract}

The pose of a rigid object is usually regarded as a rigid transformation, described by a translation and a rotation. 
However, equating the pose space with the space of rigid transformations is in general abusive, as it does not account for objects with proper symmetries -- which are common among man-made objects.
In this article, we define pose as a distinguishable static state of an object, and equate a pose with a set of rigid transformations.
Based solely on geometric considerations, we propose a frame-invariant metric on the space of possible poses, valid for any physical rigid object, and requiring no arbitrary tuning. This distance can be evaluated efficiently using a representation of poses within a Euclidean space of at most 12 dimensions depending on the object's symmetries. This makes it possible to efficiently perform neighborhood queries such as \emph{radius searches} or \emph{k-nearest neighbor searches} within a large set of poses using off-the-shelf methods. Pose averaging considering this metric can similarly be performed easily, using a projection function from the Euclidean space onto the pose space.
The practical value of those theoretical developments is illustrated with an application of pose estimation of instances of a 3D rigid object given an input depth map, via a Mean Shift procedure.
\end{abstract}

\keywords{pose \and 3D rigid object \and symmetry \and distance \and metric \and average \and rotation \and $SE(3)$ \and $SO(3)$ \and object recognition}

\section{Introduction}
Rigid body models play an important role in many technical and scientific fields, including physics, mechanical engineering, computer vision or 3D animation.
Under the rigid body assumption, the static state of an object is referred to as a \emph{pose}, and is often described in term of a \emph{position} and an \emph{orientation}.

Poses of a 3D rigid object are in general regarded as rigid transformations and the set of poses is identified with the set of rigid transformations $SE(3)$, the special Euclidean group.
The Lie group structure of $SE(3)$ makes the relative displacement of the object between two poses explicit and thus enables the definition of a distance between poses as the length of a shortest motion performing this displacement.
This identification is particularly meaningful for applications where the motion of a rigid body is considered  -- such as motion planning \citep{sucan2012} or object tracking~\citep{tjaden2016}.
However while there already exists numerous works regarding $SE(3)$ metrics, choosing how to deal with poses of an object still remains challenging, as practitioners face questions such as how to tune the relative importance of position and orientation, even in the current deep learning era~\citep{kendall2015}.
	
There are applications in which motion considerations are irrelevant, and for which only a notion of \emph{similarity} between poses is required. Pose estimation of instances of a rigid object based on a noisy set of votes is a good example of such a problem.
While motion-based applications rely on local properties of the pose space which have been the subject of a large amount of research work, applications based on similarity have to deal with numerous poses at once, performing operations such as neighborhood queries -- \ie finding poses in a set of poses similar to a given one --  or pose averaging and have not gathered as much theoretical interest.
Consequently, similarity measures suffering from major flaws are still used in practical applications.
Following the work of \citet{fanelli2011}, \citet{tejani2014} use a Mean Shift procedure based on the Euclidean distance between Euler angles as a representation of poses in their state-of-the-art object pose estimation method. Such a measure is fast to compute and enables the use of efficient tools developed for Euclidean spaces to perform the neighborhood queries and pose averaging required for Mean Shift, but it is not a distance. The parametrization of a rotation based on Euler angles notoriously suffers from border effects, singularities, and is dependent on the choice of frame.
These issues may only have limited effects on the results announced by the authors, thanks to an appropriate choice of frames orientation and to the low variability of objects orientations within their datasets. Nonetheless, they cannot be avoided when dealing with the general case of poses having arbitrary orientations.
Such an example expresses the lack of tools for dealing efficiently with large sets of poses.
	
Lastly, there are cases where the pose of a rigid object cannot be identified as a single rigid transformation and therefore, for which existing results cannot be applied. Such cases occur when dealing with objects showing symmetry properties such as revolution objects or cuboids, and are, in fact, common among manufactured objects.
The existing literature on object pose estimation does not usually discuss how such objects are handled, and the most widespread validation method used for symmetrical objects~\citep{hinterstoisser2012} consists in a relaxed similarity measure that cannot distinguish between poses such as a cylindrical can being flipped up or down.

Our goal in this paper is to address those issues by providing a consistent and general framework for dealing with any kind of physically admissible rigid object in practical applications. 
To this end, we propose a pose definition valid for any bounded rigid object, equivalent to a set of rigid transformations (section \ref{sec:pose_definition}). We then propose a physically meaningful distance over the pose space (section \ref{sec:proposed_metric}), and show how poses can be represented in a Euclidean space to enable fast distance computations and neighborhood queries (section~\ref{sec:pose_representation}). We show how the pose averaging problem can be solved quite efficiently (section~\ref{sec:averaging_poses}) for this metric using a projection technique (section~\ref{sec:projection}) and lastly we propose an example application for the problem of pose estimation of instances of a rigid object given a set of votes.

\bibliographyInSubfile

\section{A definition for pose}
\label{sec:pose_definition}
While the notion of pose of a rigid object is widely used, \eg in robotics or computer vision, we have not found in the literature a general definition. We therefore propose the following one:
\myframedminipage{%
\begin{definition}
A \emph{pose} of a rigid object is a distinguishable static state of this object.
\end{definition}}
We will refer to the set of possible poses as a \emph{pose space} which we will denote $\mathcal{C}$ for consistency with the notion of \emph{configuration space} in robotics literature.

\subsection{Link between the pose space and  \texorpdfstring{$SE(3)$}{SE(3)}}
\label{sec:pose_and_se3}

A pose space is highly related to the group of rigid transformations $SE(3)$. Let us consider a rigid object, and $\mathcal{P}_0 \in \mathcal{C}$ an arbitrary reference pose for this object.

A rigid transformation applied to the object at its reference pose defines a static state of the object, \ie a pose. In a similar way, a pose $\mathcal{P} \in \mathcal{C}$  of the object can be reached through a rigid displacement from the reference pose $\mathcal{P}_0$, and therefore $\mathcal{P}$ can be described completely by the rigid transformation corresponding to this displacement. 
		
We will denote $\mathcal{P} \in \mathcal{C}$ and $\mat{T} = \left( \mat{R}, \mat{t} \right) \in SE(3)$ as a couple of pose and rigid transformation -- with $\mat{R} \in SO(3)$ a rotation matrix and $ \mat{t} \in \mathbb{R}^3$ a translation vector. The transformation considered here is such that each point $\mat{x} \in \mathbb{R}^3$ linked to an object instance at reference pose $\mathcal{P}_0$ is transformed by $\mat{T}$ into the corresponding point $\mat{T}(\mat{x})$ of an instance at pose $\mathcal{P}$ as follows and such as depicted in figure~\ref{fig:corresponding_points}:
\begin{equation}
\label{eq:pose_as_rigid_motion}
\mat{T} (\mat{x}) = \mat{R} \mat{x} + \mat{t}
\end{equation}

However, the rigid transformation corresponding to a given pose is not necessarily unique and therefore the identification of $SE(3)$ with the pose space is in the general case incorrect.	
Objects --- and especially manufactured ones --- may indeed show some proper symmetry properties that make them invariant to some rigid displacements.
	
\subsection{Pose as equivalence class of \texorpdfstring{$SE(3)$}{SE(3)}}
\label{sec:pose_as_equivalency_class_of_se3}
Let $M \subset SE(3)$ be the set of rigid transformations representing the same pose as a rigid transformation $\mat{T}$. 
For the \emph{bunny} object figure~\ref{tab:symmetries_classes}d, $M$ typically consists in the singleton $\lbrace \mat{T} \rbrace$. But $M$ can also contain a continuum of poses in the case of a revolution object such as the \emph{candlestick} figure~\ref{tab:symmetries_classes}a, or even be a discrete set, such as for the \emph{rocket} object depicted figure~\ref{tab:symmetries_classes}e where the same pose can be represented by 3 different transformations.

 By definition of $M$, $G \triangleq \{ \mat{T}^{-1} \circ \mat{M}, \mat{M} \in M \}$ is the set of rigid transformations that have no effect on the static state of the object. This set therefore does not depend on the arbitrary transformation $\mat{T}$ considered. It is moreover a subgroup of $SE(3)$. Indeed, combinations and inversions of such transformations can be applied to the object while leaving it unchanged, and the identity transformation has obviously no effects on the pose of the object. We will refer to the elements of this group as the \emph{proper symmetries} of the object and to $G \subset SE(3)$ as the group of \emph{proper symmetries} of the object.

Given a rigid transformation $\mat{T}$ defining a pose $\mathcal{P}$, we can therefore identify $\mathcal{P}$ to the following equivalence class $[\mat{T}] \subset SE(3)$, consisting in the combination of $\mat{T}$ with any rigid transformation that has no effect on the pose of the object:
\begin{equation}
\label{eq:pose_as_equivalence_class}
\boxed{
\mathcal{P} = [\mat{T}] \triangleq \left\lbrace \mat{T} \circ \mat{G}, \mat{G} \in G \right\rbrace.
}
\end{equation}	

\subsection{The proper symmetry group}
\label{sec:group_of_proper_symmetry}

In the following, we propose a classification of the potential groups of proper symmetries for a physically meaningful bounded object. While models of infinite objects are commonly used \eg for plane detection in 3D scene analysis, we do not consider those in this article as they do not correspond to actual physical objects and the definition of a suitable metric on the pose space of such objects is typically very dependent on the application. This classification will be helpful to derive the practical results associated with our proposed distance.

All proper symmetries of a bounded object necessarily have a common fixed point, thus we can consider the group of proper symmetries as a subgroup of the rotation group $SO(3)$ by choosing such a point as the origin of the object frame.
Subgroups of $SO(3)$ are sometimes referred to as chiral point groups, and have been widely studied, notably in the context of crystallography. The interested reader is referred to \citet{vainsthein1994} for more insight on the theory of symmetry.
	
Ignoring the pathological case of infinite subgroups of $SO(3)$ that are not closed under the usual topology as they do not make sense physically, 
the potential groups of proper symmetries for a bounded object can actually be classified in a few categories.

In the 2D case, a bounded object will either show a circular symmetry -- \ie an invariance by any 2D rotation -- or a cyclic symmetry of order $n \in \mathbb{N}^{*}$ -- \ie an invariance by rotation of $1/n$ turn. The special case $n=1$ actually corresponds to a 2D object without any proper symmetry. Table~\ref{tab:symmetry_classification_2d} provides examples of such objects. 

Similarly, we distinguish  in the 3D case between five classes of proper symmetry groups, synthesized in table \ref{tab:symmetries_classes}. A 3D bounded object can show a spherical symmetry -- \ie an invariance by any rotation -- or a revolution symmetry  -- \ie an invariance by rotation along a given axis of any angle.
This latter class can actually be split into two, depending on whether the object is also invariant under reflection across a plane that is orthogonal to the revolution axis or not. We respectively refer to these classes as revolution symmetry with or without rotoreflection invariance.
In addition, we should also consider finite groups of proper symmetry, but there are an infinite number of them therefore they are considered in a general manner.
We nevertheless distinguish the case of an object without proper symmetry (\ie for which $G$ contains only the identity transformation)
from the other ones, because it is essential in our theoretical developments.
	
	Note that potential indirect symmetries of the object such as reflection symmetries are not accounted for. This is due to the fact that we consider an oriented 3D space -- \eg through the right-hand rule -- in which reflections are not physically feasible through rigid displacements. Revolution symmetry with rotoreflection invariance is nonetheless considered since it is a proper symmetry group: the reflection symmetry can indeed be generated by the introduction of a rotational invariance of 180° along an arbitrary axis orthogonal to the revolution axis.
\begin{table*}
\small\sf\centering
\caption{\label{tab:symmetries_classes}Classification of the potential groups of proper symmetries for a 3D bounded physical object.}
\renewcommand{\arraystretch}{1.3}
\begin{tabular}{c|c|c|c|c}
\noalign{\rule{\linewidth}{1.5pt}}
\multicolumn{3}{c|}{Infinite groups} & \multicolumn{2}{c}{Finite groups} \\
\noalign{\rule{\linewidth}{1pt}}
\multicolumn{2}{c|}{Revolution symmetry} &  &  & \\
\cline{1-2}
\includegraphics[width=0.1\linewidth]{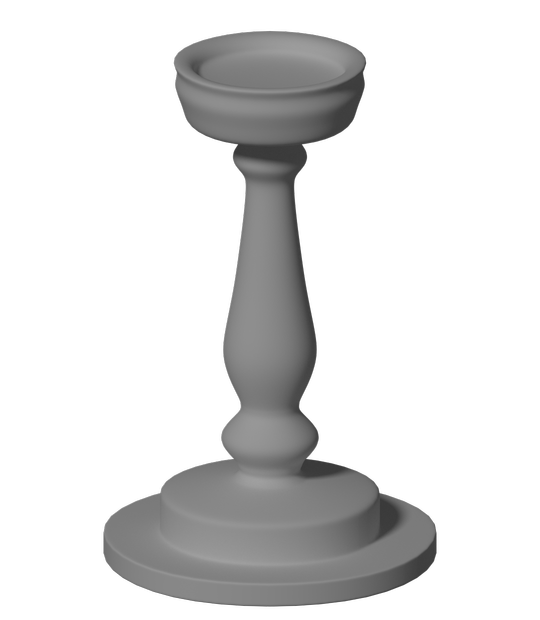} & 
\includegraphics[width=0.1\linewidth]{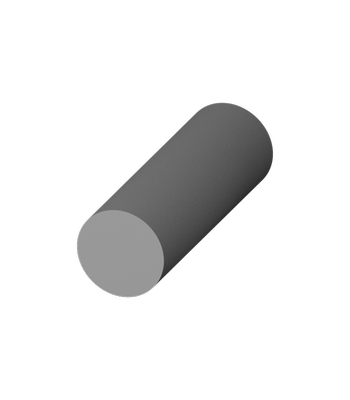} & 
\includegraphics[width=0.1\linewidth]{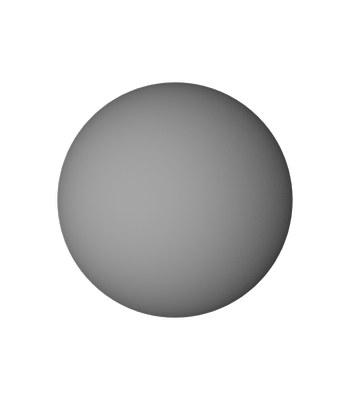}& 
\includegraphics[width=0.1\linewidth]{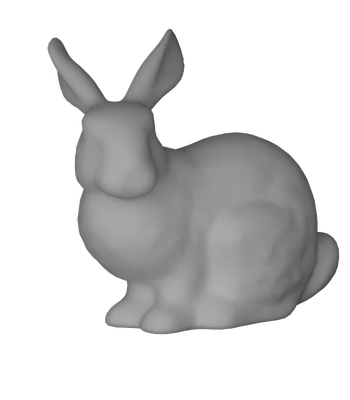} &
\includegraphics[width=0.1\linewidth]{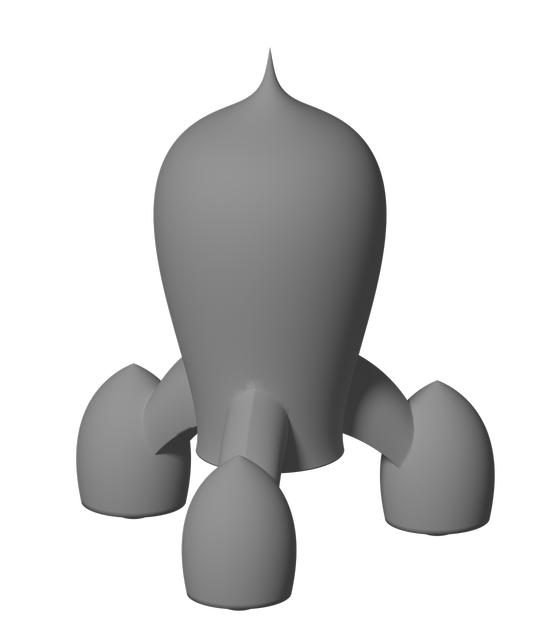} \\
\makecell[c]{(a) Without \\ rotoreflection invariance} & \makecell{(b) With \\ rotoreflection invariance} & \makecell{(c) Spherical symmetry} & \makecell{(d) No proper symmetry} & (e) Finite non trivial \\
\noalign{\rule{\linewidth}{1.5pt}}
\end{tabular}
\end{table*}

\begin{table}
\small\sf\centering
\caption{\label{tab:symmetry_classification_2d}Classification of the potential groups of proper symmetries for a 2D bounded physical object.}
\renewcommand{\arraystretch}{1.3}
\setlength{\extrarowheight}{0pt}
\begin{tabular}{c@{\hskip7pt}c@{\hskip7pt}c} 
\noalign{\hrule height 1.5pt}
\rule{0pt}{1cm}
\begin{tikzpicture}[scale = 0.3] 
\coordinate (o) at (0, 0);		
\coordinate (a) at (-0.8, -0.4);
\coordinate (b) at (0, -0.9);
\coordinate (c) at (0.8, -0.7);
\coordinate (d) at (0.8, 0);
\coordinate (e) at (0.3, 0.5);
\coordinate (f) at (0.3, 1);
\coordinate (g) at (0.9, 1.5);
\coordinate (h) at (0.5, 1.8);
\coordinate (i) at (-0.4, 1);
\coordinate (j) at (-0.7, 0);
\draw plot[smooth cycle, tension=0.5] coordinates {(a) (b) (c) (d) (e) (f) (g) (h) (i)};
\end{tikzpicture}
&
\newcommand{\DrawFlower}[4]{
	\draw [shift={({#1}, {#2})}, rotate = {#3}, #4] plot[smooth cycle, tension=0.5] coordinates {(0, 0) (0.8, -0.2) (1, 0) (0.8, 0.2)};
	\draw [shift={({#1}, {#2})}, rotate = {#3 + 120}] plot[smooth cycle, tension=0.5] coordinates {(0, 0) (0.8, -0.2) (1, 0) (0.8, 0.2)};
	\draw [shift={({#1}, {#2})}, rotate = {#3 - 120}] plot[smooth cycle, tension=0.5] coordinates {(0, 0) (0.8, -0.2) (1, 0) (0.8, 0.2)};
}
\begin{tikzpicture}[scale = 0.5] 
\DrawFlower{0}{0}{30}{};
\end{tikzpicture}
&
\begin{tikzpicture}[scale = 0.3]
\draw (0,0) circle (1.0);
\end{tikzpicture}\\
\makecell{No proper symmetry} & \makecell{Cyclic symmetry \\ (finite non trivial)} & \makecell{Circular symmetry} \\
\noalign{\hrule height 1.5pt}
\end{tabular}
\end{table}

\section{Prior work on metrics over the pose space}
\label{sec:related_work}

We propose in this section a brief review of the recent work on metrics over the pose space of a rigid object. We consider only mathematical distances in our discussion -- \ie symmetric, positive-definite applications from $\mathcal{C} \times \mathcal{C}$ to $\mathbb{R}^{+}$ satisfying triangle inequality.
The existing literature does not take into account potential proper symmetries of the object, and therefore, in this review, the pose space can be identified to the group of rigid transformations $SE(3)$.

\subsection{Objectiveness}
\label{sec:objectiveness}
The identification of the pose space to $SE(3)$ is based on the choice of two arbitrary frames: a frame linked to the object -- to which we will refer to as \emph{object frame} -- and a fixed \emph{inertial frame} such as the object frame coincides with the inertial frame when the object is in the reference pose $\mathcal{P}_0$. For a distance to be well-defined, it should not depend on an arbitrary choice of those frames, a notion that \citet{lin2000} formalize as \emph{objectiveness} or \emph{frame invariance}.

Among possible distances, geodesic distances have focused most interest and have been studied within the framework of Riemannian geometry on the Lie group $SE(3)$. Geo\-desic distances are well-suited for applications dealing with motions as they represent the minimum length of a motion to bring the object from one pose to an other. \citet{park1995} showed that there are no bi-invariant Riemannian metrics on $SE(3)$ -- that is, invariant to any change of inertial frame (left invariance) and of object frame (right invariance). \citet{chirikjian2015} recently studied this question further and showed that while continuous bi-invariant metrics do not exist, there are continuous left-invariant distances that are invariant under right shifts by pure rotations.

\subsection{Hyper-rotation approximation}

Nonetheless, several authors have worked on an ``approximate bi-invariant'' metric \citep{purwar2009} for $SE(3)$ through the mapping of rigid transformations to hyper-rota\-tions of $SO(4)$, and the use of a bi-invariant metric on $SO(4)$. Techniques to perform such mapping have been proposed based on biquaternion representation \citep{etzel1996} and polar decomposition \citep{larochelle2007}. Such transformation unfortunately requires a scaling for the translation part, which has to be set empirically depending on the application \citep{angeles2006}.

\subsection{Decomposition into translation and rotation}

Hopefully, while  inertial frame invariance is necessary for the \emph{objectiveness} of a metric, object frame invariance is not. \citet{lin2000} indeed showed that a distance is objective if and only if it is independent of the choice of inertial frame, and transforms by a right shift in response to a change of object frame. Therefore, a method to define an objective metric consists in defining a left invariant distance considering a given object frame and always using this one, in order to avoid having to transform the distance expression.

For this technique, a frequent approach consists in splitting a pose into a position and an orientation part and to define a distance on $SE(3)$ based on frame invariant metrics on both $\mathbb{R}^3$ and the rotation group $SO(3)$. Those metrics can then be fused in the form of a weighted generalized mean, here written with two strictly positive scaling factors $a$ and $b$ and for an exponent $p \in [1, \infty]$:
\begin{equation}
\label{eq:distance_rotation_translation}
\mydist(\mat{T}_1, \mat{T}_2) = \sqrt[\leftroot{0}\uproot{3}p]{ a \mydistrot(\mat{R}_1, \mat{R}_2)^p + b \mydisttrans(\mat{t}_1, \mat{t}_2)^p }.
\end{equation}

The Euclidean distance is the usual choice for measuring distances between different positions. Considering the usual Riemannian distance over $SO(3)$, a Riemannian distance over $SE(3)$ can be obtained by combining those together into \citep{park1995}:
\begin{equation}
\label{eq:distance_classical_riemannian}
d(\mat{T}_1, \mat{T}_2) = \sqrt{ a \|\log(\mat{R}_1^{-1} \mat{R}_2)\|^2 + b \| \mat{t}_2 - \mat{t}_1 \|^2 }.
\end{equation}
This expression is particularly interesting in that the distance between orientations $\|\log(\mat{R}_1^{-1} \mat{R}_2)\|$ corresponds to the angle $\alpha$ of the relative rotation between the two, which can be evaluated quite easily \eg from the following relations, using matrix or unit quaternion representations and respectively trace or inner product operators:
\begin{equation}
\Tr(\mat{R}_1^{-1} \mat{R}_2) = 2 \cos(\alpha) + 1
\end{equation}
\begin{equation}
1 - \left\langle \mat{q_1} | \mat{q_2} \right\rangle^2 = \cfrac{1}{2} (1 - \cos(\alpha)).
\end{equation}

Without the Riemannian constraint, a large number of inertial-frame-invariant distances can be considered. \citet{gupta1997} notably proposed to consider a Froebenius distance for the rotation part, which also only depends on the angle of the relative rotation between the two poses:
\begin{equation}
\label{eq:distance_froebenius_norm}
\|\mat{R}_2 - \mat{R}_1\|_F = 2 \sqrt{2} |\sin(\alpha/2)|.
\end{equation}
Similar properties are obtained considering the Euclidean distance between representations of antipodal pairs of unit quaternions $\mat{q_1}$ and $\mat{q_2}$:
\begin{equation}
\min \| \mat{q_2} \pm \mat{q_1} \| = 2 |\sin(\alpha/4)|.
\end{equation}

Merging position and orientation distances together requires setting the scaling factors $a$ and $b$. The choice of those factors remains a heuristic issue, and the recent work of~\citet{kendall2015} on camera pose regression using a deep neural network notably showed that this setting may have a great impact on performances.
A reasonable choice in the case of object poses consists in setting the position weight $b$ to  $1$ and the orientation weight $a$ as the square of the maximum radius of the object \citep{digregorio2008} in equation \eqref{eq:distance_classical_riemannian}, assuming an object frame at the center of the object in order to get an upper bound of the displacement of the object's points between two poses.

\subsection{Geometric approaches}
To avoid the need for arbitrary scaling factors, some distances are based only on geometric properties of the object. A particularly interesting possibility is to define a metric based on the distance between corresponding 3D points of instances of the object at these poses, such as depicted figure \ref{fig:corresponding_points}. Let $\mu$ be a density distribution relative to the object and $V=\int \mu(\mat{x}) dv$ its integral over the whole object, we can formulate such a distance the following way considering an $L^p$ norm:
\begin{equation}
\mydist(\mat{T}_1, \mat{T}_2) = \cfrac{1}{V} \left( \int \mu(\mat{x}) \|\mat{T}_2 (\mat{x}) - \mat{T}_1 (\mat{x}) \|^p dv \right)^{\cfrac{1}{p}}
\end{equation}

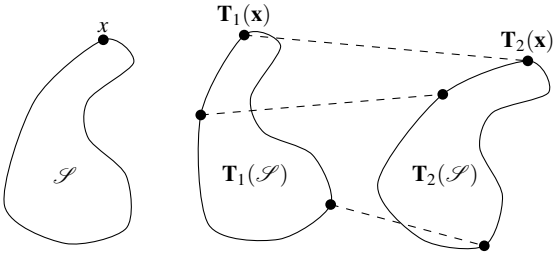
\begin{figure}
\centering	
\begin{tikzpicture}
	\coordinate (o) at (0, 0);		
	\coordinate (a) at (-0.8, -0.4);
	\coordinate (b) at (0, -0.9);
	\coordinate (c) at (0.8, -0.7);
	\coordinate (d) at (0.8, 0);
	\coordinate (e) at (0.3, 0.5);
	\coordinate (f) at (0.3, 1);
	\coordinate (g) at (0.9, 1.5);
	\coordinate (h) at (0.5, 1.8);
	\coordinate (i) at (-0.4, 1);
	\coordinate (j) at (-0.7, 0);
	
	\coordinate (o1) at ([shift={(2.5,0)},rotate=20] o);
	\coordinate (a1) at ([shift={(2.5,0)},rotate=20] a);
	\coordinate (b1) at ([shift={(2.5,0)},rotate=20] b);
	\coordinate (c1) at ([shift={(2.5,0)},rotate=20] c);
	\coordinate (d1) at ([shift={(2.5,0)},rotate=20] d);
	\coordinate (e1) at ([shift={(2.5,0)},rotate=20] e);
	\coordinate (f1) at ([shift={(2.5,0)},rotate=20] f);
	\coordinate (g1) at ([shift={(2.5,0)},rotate=20] g);
	\coordinate (h1) at ([shift={(2.5,0)},rotate=20] h);
	\coordinate (i1) at ([shift={(2.5,0)},rotate=20] i);
	\coordinate (j1) at ([shift={(2.5,0)},rotate=20] j);
	
	\coordinate (o2) at ([shift={(5,0)},rotate=-20] o);
	\coordinate (a2) at ([shift={(5,0)},rotate=-20] a);
	\coordinate (b2) at ([shift={(5,0)},rotate=-20] b);
	\coordinate (c2) at ([shift={(5,0)},rotate=-20] c);
	\coordinate (d2) at ([shift={(5,0)},rotate=-20] d);
	\coordinate (e2) at ([shift={(5,0)},rotate=-20] e);
	\coordinate (f2) at ([shift={(5,0)},rotate=-20] f);
	\coordinate (g2) at ([shift={(5,0)},rotate=-20] g);
	\coordinate (h2) at ([shift={(5,0)},rotate=-20] h);
	\coordinate (i2) at ([shift={(5,0)},rotate=-20] i);
	\coordinate (j2) at ([shift={(5,0)},rotate=-20] j);
	
	\draw plot[smooth cycle, tension=0.5] coordinates {(a) (b) (c) (d) (e) (f) (g) (h) (i)};
    \fill (h) circle[radius=2pt] node[above] {$x$};
    
    \draw plot[smooth cycle, tension=0.5] coordinates {(a1) (b1) (c1) (d1) (e1) (f1) (g1) (h1) (i1)};
    \fill (h1) circle[radius=2pt] node[above] {$\mat{T}_1(\mat{x})$};
    
    \draw plot[smooth cycle, tension=0.5] coordinates {(a2) (b2) (c2) (d2) (e2) (f2) (g2) (h2) (i2)};
    \fill (h2) circle[radius=2pt] node[above] {$\mat{T}_2(\mat{x})$};
	
    \node at (o) {$\mathcal{S}$};
    \node at (o1) {$\mat{T}_1(\mathcal{S})$};
    \node at (o2) {$\mat{T}_2(\mathcal{S})$};
	
    \fill (c1) circle[radius=2pt];
    \fill (c2) circle[radius=2pt];
    \draw[dashed] (c1) -- (c2);

    \fill (i1) circle[radius=2pt];
    \fill (i2) circle[radius=2pt];
    \draw[dashed] (i1) -- (i2);
    
	\draw[dashed] (h1)  -- (h2);
\end{tikzpicture}

\caption{\label{fig:corresponding_points}Representation of corresponding points between instances at different poses of a rigid object without proper symmetries.}
\end{figure}
This expression has a strong physical meaning. It is by construction frame-invariant since its definition does not depend on a particular frame, and takes the shape of the object into account, without the need of arbitrary tuning. \citet{martinez1995} suggest the use of the maximum displacement ($p=\infty$), and \citet{hinterstoisser2012} used the average displacement ($p=1$) for a pose estimation evaluation. For the sake of tractability, those authors suggest to limit consideration to only some vertices of an object model since the integral has to be evaluated explicitly. \citet{kazerounian1992} on the other hand proposed the use of integral of square displacements ($p=2$) on the whole object, and showed that it could be evaluated efficiently given the inertia matrix of the object. \citet{chirikjian1998} later on improved this formulation and extended it for arbitrary affine transformations, showing that the distance could be evaluated as a weighted Froebenius norm.

\citet{zefran1996-a} and \citet{lin2000} independently proposed a Riemannian tensor being linked to the notion of kinetic energy and therefore taking into account the object properties without the need for some arbitrary tuning. Their tensor can be seen as a local equivalent of the distance of \citet{kazerounian1992}. However, to the best of our knowledge there is no known closed-form expression for the resulting geodesic distance in the general case.

\subsection{Local metric}
Various others local parametrization methods exist which, by mapping locally the pose space to a Euclidean space enable to locally define distances. Those parametrizations are \eg based on the representation of orientation with Euler angles, or the local stereographic projection of the pose space, identified to Study's quadric -- an hypersurface embedded in $\mathbb{R}^7$ \citep{eberharter2004}. In-depth discussion of this topic is out of the scope of this article as we are interested in global distances.

\bibliographyInSubfile

\section{Proposed metric}
\label{sec:proposed_metric}

In this section, we propose a distance over the pose space of a 3D rigid bounded object, valid even for symmetric ones. This distance can be considered as an extension of the work of \citet{kazerounian1992} and \citet{chirikjian1998} to arbitrary bounded objects. We also discuss some of its properties.
 
\subsection{Formal definition} 
 
Let $\mathcal{S}$ be the set of points of the object at reference pose $\mathcal{P}_0 \in \mathcal{C}$, and $\mu$ a positive density distribution defined on $\mathcal{S}$. In order to be meaningful, the set of points of the object and its density distribution are assumed to be compatible with the proper symmetry properties of the object and exhibit those symmetry properties. Formally, we assume they verify $\mat{G} (\mathcal{S}) = \mathcal{S}$ and $\mu \circ \mat{G} = \mu$ for any proper symmetry $\mat{G} \in G$.

\myframedminipage{%
\begin{definition}
\label{def:distance}
Let $\mathcal{P}_1, \mathcal{P}_2 \in \mathcal{C}$ be two poses and $\mat{T}_1, \mat{T}_2 \in SE(3)$ two rigid transformations whose equivalence classes are respectively identified to  $\mathcal{P}_1$ and  $\mathcal{P}_2$ given the reference pose, such as defined in \eqref{eq:pose_as_equivalence_class}. We define the distance between $\mathcal{P}_1$ and $\mathcal{P}_2$ as follows~: 
\begin{equation}
\label{eq:distance}
\begin{aligned}
&\mydist(\mathcal{P}_1, \mathcal{P}_2) \triangleq \min_{\mat{G}_1, \mat{G}_2 \in G } \mydistnosym(\mat{T}_1 \circ \mat{G}_1, \mat{T}_2 \circ \mat{G}_2)\text{,} \\
&\text{with }\\
&\mydistnosym(\mat{T}_1,\mat{T}_2) \triangleq  \sqrt{ \cfrac{1}{S} \int_{\mathcal{S}} \mu(\mat{x}) \| \mat{T}_2 (\mat{x}) - \mat{T}_1 (\mat{x})\|^2 ds }\\
&\text{where } S \triangleq \int_\mathcal{S} \mu(\mat{x}) ds.
\end{aligned}
\end{equation}	 
\end{definition}}
This expression is well defined. The minimum in definition \eqref{eq:distance} is reached because of the compactness of the proper symmetry group $G$ -- as a closed subgroup of $SO(3)$ which itself is compact -- and of the continuity of $\mydistnosym$.
Moreover, this definition is by construction independent of the choice of the rigid transformations $\mat{T}_1, \mat{T}_2$ identified to the poses considered.
We verify easily that it satisfies the conditions of a distance definition: $\mydist$ is symmetric, positive-definite, and triangle inequality derives from the triangle inequality satisfied by $\mydistnosym$, which is a direct consequence of Minkowski inequality.
An equivalent formulation of this distance, involving a single minimization over $G$, is introduced in proposition~\ref{prop:distance_as_relative_displacement}.

In typical applications, one is particularly interested in the positioning of the surface of the object. Therefore in our experiments, we consider the surface of the object as set of points  $\mathcal{S}$. The density function $\mu$ can be used to modulate the importance of the positioning of specific areas, but without additional information it is natural to consider an uniform weight $\mu=1$.
	
\subsection{Objectiveness}
\label{subsec:objectiveness}
The proposed distance is independent by construction of the choice of some arbitrary frames as it admits a purely geometric interpretation, a point we discuss in subsection~\ref{sec:geometric_interpretation}.

Definition~\ref{def:distance} makes no assumption on the choice of object frame, and the use of a reference pose  -- \ie an inertial frame -- in our formulation is only here for the sake of writability. Indeed, the Euclidean distance between 3D points is invariant to isometries by definition of these, and in particular to any rigid transformation $\mat{T}_3^{-1} \in SE(3)$:
\begin{equation}
\forall \mat{x}, \mat{y} \in \mathbb{R}^3, \| \mat{x} - \mat{y}\| = \| \mat{T}_3^{-1} (\mat{x}) - \mat{T}_3^{-1} (\mat{y})\|
\end{equation}
Therefore, an arbitrary new reference pose $\mathcal{P}_3$ could be considered without any effect on the metric properties. Denoting $\mat{T}_3$ a rigid transformation identified to $\mathcal{P}_3$ relatively to the old reference pose $\mathcal{P}_0$, we verify the independence of $\mydistnosym$ from the choice of reference pose
\begin{equation}
\mydistnosym(\mat{T}_1, \mat{T}_2) = \mydistnosym(\mat{T}_3^{-1} \mat{T}_1, \mat{T}_3^{-1} \mat{T}_2),
\end{equation}
and hence the independence of the general distance:
\begin{equation}
\mydist([\mat{T}_1], [\mat{T}_2]) = \mydist([\mat{T}_3^{-1} \mat{T}_1], [\mat{T}_3^{-1} \mat{T}_2]).
\end{equation}


\subsection{Geometric interpretation}
\label{sec:geometric_interpretation} 
A picture being worth a thousand words, the reasoning we develop in this section is illustrated in figure \ref{fig:distance_illustration} for the case of a 2D object with a rotation symmetry of $2 \pi / 3$: a flower with three petals.
\begin{figure*}
\centering

\newcommand{\DrawFlower}[4]{
	\draw [shift={({#1}, {#2})}, rotate = {#3}, #4] plot[smooth cycle, tension=0.5] coordinates {(0, 0) (0.8, -0.2) (1, 0) (0.8, 0.2)};
	\draw [shift={({#1}, {#2})}, rotate = {#3 + 120}] plot[smooth cycle, tension=0.5] coordinates {(0, 0) (0.8, -0.2) (1, 0) (0.8, 0.2)};
	\draw [shift={({#1}, {#2})}, rotate = {#3 - 120}] plot[smooth cycle, tension=0.5] coordinates {(0, 0) (0.8, -0.2) (1, 0) (0.8, 0.2)};
}

\newcommand{\setCoordinates}[4]{
	\coordinate (a#4) at ({#1 + cos(#3)}, {#2 + sin(#3)});
	\coordinate (b#4) at ({#1 + cos(#3 + 120)}, {#2 + sin(#3 + 120)});
	\coordinate (c#4) at ({#1 + cos(#3 - 120)}, {#2 + sin(#3 - 120)});
}

\newcommand{\DrawFlowerCouple}[7]{
	\DrawFlower{#1}{#2}{#3}{#7};
	\DrawFlower{#4}{#5}{#6}{#7};
}

\newcommand{\DrawFlowerCoupleWithCorrespondences}[6]{
	
	\DrawFlowerCouple{#1}{#2}{#3}{#4}{#5}{#6}{fill=gray!40}
	
	\setCoordinates{#1}{#2}{#3}{_1};
	\setCoordinates{#4}{#5}{#6}{_2};
	\draw [dashed] (a_1) node {$\bullet$} -- (a_2) node {$\bullet$};
	\draw [dashed] (b_1) node {$\bullet$} -- (b_2) node {$\bullet$};
	\draw [dashed] (c_1) node {$\bullet$} -- (c_2) node {$\bullet$};
}

\newcommand{\DrawFlowerCoupleSimpleWithCorrespondences}[4]{
	\DrawFlowerCoupleWithCorrespondences{#1}{#2}{0 + #3}{#1 + 2}{#2 + 1}{20 + #4}
}
	
\newcommand{\DrawFlowerCoupleSimple}[2]{
	\def\x2{#1 + 2}
	\def\y2{#2 + 1}
	\DrawFlowerCouple{#1}{#2}{0}{\x2}{\y2}{20}{};
	\draw ({#1 - 0.5}, {#2}) node [left] {$\mathcal{P}_1$};
	\draw ({\x2 + 0.5}, {\y2 - 0.4}) node [right] {$\mathcal{P}_2$};	
}

\newcommand{\DrawRectangle}[3]{
	\draw [#3] ({#1 + -1.05}, {#2 + -1.15}) rectangle ({#1 + 3.35}, {#2 + 2.05});
}

\newcommand{\DrawRectangleTypeOne}[2]{
	\DrawRectangle{#1}{#2}{thick, solid, red};
}

\newcommand{\DrawRectangleTypeTwo}[2]{
	\DrawRectangle{#1}{#2}{thick, dashed, ForestGreen};
}

\newcommand{\DrawRectangleTypeThree}[2]{
	\DrawRectangle{#1}{#2}{thick, dotted, RoyalBlue};
}

\newcommand{\DrawParenthesisStart}[1]{
\draw plot[smooth, tension=1, thick] coordinates {(#1, \YParenthesisStart) ({#1 - 0.5}, {(\YParenthesisStart + \YParenthesisEnd)/2}) (#1,\YParenthesisEnd)};
}

\newcommand{\DrawParenthesisEnd}[1]{
\draw plot[smooth, tension=1, thick] coordinates {(#1, \YParenthesisStart) ({#1 + 0.5}, {(\YParenthesisStart + \YParenthesisEnd)/2}) (#1,\YParenthesisEnd)};
}

\begin{tikzpicture}[scale = 0.4]

\def\xoffset{5}
\def\yoffset{-3.5}
\def\YParenthesisStart{2}
\def\YParenthesisEnd{-8}

\DrawFlowerCoupleSimple{-13}{\yoffset};

\draw (-8.8, 0) -- (-8.8, {\yoffset * 2});

\draw (-2, \yoffset) node [left] {$\text{d}(\mathcal{P}_1, \mathcal{P}_2) = \min$};

\DrawParenthesisStart{-1.3}

\foreach \i in {0,...,2}
{
	\foreach \j in {0, ..., 2}
	{
	\DrawFlowerCoupleSimpleWithCorrespondences{\xoffset * \i}{\yoffset * \j}{120 * \i}{120 * \j};
	}
}

\def\xvirguleoffset{3.6}
\foreach \i in {0,...,2}
{
	\foreach \j in {0, ..., 1}
	{
	\draw ({\xoffset * \i + \xvirguleoffset}, {\yoffset * \j}) node {,};
	}
}
\foreach \i in {0, ..., 1}
{
\draw ({\xoffset * \i + \xvirguleoffset}, {\yoffset * 2}) node {,};
}

\foreach \i in {0,...,2}
{
\DrawRectangleTypeOne{\i * \xoffset}{\i  * \yoffset}
}

\DrawRectangleTypeThree{2 * \xoffset}{1 * \yoffset}
\DrawRectangleTypeThree{1 * \xoffset}{0 * \yoffset}
\DrawRectangleTypeThree{0 * \xoffset}{2 * \yoffset}

\DrawRectangleTypeTwo{1 * \xoffset}{2 * \yoffset}
\DrawRectangleTypeTwo{0 * \xoffset}{1 * \yoffset}
\DrawRectangleTypeTwo{2 * \xoffset}{0 * \yoffset}

\DrawParenthesisEnd{2 * \xoffset + 3.75}

\draw (\xoffset + 1, -9) node {(a)};

\draw (15.6, \yoffset) node {$=\min$};

\DrawParenthesisStart{19 - 1.3}

\foreach \j in {0, ..., 2}
{
	\DrawFlowerCoupleSimpleWithCorrespondences{19}{\yoffset * \j}{0}{120 * \j};
}
\DrawRectangleTypeOne{19}{0 * \yoffset}
\DrawRectangleTypeTwo{19}{1 * \yoffset}
\DrawRectangleTypeThree{19}{2 * \yoffset}

\foreach \j in {0, ..., 1}
{
\draw ({19 + \xvirguleoffset}, {\yoffset * \j}) node {,};
}
\DrawParenthesisEnd{19 + 3.75}
\draw (19 + 1, -9) node {(b)};

\end{tikzpicture}

\caption{\label{fig:distance_illustration} Illustration of our proposed distance for a 2D object with a rotation symmetry of $2\pi/3$. (a) The distance between two poses consists in the minimum distance between two poses of an equivalent object without proper symmetry -- here there are 3 possible poses of the equivalent object for each pose of the original object. The distance between poses of an object without proper symmetry corresponds to the RMS distance between corresponding object points (dashed segments). (b) Equivalently, the proposed distance can be considered as a measure of the smallest displacement from one pose to an other -- here there are actually only 3 different displacements between those two poses (solid, dotted and dashed boxes).}
\end{figure*}

As we discussed in section \ref{sec:pose_definition}, a pose $\mathcal{P}_i \in  \mathcal{C}$ can be identified to a set of rigid transformations $\left\lbrace \mat{T}_i \circ \mat{G}, \mat{G} \in G \right\rbrace$. Each of these transformations can themselves be identified with the pose of an object with identical characteristics to the considered one but with no proper symmetries -- to which we will refer to as the equivalent object and which we depict on figure~\ref{fig:distance_illustration} with a grey petal (in order to break the symmetry of the initial object). A pose of the object can therefore be considered as a set of poses of the equivalent object --  3 in our example. 
Points of the equivalent object can be unambiguously put in correspondence between different poses -- correspondences we represent by dashed segments on the figure. Therefore it is legitimate to define a distance between poses of the equivalent object based on the distance between such corresponding points. In this paper, we consider the RMS distance $\mydistnosym$ as it enables efficient computations (see section~\ref{sec:pose_representation}):
\begin{equation}
\mydistnosymsquare(\mat{T}_1,\mat{T}_2) = \cfrac{1}{S} \int_{\mathcal{S}} \mu(\mat{x}) \| \mat{T}_2 (\mat{x}) - \mat{T}_1 (\mat{x})\|^2 ds.
\end{equation}

The proposed distance between two poses of the object can then be defined as the minimum distance between each potential pair of poses for the equivalent object ($3 \times 3$ combinations in our example):
\begin{equation}
\mydist(\mathcal{P}_1, \mathcal{P}_2) = \min_{\mat{G}_1, \mat{G}_2 \in G } \mydistnosym(\mat{T}_1 \circ \mat{G}_1, \mat{T}_2 \circ \mat{G}_2).
\end{equation}

An other and more intuitive interpretation is to consider our distance as a measure of the smallest displacement from one pose to the other. A displacement from a pose $\mathcal{P}_1$ to an other $\mathcal{P}_2$ is a relative transformation from a pose of the equivalent object corresponding to $\mathcal{P}_1$ to a pose of the equi\-valent object corresponding to $\mathcal{P}_2$, and the length of a displacement is measured via $\mydistnosym$. Different pairs of poses of the equivalent object are actually linked by the same displacement -- as can be observed on figure~\ref{fig:distance_illustration} where pairs of poses of the equivalent object being linked by the same transformation are highlighted by identical boxes. All displacements from one pose $\mathcal{P}_1$ to an other $\mathcal{P}_2$ are in fact considered when choosing an arbitrary pose of the equivalent object $\mat{T}_1$ for $\mathcal{P}_1$ and considering rigid transformations from $\mat{T}_1$ to the poses of the equivalent object corresponding to $\mathcal{P}_2$. Thanks to this, the distance between two poses can actually be computed considering the proper symmetries only for one pose:
\myframedminipage{%
\begin{proposition}
\label{prop:distance_as_relative_displacement}
For any poses $\mathcal{P}_1, \mathcal{P}_2 \in \mathcal{C}$ and $\mat{T}_1, \mat{T}_2 \in SE(3)$ two rigid transformations whose equivalency classes are respectively identified to  $\mathcal{P}_1$ and  $\mathcal{P}_2$ given the reference pose,
\begin{equation}
\label{eq:distance_as_relative_displacement}
\mydist(\mathcal{P}_1, \mathcal{P}_2) = \min_{\mat{G} \in G} \mydistnosym(\mat{T}_1, \mat{T}_2 \circ \mat{G})
\end{equation}
\end{proposition}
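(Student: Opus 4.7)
The plan is to reduce the double minimization in Definition~\ref{def:distance} to a single one by exploiting a right-invariance property of $\mydistnosym$ with respect to the action of the proper symmetry group $G$, together with the group structure of $G$.

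\textbf{Step 1 (key lemma).} I would first establish that $\mydistnosym$ is right-invariant under the action of $G$, i.e.\ that for any $\mat{G} \in G$ and any $\mat{T}_1, \mat{T}_2 \in SE(3)$,
\begin{equation*}
\mydistnosym(\mat{T}_1 \circ \mat{G}, \mat{T}_2 \circ \mat{G}) = \mydistnosym(\mat{T}_1, \mat{T}_2).
\end{equation*}
This is where the geometric hypotheses on $\mathcal{S}$ and $\mu$ are used. Starting from the explicit integral defining $\mydistnosym$, I would apply the change of variable $\mat{y} = \mat{G}(\mat{x})$. Three facts then combine to make the integrand and the integration domain collapse back onto the right-hand side: (i) $\mat{G}$ is a rigid transformation, hence an isometry preserving the surface measure $ds$; (ii) $\mat{G}(\mathcal{S}) = \mathcal{S}$ by assumption on the object's symmetries; (iii) $\mu \circ \mat{G} = \mu$ by assumption on the density. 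The Euclidean norm inside the integral is automatically preserved because $\|\mat{T}_i \mat{G}(\mat{x}) - \mat{T}_j \mat{G}(\mat{x})\|$ only depends on the point $\mat{G}(\mat{x})$.

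\textbf{Step 2 (reduction).} Applying Step 1 with $\mat{G} = \mat{G}_1^{-1} \in G$ (using that $G$ is a group, so $\mat{G}_1^{-1} \in G$) yields
\begin{equation*}
\mydistnosym(\mat{T}_1 \circ \mat{G}_1, \mat{T}_2 \circ \mat{G}_2) = \mydistnosym(\mat{T}_1, \mat{T}_2 \circ \mat{G}_2 \circ \mat{G}_1^{-1}).
\end{equation*}
Substituting this into Definition~\ref{def:distance}, the double minimization becomes a minimization over the pair $(\mat{G}_1, \mat{G}_2)$ of the right-hand side.

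\textbf{Step 3 (collapse of the double minimum).} Finally, I would use that $G$ is a group to argue that, for any fixed $\mat{G}_1 \in G$, the map $\mat{G}_2 \mapsto \mat{G}_2 \circ \mat{G}_1^{-1}$ is a bijection of $G$ onto itself. Hence, for each fixed $\mat{G}_1$, the inner minimum over $\mat{G}_2$ already equals $\min_{\mat{G} \in G} \mydistnosym(\mat{T}_1, \mat{T}_2 \circ \mat{G})$, and the outer minimum over $\mat{G}_1$ is superfluous, yielding \eqref{eq:distance_as_relative_displacement}. The existence of the minimum is already guaranteed by the compactness argument stated right after Definition~\ref{def:distance}.

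The only delicate point is Step 1: I expect the main obstacle to be writing the change-of-variable argument rigorously, in particular asserting that $\mat{G}$ preserves the surface measure on $\mathcal{S}$ (which is immediate because $\mat{G}$ is an isometry of $\mathbb{R}^3$ and $\mathcal{S}$ is globally invariant under $\mat{G}$). Once this lemma is stated, the rest of the argument is a purely algebraic manipulation of a minimum over a group.
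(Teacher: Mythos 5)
Your proposal is correct and follows essentially the same route as the paper: the paper performs the change of variables $\mat{x} \leftarrow \mat{G}_1(\mat{x})$ directly inside the integral, invoking $\mat{G}_1(\mathcal{S}) = \mathcal{S}$ and $\mu \circ \mat{G}_1^{-1} = \mu$ to obtain $\mydistnosym(\mat{T}_1 \circ \mat{G}_1, \mat{T}_2 \circ \mat{G}_2) = \mydistnosym(\mat{T}_1, \mat{T}_2 \circ \mat{G}_2 \circ \mat{G}_1^{-1})$, which is exactly your Steps 1--2 merged into one computation, and then collapses the double minimum via the group structure as in your Step 3. Factoring out the right-invariance of $\mydistnosym$ as a standalone lemma is a clean presentational choice but not a different argument.
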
}
This formulation is simpler than the definition~\ref{def:distance}, however it breaks the symmetry of the roles of the two poses.

\begin{proof}
Formally, expression \eqref{eq:distance_as_relative_displacement} can be deduced from the distance definition \eqref{eq:distance} as follows. Given two proper symmetries $\mat{G}_1, \mat{G}_2 \in G$, one can perform the change of variables $x \leftarrow \mat{G}_1 (x)$ and $\mat{G} \leftarrow \mat{G}_2 \circ \mat{G}_1^{-1}$ to write the following equality:
\begin{equation}
\begin{aligned}
&\mydistnosymsquare(\mat{T}_1 \circ \mat{G}_1, \mat{T}_2 \circ \mat{G}_2) \\
&=\cfrac{1}{S} \int_{\mathcal{S}} \mu(\mat{x}) \| \mat{T}_2 \circ \mat{G}_2 (\mat{x}) - \mat{T}_1  \circ \mat{G}_1(\mat{x})\|^2 ds \\
&=\cfrac{1}{S} \int_{\mat{G}_1(\mathcal{S})} \mu(\mat{G}_1^{-1} (\mat{x})) \| \mat{T}_2 \circ \mat{G} (\mat{x}) - \mat{T}_1 (\mat{x})\|^2 ds \\
\end{aligned}
\end{equation}
The symmetry of the object pointset and of its density ensures that $\mat{G}_1(\mathcal{S}) = \mathcal{S}$ and $\mu \circ \mat{G}_1^{-1} = \mu$, leading to the following result from which the conclusion is straightforward: 
\begin{equation}
\label{eq:dist_no_sym_symmetry_invariance}
\begin{aligned}
&\mydistnosymsquare(\mat{T}_1 \circ \mat{G}_1, \mat{T}_2 \circ \mat{G}_2) \\
&=\cfrac{1}{S} \int_{\mathcal{S}} \mu(\mat{x}) \| \mat{T}_2 \circ \mat{G} (\mat{x}) - \mat{T}_1 (\mat{x})\|^2 ds \\
&= \mydistnosymsquare(\mat{T}_1, \mat{T}_2 \circ \mat{G}).
\end{aligned}
\end{equation}
\qed
\end{proof}


\subsection{Rotation anisotropy}
\label{subsec:rotation_anisotropy}
In case of a pure rotational displacement around the center of mass of a non symmetric object, usual metrics are solely dependent on the angle of the relative rotation between the two poses. $\mydistnosym$ on the other hand -- and the proposed distance as well -- accounts for the object's geometry and as such also depends on the considered axis.
More precisely, the distance between two poses linked by such a displacement depends on the angle $\theta$ and on the inertia moment $I_{\mat{k}}$ along the axis $\mat{k}$ of the relative rotation between the two poses, as follows:
\begin{equation}
\begin{aligned}
\label{eq:distance_and_inertia_moment}
& \mydistnosym(\mat{T}_1, \mat{T_2}) = 2 \sqrt{I_{\mat{k}}}  \sin \left( \cfrac{\theta}{2} \right) \\
&\text{where } I_{\mat{k}} = \cfrac{1}{S} \int \mu(\mat{x}) \| \mat{k} \times \mat{x} \|^2 ds.
\end{aligned}
\end{equation}
This result can be easily obtained by injecting Rodrigues' rotation formula in the expression of the proposed distance.
We illustrate this property on figure~\ref{fig:example_rotation_anisotropy} with an object consisting in a model of the Eiffel tower scaled to its actual dimensions, for two couples of poses linked by a smallest displacement consisting of a rotation of 15\textdegree{} around different axes. While the angle of the relative rotation is identical in both cases, displacements of surface points are quite different and we visually tend to consider poses in case~(b) as being farther away one another than in case~(a). Our framework formalizes this intuition, resulting in a distance between the poses in configuration~(b) approximatively $2.1$ times greater than the one between the poses in configuration~(a).

\begin{figure}
\centering
\begin{tikzpicture}
\node[inner sep=0pt] (eiffel) at (0,0)
    {\includegraphics[height=6cm]{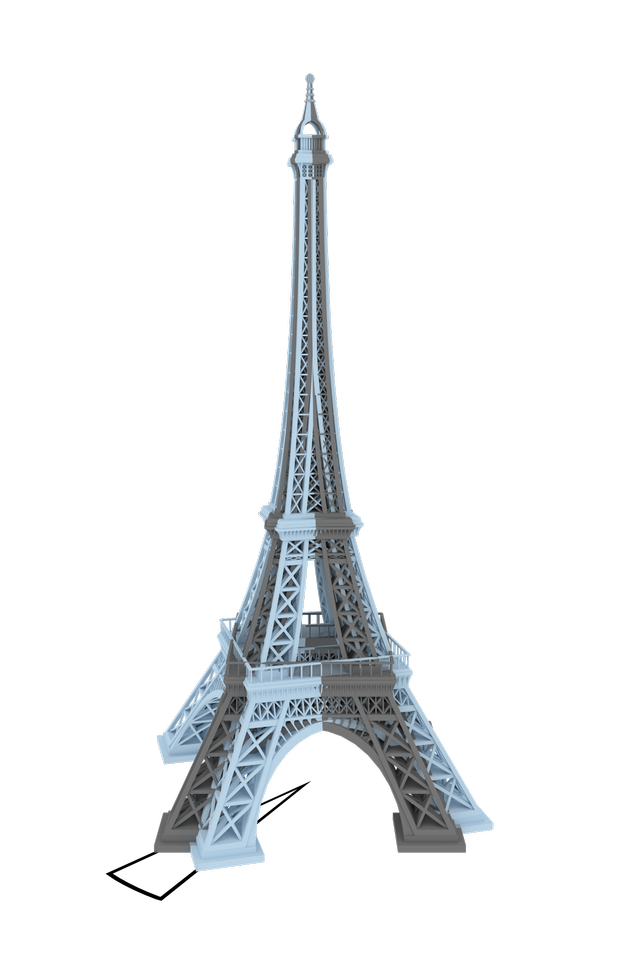}};
\node at (-1.5cm, -2.7cm) {15\textdegree{}};
\node at (0cm, -2.8) {\textbf{(a)} d=9.2m};
\node[inner sep=0pt] (eiffel2) at (4cm,0)
    {\includegraphics[height=6cm]{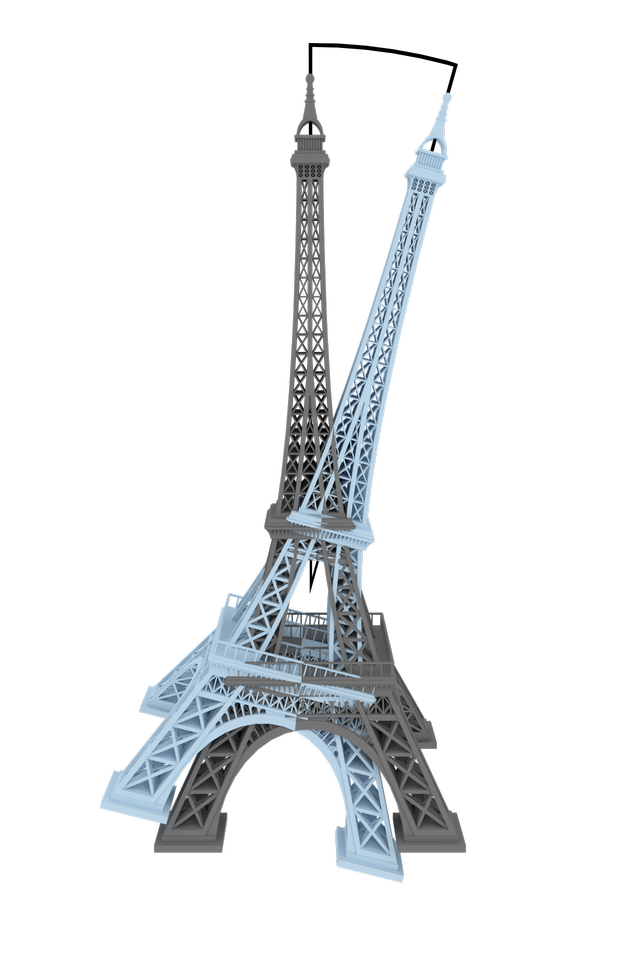}};
\node at (4.5cm, 2.9cm) {15\textdegree{}};
\node at (4cm, -2.8cm) {\textbf{(b)} d=18.9m};
\end{tikzpicture}
\caption{Usual metrics would consider the distances between the two poses in cases (a) and in (b) equal -- as in both cases the two poses are linked by a rotation of 15\textdegree{} around the center of mass of the object. Our distance accounts for the object geometry and discriminates these two configurations.\label{fig:example_rotation_anisotropy}}
\end{figure}

\bibliographyInSubfile

\section{Efficient distance computation}
\label{sec:pose_representation}

The distance definition~\ref{def:distance} and the simpler expression of pro\-position~\ref{prop:distance_as_relative_displacement} are of little practical use for actual applications as they contain a summation term over the set of points of the object and a minimization over its proper symmetry group, both sets being potentially infinite.
 
In this section, we show how our proposed distance can be evaluated efficiently.
To this aim, we propose a representation of a pose $\mathcal{P}$ as a finite set of points $\mathcal{R}(\mathcal{P})$  of a Euclidean space $\mathbb{R}^N$ of at most 12 dimensions, depending on the object's symmetries.  We refer to an element of $\mathcal{R}(\mathcal{P})$ as a \emph{representative} of $\mathcal{P}$, since a representative completely defines a pose (see section \ref{sec:projection}). 

Within this representation framework, the distance between a pair of poses $\mathcal{P}_1, \mathcal{P}_2$ can be expressed as the minimum of Euclidean distance between their respective representatives,
\begin{equation}
\boxed{
\mydist(\mathcal{P}_1, \mathcal{P}_2) = \min_{\mat{p}_1 \in \mathcal{R}(\mathcal{P}_1), \mat{p}_2  \in \mathcal{R}(\mathcal{P}_2)} \|\mat{p}_2 - \mat{p}_1 \|
}
\end{equation}
or equivalently, as the minimum Euclidean distance between a given representative for one pose and the representatives of the other, thanks to a reasoning similar to the one developed in the proof of proposition~\ref{prop:distance_as_relative_displacement}:
\begin{equation}
\label{distance_with_a_given_representative}
\boxed{
\forall \mat{p}_1 \in \mathcal{R}(\mathcal{P}_1), \mydist{}(\mathcal{P}_1, \mathcal{P}_2) = \min_{\mat{p}_2  \in \mathcal{R}(\mathcal{P}_2)} \|\mat{p}_2 - \mat{p}_1 \|.
}
\end{equation}

The cardinal of $\mathcal{R}(\mathcal{P})$ is independent of the pose considered, depending solely from the class of proper symmetries of the considered object. We therefore denote it $|\mathcal{R}(\bullet)|$.
For most classes -- objects with no proper symmetry, revolution objects without rotoreflection invariance and spherical objects -- a pose admits a single representative and in that case we will refer to it as $\mathcal{R}(\mathcal{P})$ by an abuse of notation. $\mathcal{R}$ can be  considered in such case as an isometric embedding of $\mathcal{C}$ into the Euclidean space $\mathbb{R}^N$, and the distance between two poses simply corresponds to the Euclidean distance between their respective representatives.

The expression of pose representatives will be derived later in this section for the different classes of objects, and a synthesis is proposed in table~\ref{tab:representations_synthesis} for 3D objects, and in table \ref{tab:representations_synthesis_2D} for 2D ones.

\begin{table*}
\small\sf\centering
\caption{Proposed representatives for a pose $\mathcal{P} = [(\mat{R} \in SO(3), \mat{t} \in \mathbb{R}^3)]$ of a 3D object depending on its proper symmetries.\label{tab:representations_synthesis}}
\rule{\linewidth}{1.5pt}
\textbf{Assumptions} \\
Center of mass of the object as origin of the object frame. For revolution objects, revolution axis as $\mat{e}_z$ axis of the object frame.\\
\textbf{Notations}\\
$\matLambda \triangleq \left( \cfrac{1}{S} \int_\mathcal{S} \mu(\mat{x}) \mat{x} \mat{x}^\top ds \right)^{1/2}$ \\
and $\lambda \triangleq \sqrt{\lambda_r^2 + \lambda_z^2}$ for revolution objects where $\matLambda = \text{diag}(\lambda_r, \lambda_r, \lambda_z)$.
\renewcommand{\arraystretch}{1.2}
\rule{\linewidth}{0.5pt}
\begin{tabular}{ccc}
\textbf{Proper symmetry class} & \textbf{Proper symmetry group $G$} & \textbf{Pose representatives $\mathcal{R}(\mathcal{P})$} \\
Spherical symmetry & $SO(3)$ & $\mat{t} \in \mathbb{R}^3$ \\
Revolution symmetry without rotoreflection invariance & $\left\lbrace \mat{R}_z^\alpha \middle\vert \alpha \in \mathbb{R} \right\rbrace$ & $ (\lambda (\mat{R} \mat{e}_z)^\top, \mat{t}^\top)^\top \in \mathbb{R}^{6}$ \\
Revolution symmetry with rotoreflection invariance & $\left\lbrace \mat{R}_x^\delta \mat{R}_z^\alpha \middle\vert \delta \in \left\lbrace 0, \pi \right\rbrace, \alpha \in \mathbb{R} \right\rbrace$ & $\left\lbrace (\pm \lambda (\mat{R} \mat{e}_z)^\top, \mat{t}^\top)^\top\right\rbrace \subset \mathbb{R}^{6}$ \\
No proper symmetry & $\left\lbrace \mat{I} \right\rbrace$ & $  (\vect(\mat{R} \matLambda)^\top, \mat{t}^\top)^\top \in \mathbb{R}^{12}$  \\
Finite nontrivial & Finite & $\left\lbrace (\vect(\mat{R} \mat{G} \matLambda)^\top, \mat{t}^\top)^\top | \mat{G} \in G \right\rbrace \subset \mathbb{R}^{12}$\\
\end{tabular}\\
\rule{\linewidth}{1.5pt}
\end{table*}

\begin{table*}
\small\sf\centering
\caption{Proposed representatives for a pose $\mathcal{P} = [(\theta \in \mathbb{R}, \mat{t} \in \mathbb{R}^2)]$ of a 2D object depending on its proper symmetries. \label{tab:representations_synthesis_2D}}
\rule{\linewidth}{1.5pt}
\textbf{Assumptions} \\
Center of mass of the object as origin of the object frame.\\
\textbf{Notations}\\
$\forall \alpha \in \mathbb{R}, e^{i \alpha} = \left( \cos(\alpha), \sin(\alpha) \right)$,
 and $\lambda \triangleq \left( \cfrac{1}{S} \int_\mathcal{S} \mu(\mat{x}) \| \mat{x} \|^2 ds \right)^{1/2}$.
\renewcommand{\arraystretch}{1.2}
\rule{\linewidth}{0.5pt}
\begin{tabular}{ccc}
\textbf{Proper symmetry class} & \textbf{Proper symmetry group $G$} & \textbf{Pose representatives $\mathcal{R}(\mathcal{P})$} \\
Circular symmetry & $SO(2)$ & $\mat{t} \in \mathbb{R}^2$ \\
No proper symmetry & $\left\lbrace \mat{I} \right\rbrace$ & $(\lambda e^{i \theta}, \mat{t}^\top)^\top \in \mathbb{R}^{4}$  \\
Cyclic symmetry (order $n \in \mathbb{N}^*$)& $\left\lbrace \mat{R}^{2 k \pi/n} \middle\vert k \in \llbracket 0, n \llbracket \right\rbrace$ & $\left\lbrace (\lambda e^{i (\theta + 2 k \pi / n)}, \mat{t}^\top)^\top \middle\vert k \in \llbracket 0, n \llbracket \right\rbrace \subset \mathbb{R}^{4}$ \\
\end{tabular}
\rule{\linewidth}{1.5pt}
\end{table*}


\subsection{Neighborhood query}
\label{sec:proximity_query}

The distance formulation \eqref{distance_with_a_given_representative} is of great practical value as it enables to perform efficient \emph{radius search} and exact or approximate \emph{k-nearest neighbors} queries within a large set of poses through the use of any off-the-shelf neighborhood query algorithms designed for Euclidean spaces. Neighborhood queries are useful for numerous problems, and we provide an example in section \ref{sec:application_example} where radius search is heavily used. Existing methods for neighborhood queries enable fast neighborhood retrieval within a set of points of a vector space compared to a brute-force approach consisting in computing the distance to every points of the set. They make use of a specific search structure -- such as a grid or a kD-tree for example -- adapted to the specific properties of the considered metric space. A review of those algorithms is out of the scope of this work, and we will only refer the interested reader to the well-known FLANN library \citep{muja2009} as a starting point.

Let $S$ be a finite set of poses. We consider the pointset $R$ consisting of the aggregation of all representatives of the poses of $S$:
\begin{equation}
R = \bigcup\limits_{\mathcal{P} \in S} \mathcal{R}(\mathcal{P}).
\end{equation}

From \eqref{distance_with_a_given_representative}, given a query pose $\mathcal{Q}$ and one of its representatives $\mat{q} \in \mathcal{R}(\mathcal{Q})$, the poses of $S$ that are closer to $\mathcal{Q}$ than a given distance $\delta$ are the poses that have a representative closer to $\mat{q}$ than $\delta$ using the Euclidean distance.

Such representatives can be retrieved through a standard \emph{radius search} operation around $\mat{q}$ in $R$.
The search for nearest neighbors can be performed in a similar fashion.

One should nevertheless be careful of potential duplicates with those operations, as a pose may have several representatives depending on the proper symmetries of the object. Nonetheless, the absence of duplicates is locally guaranteed around the query point in an open ball of radius $T/2$, 
where $T$ is a constant defined as follows:
\myframedminipage{%
\begin{definition}[Min.\ distance between representatives]
\label{def:definition_T}
We note $T$ the minimum distance between different representatives of the same pose, with the convention $T=+\infty$ if a pose admits a single representative:
\begin{equation}
\forall \mathcal{P} \in \mathcal{C}, \forall \mat{p} \in \mathcal{R}(\mathcal{P}), T \triangleq \min_{\mat{q} \in \mathcal{R}(\mathcal{P}), \mat{q}\neq \mat{p}} \| \mat{q} - \mat{p} \|.
\end{equation}
\end{definition}}
$T$ can be computed considering an arbitrary pose $\mathcal{P}$ because of the invariance of our underlying metric to the choice of a reference pose (see section~\ref{subsec:objectiveness}), and considering an arbitrary representative $\mat{p} \in \mathcal{R}(\mathcal{P})$ because of the symmetry properties of representatives described  in section \ref{sec:representatives_symmetry}.


\subsection{Decomposition into translation and rotation parts}	
\label{sec:pose_decomposition}

From this point on, we consider a direct orthonormal coordinates system $(\mat{O}, \mat{e}_x, \mat{e}_y, \mat{e}_z)$. As in section~\ref{sec:group_of_proper_symmetry} on the proper symmetry group, we assume that the origin $\mat{O}$ of the object frame is an invariant point of the object with respect to its proper symmetries. It is for example chosen at the center of the object for a spherical object, and on the revolution axis for a revolution object. Doing so, the proper symmetry group of the object can be considered as a group of rotations around the origin, and we therefore assimilate proper symmetries to rotation matrices.
We exploit this property to develop the inner term of the expression \eqref{eq:distance} of the square distance into:
\begin{equation}
\label{eq:distance_inner_term_decomposition}
\begin{aligned}
&\|  (\mat{T}_2 \circ \mat{G}_2) (\mat{x}) - (\mat{T}_1 \circ \mat{G}_1) (\mat{x})\|^2 \\
&= \| \mat{R}_2 \mat{G}_2 \mat{x} + \mat{t}_2 - (\mat{R}_1 \mat{G}_1 \mat{x} + \mat{t}_1)\|^2 \\
& 
\begin{aligned}
= \| \mat{R}_2 \mat{G}_2 \mat{x} - \mat{R}_1 \mat{G}_1 \mat{x} \|^2 + \|\mat{t}_2 - \mat{t}_1\|^2 \\
+ 2 (\mat{t}_2 - \mat{t}_1)^\top (\mat{R}_2 \mat{G}_2 - \mat{R}_1 \mat{G}_1) \mat{x}.
\end{aligned}
\end{aligned}
\end{equation}	
	
We add the further constraint that the origin of the object frame is the center of mass of the object's surface, \ie $\int_\mathcal{S} \mu(\mat{x}) \mat{x} ds = \mat{0}$. This constraint is compatible with the previous one because the center of mass is unique, and therefore has to be left unchanged by the proper symmetries of the object.
Thanks to this choice, the last term of \eqref{eq:distance_inner_term_decomposition} disappears during the integration, and the squared distance \eqref{eq:distance} can therefore be decomposed into a translation and a rotation part:
\begin{multline}
\label{eq:distance_decomposition}
\mydist^2(\mathcal{P}_1, \mathcal{P}_2) = \|\mat{t}_2 - \mat{t}_1\|^2 \\
+ \underbrace{\min_{\mat{G}_1, \mat{G}_2  \in G } \cfrac{1}{S} \int_{\mathcal{S}} \mu(\mat{x}) \| \mat{R}_2 \mat{G}_2 \mat{x} - \mat{R}_1 \mat{G}_1 \mat{x} \|^2 ds}_{\mydistrotsquare(\mat{R}_1, \mat{R}_2)}.
\end{multline}	
		
In the following subsections, we show how this rotation part can be simplified, and how it leads to the notion of pose representatives.

	
\subsection{Object with no proper symmetries}
\label{sec:no_invariance_object}

Let us consider the case of an object showing no proper symmetries. The proper symmetry group of this object is reduced to the identity rotation: $G=\left\lbrace \mat{I} \right\rbrace$, therefore the rotation part of the square distance \eqref{eq:distance_decomposition} can be expressed as follows:
\begin{equation}
\label{eq:rotation_distance_no_invariance}
\mydistrotsquare(\mathcal{P}_1, \mathcal{P}_2) = \int_{\mathcal{S}} \mu(\mat{x}) \| \mat{R}_2 \mat{x} - \mat{R}_1 \mat{x} \|^2 ds
\end{equation}

Let us show how, for any given pose $\mathcal{P} \in \mathcal{C}$, one can define a representative $\mathcal{R}(\mathcal{P}) \in  \mathbb{R}^{12}$ such that the distance between two poses in $\mathcal{C}$ is the same as the distance between their representatives in $\mathbb{R}^{12}$. Our approach is inspired by the work of \citet{kazerounian1992} and \citet{chirikjian1998}.

Let $\matLambda$ be the symmetric positive semi-definite square root matrix of the covariance matrix of the object's weighted surface:
\begin{equation}
\label{eq:lambda_definition}
\boxed{
\matLambda \triangleq \left( \cfrac{1}{S} \int_\mathcal{S} \mu(\mat{x}) \mat{x} \mat{x}^\top ds \right)^{1/2}.
}
\end{equation}
$\matLambda$ does not depend on the considered pose and can therefore be estimated once and for all for a given rigid object in a preprocessing step. We provide in appendix~\ref{sec:triangular_mesh} formulas to compute $\matLambda$ when $\mathcal{S}$ is the surface of a triangular mesh. 

Rewriting the inner part of \eqref{eq:rotation_distance_no_invariance} with a trace operator,
$$\| \mat{R}_2 \mat{x} - \mat{R}_1 \mat{x} \|^2 = \Tr \left( (\mat{R}_2 - \mat{R}_1 ) \mat{x} \mat{x}^\top (\mat{R}_2 - \mat{R}_1)^\top \right),$$
one can express the rotation part of the squared distance in a closed form as a weighted Frobenius square distance between the rotation matrices:
\begin{equation}
\begin{aligned}
\label{eq:distance_rotation_frobenius_norm}
\mydistrotsquare(\mathcal{P}_1, \mathcal{P}_2) &= \Tr \left( (\mat{R}_2 - \mat{R}_1 ) \matLambda^2 (\mat{R}_2 - \mat{R}_1)^\top \right) \\
&= \| \mat{R}_2 \matLambda - \mat{R}_1 \matLambda \|_F^2.
\end{aligned}
\end{equation}

Therefore, denoting $\vect$ the operator vectorizing columnwise a matrix into a column vector, we can define an isometry $\mathcal{R}$ from the pose space into the 12-dimensional Euclidian space, such as
\begin{equation}
\label{eq:distance_no_rotational_invariance_representation}
\boxed
{
\begin{aligned}
&\text{Object without proper symmetry:} \\
&\begin{aligned}
\mydist^2(\mathcal{P}_1, \mathcal{P}_2) &= \| \mat{R}_2 \matLambda - \mat{R}_1 \matLambda \|_F^2 + \| \mat{t}_2 - \mat{t}_1\|^2 \\
&= \| \mathcal{R}(\mathcal{P}_2) - \mathcal{R}(\mathcal{P}_1) \|^2
\end{aligned} \\
&\text{with } \mathcal{R}(\mathcal{P}) \triangleq \left( \vect(\mat{R} \matLambda)^\top, \mat{t}^\top \right)^\top \in \mathbb{R}^{12}.\\
\end{aligned}
}
\end{equation}

The conversion from a pose represented in term of a rotation matrix $\mat{R}$ and a translation vector $\mat{t}$ to its representative in $\mathbb{R}^{12}$ is direct, since it consists in a simple linear operation.
If the object frame is moreover chosen aligned with the principal axes of the object, $\matLambda$ is diagonal, making the computation of the pose representative even cheaper.


\subsection{Revolution object without rotoreflection invariance}
\label{sec:revolution_object}

We now consider the case of a revolution object without rotoreflection invariance. As stated in section \ref{sec:pose_decomposition}, we assume that the origin of the object frame corresponds to the center of mass of the object. Without loss of generality, we moreover assume that the axis $\mat{e}_z$ of the object frame is aligned with the revolution axis. A pose $\mathcal{P}$ is thus defined up to a rotation $\mat{R}_z^\phi$ along the $\mat{e}_z$ axis, where $\phi$ is the angle of the considered rotation, and the proper symmetry group of the object consists in $G = \left\lbrace \mat{R}_z^\phi \middle\vert \phi \in \mathbb{R} \right\rbrace$.

The simplification to get rid of the integral within the distance expression thanks to the introduction of the matrix $\matLambda$ in section \ref{sec:no_invariance_object} is also valid here. Moreover, because $(\mat{O}, \mat{e_z})$ is the revolution axis of the object, $\matLambda$ is necessarily diagonal and of the form
\begin{equation}
\label{eq:lambda_expression_revolution}
\matLambda = \left( \begin{matrix}
\lambda_r & 0 & 0 \\
0 & \lambda_r & 0 \\
0 & 0 & \lambda_z \\
\end{matrix}
\right)
\end{equation}
with $\lambda_r, \lambda_z \in \mathbb{R}^+$.
This enables us to express the rotation part of the distance as a simple scaled distance between the revolution axes seen as 3D vectors:
\begin{equation}
\mydistrotsquare(\mathcal{P}_1, \mathcal{P}_2) = \lambda^2 \| \mat{R}_2 \mat{e}_z - \mat{R}_1 \mat{e}_z \|^2
\end{equation}
with $\lambda \triangleq \sqrt{\lambda_r^2 + \lambda_z^2}$. The reader is referred to appendix~\ref{app:proof_distance_simplification_revolution_object} for a proof of this result.

Therefore, similarly to what we proposed for an object without proper symmetry, we can consider a simple isometry  $\mathcal{R}$ which associates to a pose of a revolution object without rotoreflection invariance, a 6D vector, consisting of the concatenation of the coordinates of its scaled revolution axis and of its position, in order to efficiently evaluate distances:
\begin{equation}
\label{eq:distance_revolution_object}
\boxed
{
\begin{aligned}
&\text{Revolution object without rotoreflection invariance:} \\
&\begin{aligned}
\mydist^2(\mathcal{P}_1, \mathcal{P}_2) &= \|\mat{t}_2 - \mat{t}_1 \|^2 + \lambda^2 \|\mat{R}_2 \mat{e}_z - \mat{R}_1 \mat{e}_z \|^2 \\
&= \|\mathcal{R}(\mathcal{P}_2) - \mathcal{R}(\mathcal{P}_1) \|^2 \\
\end{aligned} \\
&\text{with } \mathcal{R}(\mathcal{P}) \triangleq \left( \lambda (\mat{R} \mat{e}_z)^\top, \mat{t}^\top \right)^\top \in \mathbb{R}^{6} \\
&\text{where } \lambda = \sqrt{\lambda_r^2 + \lambda_z^2}.
\end{aligned}
}
\end{equation}

	
\subsection{Spherical object}
\label{sec:sec:spherical_object}
We now consider the simpler case of an object with spherical symmetry. Choosing the center of the object as origin of the object frame, the proper symmetry group of the object is the whole rotation group $SO(3)$. The rotation part of the distance \eqref{eq:distance_decomposition} can thus be rewritten as follows:
\begin{equation}
\mydistrotsquare(\mathcal{P}_1, \mathcal{P}_2) = \min_{\mat{R}_1, \mat{R}_2} \left( \cfrac{1}{S} \int_{\mathcal{S}} \mu(\mat{x}) \| \mat{R}_2 \mat{x} - \mat{R}_1  \mat{x} \|^2 ds \right).
\end{equation}
This term is null, the minimum being reached for $\mat{R}_1 = \mat{R}_2$. Therefore, the pose space of a spherical object can be also isometrically embedded into a $\mathbb{R}^3$ by representing a pose by the position of its center:
\begin{equation}
\boxed
{
\begin{aligned}
&\text{Spherical object:} \\
&\begin{aligned}
\mydist^2(\mathcal{P}_1, \mathcal{P}_2) &= \| \mat{t}_2 - \mat{t}_1 \|^2 \\
&= \| \mathcal{R}(\mathcal{P}_2) - \mathcal{R}(\mathcal{P}_1) \|^2
\end{aligned} \\
&\text{With } \mathcal{R}(\mathcal{P}) = \mat{t} \in \mathbb{R}^3.
\end{aligned} 
}
\end{equation}


\subsection{Revolution object with rotoreflection invariance}
\label{sec:revolution_object_with_rotoreflection_invariance}
Let us consider the case of a revolution object with rotoreflection invariance, \ie having a reflection symmetry with respect to a plane orthogonal to the revolution axis.
With the same constraints on the choice of the object frame as for a revolution object without rotoreflection invariance, the proper symmetry group of such object can be written as follows:
\begin{equation}
G = \left\lbrace \mat{R}_x^\delta \mat{R}_z^\alpha \;|\; \alpha \in \mathbb{R}, \delta \in \left\lbrace 0, \pi \right\rbrace \right\rbrace.
\end{equation}

Therefore, the distance between two poses $\mathcal{P}_1, \mathcal{P}_2$ can be expressed as:
\begin{equation}
\min_{\delta_1, \delta_2, \phi_1, \phi_2} \mydistnosym\left((\mat{R}_1 \mat{R}_x^{\delta_1} \mat{R}_z^{\phi_1}, \mat{t}_1), (\mat{R}_2 \mat{R}_x^{\delta_2} \mat{R}_z^{\phi_2}, \mat{t}_2)\right).
\end{equation}

We discussed in section \ref{sec:revolution_object} how to compute such an expression relatively to the symmetries along the revolution axis. Therefore using result \eqref{eq:distance_revolution_object}, our distance can be rewritten as the minimum Euclidean distance between 6D points, two being assigned to each pose:
\begin{equation}
\mydist(\mathcal{P}_1, \mathcal{P}_2) = \min_{\delta_1, \delta_2 \in \left\lbrace 0, \pi \right\rbrace} \|\mat{p}_2^{\delta_2}  - \mat{p}_1^{\delta_1}\|
\end{equation}
with  $\mat{p}_i^\delta = \left( \lambda (\mat{R}_i \mat{R}_x^{\delta} \mat{e}_z)^\top, \mat{t}^\top \right)^\top \in \mathbb{R}^6$ the representatives of pose $\mathcal{P}_i$, for $\delta=0, \pi$ and $i=1,2$.

Simplifying the representative expression a little given that $\mat{R}_x^{0} \mat{e}_z = \mat{e}_z$ and $\mat{R}_x^{\pi} \mat{e}_z = -\mat{e}_z$, we see that a pose of a revolution object with rotoreflection invariance can be represented by two 6D vectors consisting of the concatenation of the coordinates of its scaled revolution axis and of its position, each potential orientation of the axis being taken into account by one representative:
\begin{equation}
\boxed{
\begin{aligned}
&\text{Revolution object with rotoreflection invariance:} \\
&\mydist(\mathcal{P}_1, \mathcal{P}_2) = \min_{\mat{p}_1 \in \mathcal{R}(\mathcal{P}_1), \mat{p}_2 \in \mathcal{R}(\mathcal{P}_2)} \| \mat{p}_2 - \mat{p}_1 \| \\
&\text{With } \mathcal{R}(\mathcal{P}) \triangleq \left\lbrace \left( \pm \lambda (\mat{R} \mat{e}_z)^\top, \mat{t}^\top \right)^\top \right\rbrace \subset \mathbb{R}^6. \\
\end{aligned}
}
\end{equation}


\subsection{Object with a nontrivial finite proper symmetry group}

The last type of 3D object to deal with is the case of an object with a finite proper symmetry group $G$ different from the identity, such as the object depicted in table~\ref{tab:symmetries_classes}e.
The proposed distance between two poses of such an object can be written as:
\begin{equation}
\min_{\mat{G}_1, \mat{G}_2 \in G} \mydistnosym((\mat{R}_1 \mat{G}_1, \mat{t}_1), (\mat{R}_2 \mat{G}_2, \mat{t}_2)).
\end{equation}

We showed in section \ref{sec:no_invariance_object} that the pose of an object without proper symmetry can be represented as a 12D point, such that the distance between two poses of such object corresponds to the Euclidean distance between their respective representatives.
Therefore, it is straightforward to conclude that the pose of an object with a finite proper symmetry group can be represented by a finite set of 12D representative points, such that the distance between two poses corresponds to the minimum Euclidean distance between their respective representatives:
\begin{equation}
\label{eq:distance_finite_proper_symmetry_group}
\boxed
{
\begin{aligned}
&\text{Object with a nontrivial finite proper symmetry group:} \\
&\mydist(\mathcal{P}_1, \mathcal{P}_2) = \min_{\mat{p}_1 \in \mathcal{R}(\mathcal{P}_1), \mat{p}_2 \in \mathcal{R}(\mathcal{P}_2)} \| \mat{p}_2 - \mat{p}_1 \| \\
&\text{With } \mathcal{R}(\mathcal{P}) \triangleq \left\lbrace \left( \vect(\mat{R} \mat{G} \matLambda)^\top, \mat{t}^\top \right)^\top \middle\vert \mat{G} \in G \right\rbrace \subset \mathbb{R}^{12}. \\
\end{aligned}
}
\end{equation}

\subsection{2D object}
The notion of pose representative can be applied to 2D objects as well. For the sake of conciseness, we will only discuss the case of a 2D object with no proper symmetry, as the reasoning is very similar to the one performed for 3D objects. The full list of proposed representatives is given in table \ref{tab:representations_synthesis_2D}.

The decomposition of the square distance between two poses in a translation and rotation terms \eqref{eq:distance_decomposition} and the expression of the rotation part as a Frobenius norm \eqref{eq:distance_rotation_frobenius_norm} are still valid in the 2D case, but they can be even further simplified. 
Indeed, a 2D rotation matrix can be parametrized by an angle $\theta$ as follows: 
\begin{equation}
\mat{R}^\theta = \left( \begin{matrix}
\cos(\theta) & -\sin(\theta) \\
\sin(\theta) & \cos(\theta) \\
\end{matrix} \right).
\end{equation}
Introducing the elements of the covariance matrix
\begin{equation}
\matLambda^2 = \left( \begin{matrix}
\lambda_{xx}^2 & \lambda_{xy}^2 \\
\lambda_{xy}^2 & \lambda_{yy}^2 \\
\end{matrix} \right),
\end{equation}
the rotation part can be simplified into
\begin{equation}
\begin{aligned}
\mydistrotsquare(\mathcal{P}_1, \mathcal{P}_2) &= \Tr \left( (\mat{R}^{\theta_2} - \mat{R}^{\theta_1} ) \matLambda^2 (\mat{R}^{\theta_2} - \mat{R}^{\theta_1} )^\top \right) \\
&= (\lambda_{xx}^2 + \lambda_{yy}^2) \| e^{i \theta_2} - e^{i \theta_1} \|^2
\end{aligned}
\end{equation}
where $e^{i \theta} \triangleq (\cos(\theta), \sin(\theta))$. Therefore, we can include in our framework a 2D object without proper symmetry, and represent a pose of such object by a 4D vector, consisting of the concatenation of the coordinates of its scaled complex orientation and of its position:
\begin{equation}
\boxed{
\begin{aligned}
&\text{2D object without proper symmetry:} \\
&\mydist(\mathcal{P}_1, \mathcal{P}_2) = \min_{\mat{p}_1 \in \mathcal{R}(\mathcal{P}_1), \mat{p}_2 \in \mathcal{R}(\mathcal{P}_2)} \| \mat{p}_2 - \mat{p}_1 \| \\
&\text{with } \mathcal{R}(\mathcal{P}) \triangleq  \left( \lambda e^{i \theta}, \mat{t}^\top \right)^\top \in \mathbb{R}^4 \\
& \text{where } \lambda = \left( \cfrac{1}{S} \int_\mathcal{S} \mu(\mat{x}) \| \mat{x} \|^2 ds \right)^{1/2}.\\
\end{aligned}
}
\end{equation}

\bibliographyInSubfile

\section{Symmetry within representatives}
\label{sec:representatives_symmetry}

Objects with finite non trivial symmetry groups and revolution objects with rotoreflection invariance admit several representatives per pose. This multiplicity of representatives expresses the proper symmetries of the object that are not accounted for in the expression of a representative, and leads to some symmetry properties within the set of representatives itself.
Formally, for a given object, we define a finite group of symmetry operations $G_\mathcal{R}$ on the ambient space $\mathbb{R}^N$. This group consists in the identity singleton in the case of objects admitting a single representative per pose, and is defined in table~\ref{tab:symmetries_within_representatives} for the other objects classes.
In this section we discuss some properties of this group. Those will be used in order to propose a method to properly average poses, in section~\ref{sec:averaging_multiple_representatives}.
\begin{table*}
\small\sf\centering
\caption{\label{tab:symmetries_within_representatives} Proposed symmetry operations on the ambient space for objects with multiple representatives per pose.}
\rule{\linewidth}{1.5pt}
\textbf{Notations} \\
We decompose a point of the ambient space $\mathbb{R}^N$ into two parts as follows, depending on the dimension $N$ of the space: \\
\begin{varwidth}{\textwidth}
\begin{itemize}
\item $(\vect(\mat{M})^\top, \mat{t}^\top)^\top$ for a 12D space, with $\mat{M} \in \mathcal{M}_{3,3}(\mathbb{R})$, and $\mat{t} \in \mathbb{R}^3$.
\item $(\mat{a}^\top, \mat{t}^\top)^\top$ for a 6D space, with $\mat{a}, \mat{t} \in \mathbb{R}^3$.
\item $(\mat{a}^\top, \mat{t}^\top)^\top$ for a 4D space, with $\mat{a}, \mat{t} \in \mathbb{R}^2$. $\mat{a}$.  
\end{itemize}
\end{varwidth} \\
In the 4D case, we use the complex multiplication notation, assimilating $\mat{a}$ to a complex number.\\
\renewcommand{\arraystretch}{2.5}
\rule{\linewidth}{0.5pt}
\begin{tabular}{ccccc}
\textbf{Object type} & \textbf{Proper symmetry class} & \textbf{Symmetry group $G_\mathcal{R}$}  & \textbf{Symmetry definition} \\
\hline
\multirow{2}{*}{3D}
& Finite
& $\left\lbrace s_{\mat{G} } \middle\vert \mat{G} \in G \right\rbrace$ 
& $\begin{aligned}
s_{\mat{G}} : \mathbb{R}^{12} &\to \mathbb{R}^{12} \\
(\vect(\mat{M})^\top, \mat{t}^\top)^\top  &\mapsto (\vect(\mat{M} \mat{G})^\top, \mat{t}^\top)^\top\end{aligned}$
 \\
\cline{2-4}
& Revolution with rotoreflection invariance
& $\left\lbrace s_{\text{rev}, \delta} \middle\vert \delta = \pm 1 \right\rbrace$
& $\begin{aligned}
s_{\text{rev}, \delta} : \mathbb{R}^{6} &\to \mathbb{R}^{6} \\
(\mat{a}^\top, \mat{t}^\top)^\top &\mapsto (\delta \mat{a}^\top, \mat{t}^\top)^\top
\end{aligned}$ \\
\hline
2D & Cyclic (order $n \in \mathbb{N}^*$)
& $\left\lbrace s_{\text{2D}, n, k} \middle\vert k \in \llbracket 0, n \llbracket \right\rbrace$
& $\begin{aligned}
s_{\text{2D}, n, k} : \mathbb{R}^{4} &\to \mathbb{R}^{4} \\
(\mat{a}^\top, \mat{t}^\top)^\top &\mapsto (e^{i 2 k \pi / n} \cdot \mat{a}, \mat{t}^\top)^\top.
\end{aligned}$ \\
\end{tabular}
\rule{\linewidth}{1.5pt}
\end{table*}

First, we ensure that the proposed group is well defined:
\begin{proposition}
$G_\mathcal{R}$ is a group for the composition operation.
\end{proposition}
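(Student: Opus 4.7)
The statement reduces to a routine verification for each of the three cases tabulated in table~\ref{tab:symmetries_within_representatives}. Associativity comes for free, since $G_\mathcal{R}$ is a set of maps from $\mathbb{R}^N$ to $\mathbb{R}^N$ under functional composition, so the plan is to exhibit in each case (i) closure under composition, (ii) an identity element, and (iii) an inverse for each element. The cleanest route is to observe in each case that the indexing map (from $G$, from $\{-1,+1\}$, or from $\mathbb{Z}/n\mathbb{Z}$) into $G_\mathcal{R}$ is a group (anti-)homomorphism whose image is exactly $G_\mathcal{R}$; closure, identity and inverses in $G_\mathcal{R}$ are then inherited from the source group.

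For the 3D finite case, I would pick arbitrary $\mathbf{G}_1, \mathbf{G}_2 \in G$ and compute directly
\begin{equation*}
(s_{\mathbf{G}_1} \circ s_{\mathbf{G}_2})\bigl((\vect(\mathbf{M})^\top, \mathbf{t}^\top)^\top\bigr) = (\vect(\mathbf{M} \mathbf{G}_2 \mathbf{G}_1)^\top, \mathbf{t}^\top)^\top = s_{\mathbf{G}_2 \mathbf{G}_1}\bigl((\vect(\mathbf{M})^\top, \mathbf{t}^\top)^\top\bigr),
\end{equation*}
so $s_{\mathbf{G}_1} \circ s_{\mathbf{G}_2} = s_{\mathbf{G}_2 \mathbf{G}_1}$. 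Since $G$ is a group (subgroup of $SO(3)$, discussed in section~\ref{sec:group_of_proper_symmetry}), $\mathbf{G}_2 \mathbf{G}_1 \in G$, giving closure. The identity $s_{\mathbf{I}}$ clearly acts as the neutral element, and $s_{\mathbf{G}^{-1}}$ is the inverse of $s_{\mathbf{G}}$ by the same computation applied to $\mathbf{G}$ and $\mathbf{G}^{-1}$. The map $\mathbf{G} \mapsto s_{\mathbf{G}}$ is thus an anti-homomorphism from $G$, and $G_\mathcal{R}$ is its image.

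For the revolution-with-rotoreflection case, the analogous computation gives $s_{\text{rev}, \delta_1} \circ s_{\text{rev}, \delta_2} = s_{\text{rev}, \delta_1 \delta_2}$, so $\delta \mapsto s_{\text{rev}, \delta}$ is a homomorphism from $(\{-1,+1\}, \cdot)$, which is a group; closure, identity $s_{\text{rev},+1}$ and inverses $s_{\text{rev},\delta}^{-1} = s_{\text{rev},\delta}$ follow. Similarly, for the 2D cyclic case of order $n$, using $e^{i 2 k_1 \pi/n} \cdot e^{i 2 k_2 \pi/n} = e^{i 2 (k_1 + k_2) \pi/n}$ yields $s_{\text{2D},n,k_1} \circ s_{\text{2D},n,k_2} = s_{\text{2D},n,(k_1+k_2) \bmod n}$, so $k \mapsto s_{\text{2D},n,k}$ is a homomorphism from $(\mathbb{Z}/n\mathbb{Z}, +)$, providing closure, the identity $s_{\text{2D},n,0}$, and the inverse $s_{\text{2D},n,(-k) \bmod n}$.

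There is no real obstacle here beyond carefully keeping track of the three cases and of the order of multiplication in the 3D-finite case (where one obtains an anti-homomorphism rather than a homomorphism, but this does not affect the group structure of the image). The whole argument could be written in under half a page by spelling out the three short compositions above and invoking associativity of function composition.
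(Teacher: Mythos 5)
Your proof is correct and follows essentially the same route as the paper, which simply notes that the group structure of $G_\mathcal{R}$ is inherited from the group properties of $G$, $\lbrace 1, -1 \rbrace$ and $\lbrace e^{i 2 k \pi / n} \rbrace$ under multiplication; you merely spell out the inheritance explicitly via the indexing (anti-)homomorphisms. The observation that the 3D finite case gives an anti-homomorphism ($s_{\mat{G}_1} \circ s_{\mat{G}_2} = s_{\mat{G}_2 \mat{G}_1}$) is a correct and slightly more careful detail than the paper records, but it changes nothing.
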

\begin{proof} 
This property derives directly from the group properties of $\mat{G}$, $\lbrace 1, -1 \rbrace$ and $\lbrace e^{i 2 k \pi / n} \vert k \in \llbracket 0, n \rrbracket \rbrace$ for multiplication operations.  
\qed
\end{proof}


Then, we introduce the following lemma, which somehow expresses the fact that the geometry of the object is consistent with the object's symmetries:
\begin{lemma}
\label{lem:commutation_g_lambda}
For any proper symmetry $\mat{G} \in G$, $\mat{G}$ and $\matLambda$ commute, \ie $\mat{G} \matLambda = \matLambda \mat{G}$.
\end{lemma}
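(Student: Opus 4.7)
The plan is to first establish the commutation at the level of the covariance matrix $\matLambda^2$, then bootstrap up to its positive semidefinite square root $\matLambda$.

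First I would exploit the hypothesis that the object's pointset and density are compatible with its proper symmetries, namely $\mat{G}(\mathcal{S}) = \mathcal{S}$ and $\mu \circ \mat{G} = \mu$ for every $\mat{G} \in G \subset SO(3)$. Performing the change of variables $\mat{y} = \mat{G}^{-1} \mat{x}$ (equivalently $\mat{x} = \mat{G}\mat{y}$) in the defining integral
\begin{equation*}
\matLambda^2 = \cfrac{1}{S} \int_\mathcal{S} \mu(\mat{x}) \mat{x} \mat{x}^\top ds,
\end{equation*}
and using the fact that $\mat{G}$ is an isometry (so the surface measure is preserved) together with $\mat{G}(\mathcal{S}) = \mathcal{S}$ and $\mu(\mat{G}\mat{y}) = \mu(\mat{y})$, I would get
\begin{equation*}
\matLambda^2 = \cfrac{1}{S} \int_\mathcal{S} \mu(\mat{y}) (\mat{G}\mat{y})(\mat{G}\mat{y})^\top ds = \mat{G} \matLambda^2 \mat{G}^\top.
\end{equation*}
Since $\mat{G} \in SO(3)$ we have $\mat{G}^\top = \mat{G}^{-1}$, so this reads $\matLambda^2 \mat{G} = \mat{G} \matLambda^2$.

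The second step is to promote this to $\matLambda \mat{G} = \mat{G} \matLambda$. Here $\matLambda$ is, by construction, the unique symmetric positive semidefinite square root of $\matLambda^2$. The standard way I would argue this is via the spectral theorem: diagonalize $\matLambda^2 = \mat{U} \mat{D}^2 \mat{U}^\top$ with $\mat{D}$ diagonal with nonnegative entries; then $\matLambda = \mat{U} \mat{D} \mat{U}^\top$. Any matrix commuting with $\matLambda^2$ preserves its eigenspaces, which are precisely the eigenspaces of $\matLambda$, hence it commutes with $\matLambda$ as well. Equivalently, $\matLambda$ is a polynomial in $\matLambda^2$ (interpolating on the distinct eigenvalues of $\matLambda^2$), and commutation is preserved under polynomial combinations, giving $\mat{G} \matLambda = \matLambda \mat{G}$.

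I do not anticipate any serious obstacle: the first step is a routine change-of-variables argument using only the hypotheses stated at the beginning of section~\ref{sec:proposed_metric}, and the second step is a classical fact about the unique PSD square root. The only subtlety worth being explicit about is that one must invoke $\mat{G}^\top = \mat{G}^{-1}$ to turn the conjugation identity $\matLambda^2 = \mat{G} \matLambda^2 \mat{G}^\top$ into a commutation relation, which is what justifies restricting to the rotation subgroup in section~\ref{sec:group_of_proper_symmetry}.
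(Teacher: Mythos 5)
Your proposal is correct and follows essentially the same route as the paper: a change of variables in the integral defining $\matLambda^2$ (using $\mat{G}(\mathcal{S})=\mathcal{S}$, $\mu\circ\mat{G}=\mu$, and $\mat{G}^\top=\mat{G}^{-1}$) to get $\mat{G}\matLambda^2=\matLambda^2\mat{G}$, followed by the spectral-theorem argument that the unique symmetric PSD square root inherits the commutation. The paper phrases the second step by exhibiting $\mat{G}^\top\mat{U}$ as another eigenbasis of $\matLambda^2$, which is the same eigenspace-preservation fact you invoke (and your polynomial-in-$\matLambda^2$ remark is an equivalent standard formulation).
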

\begin{proof} 
Let $\mat{G}$ be a proper symmetry in $G$. By definition of $\matLambda^2$,
\begin{equation}
\mat{G} \matLambda^2 =  \cfrac{1}{S} \int_\mathcal{S} \mu(\mat{x}) \mat{G} \mat{x} \mat{x}^\top ds.
\end{equation}
Performing the change of variable $\mat{x} \leftarrow \mat{G} \mat{x}$ enables to rewrite this expression into:
\begin{equation}
\cfrac{1}{S} \int_{\mat{G}(\mathcal{S})} \mu(\mat{G}^{-1} \mat{x}) \mat{x} (\mat{G}^{-1} \mat{x})^\top ds.
\end{equation}
Thanks to the invariance of $\mathcal{S}$ and $\mu$ to the proper symmetries of the object, we exhibit back $\matLambda^2$ as follows:
\begin{equation}
\begin{aligned}
\mat{G} \matLambda^2  &= \cfrac{1}{S} \int_{\mathcal{S}} \mu(\mat{x}) \mat{x} \mat{x}^\top \mat{G}^{-\top} ds \\
&= \matLambda^2 \mat{G}^{-\top}.
\end{aligned}
\end{equation}
$\mat{G}$ being a rotation, $\mat{G}^{-\top} = \mat{G}$, and therefore $\mat{G}$ and $\matLambda^2$ commute, \ie
\begin{equation}
\label{eq:commutation_g_lambda2}
\mat{G} \matLambda^2 = \matLambda^2 \mat{G}.
\end{equation}

Moreover, as a positive semi-definite symmetric matrix, $\matLambda^2$ admits an eigenvalue decomposition $\matLambda^2 = \mat{U} \mat{D} \mat{U}^\top$, 
where $\mat{U} \in SO(3)$ and $\mat{D}$ is a positive semi-definite diagonal matrix. Injecting this decomposition into the right hand side of equation~\ref{eq:commutation_g_lambda2}, we observe that $\mat{G}^\top \mat{U}$ is also an eigenbasis of $\matLambda^2$:
\begin{equation}
\matLambda^2 = (\mat{G}^\top \mat{U}) \mat{D} (\mat{G}^\top \mat{U})^\top.
\end{equation}

$\matLambda$ being the principal square root of $\matLambda^2$, both share the same eigenspaces, thus:
\begin{equation}
\left\lbrace
\begin{matrix}
\matLambda = \mat{U} \mat{D}^{1/2} \mat{U}^\top \\
\matLambda = (\mat{G}^\top \mat{U}) \mat{D}^{1/2} (\mat{G}^\top \mat{U})^\top.
\end{matrix}
\right.
\end{equation}
Therefore, injecting the first equality into the second one, we proved that
\begin{equation}
\matLambda = \mat{G}^\top \matLambda \mat{G}
\end{equation}
\ie, that $\mat{G}$ and $\matLambda$ commute: $\mat{G} \matLambda = \matLambda \mat{G}$.
\qed
\end{proof} 

Thanks to this lemma, it is now possible to exhibit the three following properties of those symmetries within the ambient space:
\begin{proposition}
\label{prop:symmetric_equal_representatives}
$G_\mathcal{R}$  contains $|\mathcal{R}(\bullet)|$ elements, and given a pose $\mathcal{P}$ and one of its representative $\mat{p} \in \mathcal{R}(\mathcal{P})$, the set of elements symmetric to $\mat{p}$ (including itself) is the whole set of representatives of the pose, \ie
\begin{equation}
\left\lbrace s(\mat{p}) \middle\vert s \in G_\mathcal{R} \right\rbrace  = \mathcal{R}(\mathcal{P}).
\end{equation}
\end{proposition}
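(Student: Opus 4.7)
The plan is to treat the three classes of objects that admit multiple representatives per pose (finite 3D, revolution with rotoreflection invariance, and 2D cyclic) one at a time, since the representatives have very different analytic forms in each case, while the case of a single representative per pose is trivial ($G_\mathcal{R}$ reduces to the identity singleton). In each nontrivial case, I will show that the orbit map $s \mapsto s(\mat{p})$ is a bijection from $G_\mathcal{R}$ onto $\mathcal{R}(\mathcal{P})$, which settles both the cardinality and the set-equality statements simultaneously.

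The finite 3D case is where the substance of the proof lies, and it hinges on Lemma~\ref{lem:commutation_g_lambda}. Any representative of $\mathcal{P}$ has the form $\mat{p} = (\vect(\mat{R}\mat{G}\matLambda)^\top, \mat{t}^\top)^\top$ for some $\mat{G} \in G$. Applying $s_{\mat{G}'}$ yields $(\vect(\mat{R}\mat{G}\matLambda\mat{G}')^\top, \mat{t}^\top)^\top$, and using $\matLambda\mat{G}' = \mat{G}'\matLambda$ this rewrites as $(\vect(\mat{R}(\mat{G}\mat{G}')\matLambda)^\top, \mat{t}^\top)^\top$, which is again in $\mathcal{R}(\mathcal{P})$ since $\mat{G}\mat{G}' \in G$. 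Because left multiplication by the fixed element $\mat{G}$ is a bijection of $G$ onto itself, as $\mat{G}'$ ranges over $G$ the element $\mat{G}\mat{G}'$ ranges over $G$ bijectively, so the orbit of $\mat{p}$ under $G_\mathcal{R}$ is exactly $\mathcal{R}(\mathcal{P})$.

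The remaining two cases are direct. For a revolution object with rotoreflection invariance, $\mat{p} = (\pm \lambda (\mat{R}\mat{e}_z)^\top, \mat{t}^\top)^\top$, and $s_{\text{rev}, \delta}$ simply toggles the sign of the axial component, so the two symmetries of $G_\mathcal{R}$ map $\mat{p}$ to the two elements of $\mathcal{R}(\mathcal{P})$. For the 2D cyclic class of order $n$, a representative has the form $(\lambda e^{i(\theta + 2k\pi/n)}, \mat{t}^\top)^\top$; the symmetry $s_{\text{2D}, n, k'}$ multiplies the complex part by $e^{i 2k'\pi/n}$, which amounts to shifting the index $k$ to $k + k' \pmod n$. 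The cyclic group $\mathbb{Z}/n\mathbb{Z}$ acts freely and transitively on itself, so the orbit of $\mat{p}$ is the whole of $\mathcal{R}(\mathcal{P})$.

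Finally, the cardinality claim $|G_\mathcal{R}| = |\mathcal{R}(\bullet)|$ reduces to injectivity of the orbit map, which in each case is equivalent to the pairwise distinctness of the parameterized representatives — guaranteed by the mild non-degeneracy condition appropriate to the symmetry class (full-rank $\matLambda$ for the finite 3D case, $\lambda > 0$ for the 2D case, $\mat{R}\mat{e}_z \neq \mat{0}$ for the revolution case), all of which are consistent with the assumption that the object genuinely belongs to the class considered. The main obstacle is the finite 3D case: without Lemma~\ref{lem:commutation_g_lambda} there is no reason for $s_{\mat{G}'}(\mat{p})$ to lie in $\mathcal{R}(\mathcal{P})$, since the symmetry acts on the right of $\matLambda$ whereas representatives are indexed by insertion of $\mat{G}$ on the left of $\matLambda$; the commutation is precisely what bridges these two actions.
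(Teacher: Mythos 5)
Your proof is correct and follows essentially the same route as the paper's: the single-representative and revolution/cyclic cases are handled by direct inspection, and the finite 3D case is resolved by invoking Lemma~\ref{lem:commutation_g_lambda} to convert $\mat{R}\mat{G}\matLambda\mat{G}'$ into $\mat{R}(\mat{G}\mat{G}')\matLambda$. You are in fact somewhat more careful than the paper in spelling out the bijectivity of the orbit map and the non-degeneracy conditions (e.g.\ full-rank $\matLambda$) needed for the cardinality claim, which the paper leaves implicit.
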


\begin{proof}
This proposition is easily verified thanks to the expression of pose representatives tables~\ref{tab:representations_synthesis} and~\ref{tab:representations_synthesis_2D}. 
The only subtlety is the case of a 3D object with a finite proper symmetry group. In this case for any $\mat{G} \in G$, the symmetric by $s_{\mat{G}}$ of a pose representative $(\vect(\mat{R} \matLambda)^\top, \mat{t}^\top)^\top$, where $\mat{R} \in \mathcal{M}_{3, 3}(\mathbb{R})$ and $\mat{t} \in \mathbb{R}^3$, can be expressed as
\begin{equation}
s_{\mat{G}} \left( (\vect(\mat{R} \matLambda)^\top, \mat{t}^\top)^\top  \right) =  (\vect(\mat{R}  \matLambda \mat{G})^\top, \mat{t}^\top)^\top,
\end{equation}
which according to lemma~\ref{lem:commutation_g_lambda} is equal to $(\vect(\mat{R} \mat{G} \matLambda)^\top, \mat{t}^\top)^\top$. By definition of representatives for such object, it is therefore a representative of the same pose $\mathcal{P}$.
\qed
\end{proof}

\begin{proposition}
\label{prop:linearity_representative_symmetry}
Elements of $G_\mathcal{R}$ are linear transformations of the ambient space, \ie for  $s \in G_\mathcal{R}$, and for any $\mat{x}_1, \mat{x}_2 \in \mathbb{R}^N$, and $\alpha \in \mathbb{R}$,
\begin{equation}
s(\mat{x}_1 + \alpha \mat{x}_2) = s(\mat{x}_1) + \alpha s(\mat{x}_2).
\end{equation}
\end{proposition}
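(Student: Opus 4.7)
The plan is to verify linearity by case analysis, treating each of the three families of symmetry operations defined in Table~\ref{tab:symmetries_within_representatives} in turn. In every case, $s$ acts trivially (as the identity) on the translation component $\mat{t}$, so linearity on that block is immediate; all the work is on the first block.

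First I would handle the 3D finite case. Here $s_{\mat{G}}$ sends $(\vect(\mat{M})^\top, \mat{t}^\top)^\top$ to $(\vect(\mat{M}\mat{G})^\top, \mat{t}^\top)^\top$. Since $\vect$ is a linear bijection between $\mathcal{M}_{3,3}(\mathbb{R})$ and $\mathbb{R}^9$, it suffices to check that $\mat{M} \mapsto \mat{M}\mat{G}$ is linear in $\mat{M}$, which follows from the right-distributivity of matrix multiplication over addition and its compatibility with scalar multiplication. Combined with linearity on the translation block, this gives $s_{\mat{G}}(\mat{x}_1 + \alpha \mat{x}_2) = s_{\mat{G}}(\mat{x}_1) + \alpha s_{\mat{G}}(\mat{x}_2)$.

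Next I would treat the revolution case with rotoreflection invariance. Here $s_{\text{rev},\delta}$ acts on the $\mathbb{R}^3$ axis block as multiplication by the scalar $\delta \in \{-1, +1\}$, which is trivially $\mathbb{R}$-linear. Finally, for the 2D cyclic case, $s_{\text{2D}, n, k}$ acts on the $\mathbb{R}^2$ block (identified with $\mathbb{C}$) by complex multiplication with the fixed unit complex number $e^{i 2k\pi/n}$; this corresponds on $\mathbb{R}^2$ to multiplication by the fixed rotation matrix $\mat{R}^{2k\pi/n}$, which is again an $\mathbb{R}$-linear map. Concatenating these linear blocks with the identity action on $\mat{t}$ yields linearity of each $s \in G_\mathcal{R}$.

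There is no real obstacle here; the proof amounts to recognizing that every operation in the table is built from matrix multiplication (on the right) by a fixed matrix and concatenation with an identity action, both of which are linear. The only subtlety worth a remark is that, even though the underlying groups ($G \subset SO(3)$, $\{\pm 1\}$, $n$-th roots of unity) arise from multiplicative structures, the symmetry operations themselves are linear maps on the ambient Euclidean space, not merely group homomorphisms.
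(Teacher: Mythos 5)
Your proposal is correct and takes the same route as the paper, which simply states that the result is a direct consequence of the definitions in Table~\ref{tab:symmetries_within_representatives}; you have merely spelled out the case-by-case verification that the paper leaves implicit. No gaps.
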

This proposition is a direct consequence of the definition of symmetries described table~\ref{tab:symmetries_within_representatives}.

\begin{proposition}\label{prop:automorphism}Elements of $G_\mathcal{R}$ are automorphisms of the ambient space: for any $s \in G_\mathcal{R}$, $s$ is bijective, and for any $\mat{x}_1, \mat{x}_2 \in \mathbb{R}^N$,
\begin{equation}
\| s(\mat{x}_2) - s(\mat{x}_1) \| = \| \mat{x}_2 - \mat{x}_1 \|.
\end{equation}
\end{proposition}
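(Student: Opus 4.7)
The plan is to reduce distance preservation to norm preservation using the linearity already established in proposition~\ref{prop:linearity_representative_symmetry}, check norm preservation case by case for the three families of symmetries in table~\ref{tab:symmetries_within_representatives}, and then conclude bijectivity from the fact that a linear isometry on a finite-dimensional Euclidean space is automatically an automorphism.

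First I would note that for any $s \in G_\mathcal{R}$, proposition~\ref{prop:linearity_representative_symmetry} gives $s(\mat{x}_2) - s(\mat{x}_1) = s(\mat{x}_2 - \mat{x}_1)$. Consequently $\|s(\mat{x}_2)-s(\mat{x}_1)\| = \|s(\mat{x}_2 - \mat{x}_1)\|$, so it suffices to prove that $\|s(\mat{x})\|=\|\mat{x}\|$ for every $\mat{x} \in \mathbb{R}^N$ and every $s \in G_\mathcal{R}$.

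Next I would handle each type of symmetry separately. In all cases the translation part $\mat{t}$ is left unchanged, so only the first block of coordinates needs inspection. For $s_{\text{rev},\delta}$ with $\delta=\pm 1$, clearly $\|\delta \mat{a}\|=\|\mat{a}\|$. For $s_{\text{2D},n,k}$, multiplication by the unit complex number $e^{i 2k\pi/n}$ preserves the modulus and therefore the Euclidean norm of $\mat{a}\in \mathbb{R}^2$. The main step is $s_{\mat{G}}$ on $\mathbb{R}^{12}$: using $\|\vect(\mat{M})\|^2 = \Tr(\mat{M}^\top \mat{M})$, we compute
\begin{equation}
\|\vect(\mat{M}\mat{G})\|^2 = \Tr(\mat{G}^\top \mat{M}^\top \mat{M} \mat{G}) = \Tr(\mat{M}^\top \mat{M}) = \|\vect(\mat{M})\|^2,
\end{equation}
where the middle equality uses the cyclic property of the trace together with $\mat{G}\mat{G}^\top = \mat{I}$, which holds because $\mat{G} \in G \subset SO(3)$.

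Finally, since each $s \in G_\mathcal{R}$ is a linear isometry of the finite-dimensional Euclidean space $\mathbb{R}^N$, its kernel is reduced to $\{0\}$ by norm preservation, hence $s$ is injective and, by the rank-nullity theorem, bijective. This gives both conditions of the proposition. I do not anticipate a real obstacle here; the only slightly non-routine point is unpacking the vectorization norm as a trace in the 12D case, which is a standard identity.
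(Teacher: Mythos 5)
Your proposal is correct and follows essentially the same route as the paper: linearity (proposition~\ref{prop:linearity_representative_symmetry}) reduces distance preservation to norm preservation, which is then checked case by case for the three symmetry families of table~\ref{tab:symmetries_within_representatives}. The only differences are minor: you obtain bijectivity from injectivity of a linear isometry via rank--nullity where the paper simply exhibits explicit inverses ($(s_{\mat{G}})^{-1}=s_{\mat{G}^{-1}}$, $(s_{\text{rev},\delta})^{-1}=s_{\text{rev},\delta}$, $(s_{\text{2D},n,k})^{-1}=s_{\text{2D},n,-k}$), and your computation $\|\vect(\mat{M}\mat{G})\|^2=\Tr(\mat{G}^\top\mat{M}^\top\mat{M}\mat{G})=\Tr(\mat{M}^\top\mat{M})$ makes rigorous, in the 12D case, the paper's looser justification that the norm is preserved because elements of $G$ are ``of unit norm''.
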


\begin{proof}
Bijectivity is straightforward since
\begin{itemize}
\item $(s_{\mat{G}})^{-1} = s_{\mat{G}^{-1}}$ for any $\mat{G} \in G$.
\item $(s_{\text{rev}, \delta})^{-1} = s_{\text{rev}, \delta}$ for any $\delta \in \left\lbrace -1, 1 \right\rbrace$.
\item $(s_{\text{2D}, n, k})^{-1} = s_{\text{2D}, n, -k}$ for any $k \in \mathbb{N}$.
\end{itemize}
The morphism property comes from the linearity of those symmetry operations (proposition~\ref{prop:linearity_representative_symmetry}) and the fact they preserve the norm, since elements of $\mat{G}$, $\lbrace 1, -1 \rbrace$ and $\lbrace e^{i 2 k \pi / n} \vert k \in \llbracket 0, n \llbracket \rbrace$ are themselves of unit norm.
\qed
\end{proof}


\section{Projection onto the pose space}
\label{sec:projection}

In section \ref{sec:pose_representation}, we discussed how a pose $\mathcal{P}$ can be identified to a finite pointset $\mathcal{R}(\mathcal{P})$ of an Euclidean space of finite dimension $\mathbb{R}^N$ and how elements of $\mathcal{R}(\mathcal{P})$ can be computed easily from any rigid transformation $(\mat{R}, \mat{t})$ associated to the pose.
The backward mapping is possible, and for any element of $\mathcal{R}(\mathcal{P})$ we can compute a rigid transformation fully describing the pose $\mathcal{P}$. Hence we consider an element of $\mathcal{R}(\mathcal{P})$ as a \emph{representative} of $\mathcal{P}$.
This computation is actually straightforward given the expressions of poses representatives (see tables~\ref{tab:representations_synthesis} and~\ref{tab:representations_synthesis_2D}), thus we choose to discuss this assertion in the more general framework of projection onto the pose space: given an arbitrary $N$-D vector $\mat{x}$, find out what pose has the most similar representative to $\mat{x}$.
The results of this section will be useful in section~\ref{sec:averaging_poses} to propose a method for pose averaging.

\myframedminipage{%
\begin{definition}
We define as projections of $\mat{x} \in \mathbb{R}^N$ the poses:
\begin{equation}
\label{eq:projection}
\proj(\mat{x}) \triangleq \argmin_{\mathcal{P}} \min_{\mat{p} \in  \mathcal{R}(\mathcal{P})} \| \mat{p} -\mat{x} \|^2 
\end{equation}
\end{definition}
}
In nondegenerate cases, the projection is unique, and we propose in the next subsections its expression for the different classes of bounded objects, based on the computation of the closest pose representative to the query point.


\subsection{Spherical object}

Projection is trivial in the case of a spherical object, since all points of $\mathbb{R}^3$ are valid representatives of poses. A point $\mat{x} \in \mathbb{R}^3$ therefore projects onto the pose having $\mat{x}$ for representative, namely the pose in which the center of the object admits $\mat{x}$ for 3D coordinates.


\subsection{Object of revolution}
\label{sec:projection_revolution_object}
In the case of a revolution object without rotoreflection invariance, the position of the center of mass and the oriented revolution axis of the object are well defined at any given pose. 
Reciprocally, a pose can be defined by the position of its center of mass $\mat{t}$ and its oriented revolution axis, that we represent by a normalized vector $\mat{a} \in \mathbb{R}^3$. The unique representative of such a pose is $(\lambda \mat{a}^\top, \mat{t}^\top)^\top$ as we defined in \ref{sec:revolution_object}. 

Let $\mat{x} \in \mathbb{R}^6$ be a point to project onto the pose space. Without loss of generality, $\mat{x}$ can be split into two parts: $\mat{x} = (\mat{x}_r^\top, \mat{x}_t^\top)^\top$ with $\mat{x}_r, \mat{x}_t \in \mathbb{R}^3$. The projection problem can therefore be reformulated into:
\begin{equation}
\begin{aligned}
\proj(\mat{x}) &= \argmin_{\mathcal{P}} \| \mat{x} - \mathcal{R}(\mathcal{P}) \|^2 \\ &= \argmin_{\mat{a}, \mat{t} \in \mathbb{R}^3 / \|\mat{a}\| = 1} \left( \| \mat{x}_r - \lambda \mat{a}\|^2 + \| \mat{x}_t - \mat{t}\|^2 \right) \\
\end{aligned}
\end{equation}

This problem admits an unique solution as long as $ \mat{x}_r \neq \mat{0}$, and in that case the  projection of $\mat{x}$ is the pose of center of mass $\mat{\hat{t}} = \mat{x}_t$ and of axis $\mat{\hat{a}}=\mat{x}_r / \| \mat{x}_r \|$.
This result holds true in the case of an object of revolution with rotoreflection invariance, since $(\lambda \mat{\hat{a}}^\top, \mat{\hat{t}}^\top)^\top$ is the closest pose representative to $\mat{x}$.


\subsection{Object with a finite proper symmetry group}
\label{sec:projection_finite_proper_symmetry_group}
The representative of a pose of an object without proper symmetry is a 12D vector, the first 9 dimensions representing the orientation in the form of a vectorized matrix and the 3 others the position of the object.
Therefore, and without loss of generality, we can split a point $\mat{x} \in \mathbb{R}^{12}$ to project in a similar fashion: $\mat{x} = (\vect(\mat{X}_r)^\top, \mat{x}_t^\top)^\top$ with $\mat{x}_t \in \mathbb{R}^3$ and $\mat{X}_r \in \mathcal{M}_{3,3}(\mathbb{R})$.
The projection problem for such a point $\mat{x}$ -- in the case of an object without proper symmetry -- thus consists in:
\begin{equation}
\begin{aligned}
\proj(\mat{x}) &= \argmin_{\mathcal{P}} \| \mat{x} - \mathcal{R}(\mathcal{P}) \|^2 \\ &= \argmin_{\mat{R}, \mat{t}} \left( \| \mat{X}_r - \mat{R} \matLambda \|_F^2 + \| \mat{x}_t - \mat{t}\|^2 \right) \\
\end{aligned}
\end{equation}
 The two terms being independent, we conclude again that the position of the center of mass of the object for a projection of $\mat{x}$ is $\mat{\hat{t}} = \mat{x}_t$.
The minimization problem regarding the orientation part is in the form of the so-called constrained orthogonal Procrustes problem \citep{schoenemann1966,umeyama1991} and admits the solution $\mat{\hat{R}} = \mat{U} \mat{S} \mat{V}^\top$, where $\mat{U} \mat{D} \mat{V}^\top$ is a singular value decomposition of $\mat{X}_r \matLambda$ such as
\begin{equation}
\mat{D} = \diag(\alpha_1, \alpha_2, \alpha_3),
\end{equation}
with  $\alpha_1 \geq \alpha_2 \geq \alpha_3 \geq 0$ and
\begin{equation}
\mat{S} = \left\lbrace 
\begin{matrix} 
\mat{I} \text{ if } \det(\mat{U}) \det(\mat{V}) > 0 \\
\diag(1, 1, -1) \text{ otherwise.}
\end{matrix}
\right.
\end{equation}

The projection is unique if $\rank(\mat{X}_r \matLambda^\top) \geq 2$ \citep{umeyama1991}, a condition that is fulfilled in most practical cases.
This result also holds true in the general case of an object with a finite proper symmetry group, as $(\vect(\mat{\hat{R}} \matLambda)^\top, \mat{\hat{t}}^\top)^\top$ is the closest pose representative to $\mat{x}$.

\subsection{2D object}

The projection problem for a 2D object is similar to the 3D case.

In the case of a circular object, any point of $\mat{x} \in \mathbb{R}^2$ is a valid representative of a pose and therefore projects onto the pose of center $\mat{x}$.

Regarding an object with cyclic symmetry, we conclude by the same reasoning as in the case of a 3D revolution object that a 4D vector $\mat{x} = (\mat{a}^\top, \mat{t}^\top)^\top$, where $\mat{a}, \mat{t} \in \mathbb{R}^2$, admits an unique projection as long as  $\| \mat{a} \| \neq 0$. The projection admits a representative  $(\lambda / \| \mat{a} \| \cdot \mat{a}^\top, \mat{t}^\top)^\top$,  and consists of the pose defined by a translation $\mat{t}$ and a rotation of angle $\arg(\mat{a})$, where $\arg(\mat{a})$ is the argument of $\mat{a}$ seen as a complex number.


\section{Averaging poses}
\label{sec:averaging_poses}

Pose averaging is of great use for applications such as denoising, modes detection or interpolation.
Definition of the average is not obvious in non-vector spaces such as ours, and we therefore consider a generalization of the average to arbitrary metric spaces, known as the Fréchet mean.

Let us consider a finite set of poses $S = \left\lbrace \mathcal{P}_i \right\rbrace_{i=1..n}$ and a set of strictly positive weights $\left\lbrace w_i \right\rbrace_{i=1..n}$ assigned to each of those. The weighted mean of poses $S$ is by definition the pose which minimizes the corresponding Fréchet variance:\begin{equation}
\label{eq:mean_definition}
\mean(S) \triangleq \argmin_{\mathcal{P} \in \mathcal{C}} \frechetvariance(\mathcal{P}),
\end{equation}
the Fréchet variance at a pose $\mathcal{P} \in \mathcal{C}$ being expressed as follows:
\begin{equation}
\frechetvariance(\mathcal{P}) \triangleq \sum_{i=1}^n w_i \mydist^2(\mathcal{P}_i, \mathcal{P}).
\end{equation}

This mean is not necessarily well defined, since the minimum of Fréchet variance is not necessarily reached at a unique pose.
However, such cases typically occur in configurations where the average would actually be meaningless, \eg when averaging two poses of opposite axes for a revolution object without rotoreflection invariance.

The problem of pose averaging has already been studied for objects without proper symmetry with various metrics. \citet{sharf2010} notably compare different averaging techniques for the rotation part of a pose, using common metrics.
While there is no known closed-form solution for the Riemannian metric \eqref{eq:distance_classical_riemannian}, it can be computed iteratively, and admits closed form approximations which are ``good enough'' for practical applications \citep{gramkow2001}.
A good approximation, when dealing with more than two poses, is based on computing the average of rotation matrices, and actually corresponds to the exact average when considering the distance \eqref{eq:distance_froebenius_norm} \citep{curtis1993}.

In the case of our proposed distance, the expression of the Fréchet variance can be developed into:
\begin{equation}
\label{eq:frechet_variance_developped}
\frechetvariance(\mathcal{P}) = \sum_{i=1}^{n} w_i \min_{\mat{p}_i \in \mathcal{R}(\mathcal{P}_i),\, \mat{p} \in \mathcal{R}(\mathcal{P})} \| \mat{p}_i - \mat{p} \|^2.
\end{equation}

Considering a given tuple  $P = (\mat{p}_i)_{i=1 \dots n} \in \prod_i \mathcal{R}(\mathcal{P}_i)$ of representatives of the poses to average, the weighted sum of square distances from a pose representative $\mat{p}$ to elements of $P$ can be split into two terms, through the introduction of the arithmetic mean $\mat{m}_{P}$ of the elements of $P$:
\begin{equation}
\label{eq:frechet_variance_split}
\begin{split}
\sum_{i=1}^{n} w_i \| \mat{p}_i - \mat{p} \|^2 &= \sum_i w_i \| \mat{p}_i - \mat{m}_{P} \|^2 \\
&\quad + \left( \sum_i w_i \right) \| \mat{p} - \mat{m}_{P} \|^2,
\end{split}
\end{equation}
with the arithmetic mean
\begin{equation}
\mat{m}_{P} \triangleq \cfrac{\sum_i w_i \mat{p}_i}{\sum_i w_i}.
\end{equation}

The first term of \eqref{eq:frechet_variance_split} is independent of $\mat{p}$. Therefore, the problem of minimizing \eqref{eq:frechet_variance_split} for a given tuple $P$ is reduced to the problem of finding a pose $\mathcal{P}$ which minimizes $\min_{\mat{p} \in \mathcal{R}(\mathcal{P})} \| \mat{p} - \mat{m}_{P} \|^2$.
This corresponds to the projection problem we discussed and solved in section \ref{sec:projection}.
The average pose, if well defined, is thus the projection of the arithmetic average of a combination of representatives of the poses to average, or more formally:
\begin{gather}
\mean(S) = \argmin_{\mathcal{P} \in \mathcal{A}} \frechetvariance(\mathcal{P}),\\
\intertext{where}
\mathcal{A} \triangleq \left\lbrace \proj\left( \mat{m}_P \right)\, \middle\vert\, P \in \prod_i \mathcal{R}(\mathcal{P}_i) \right\rbrace.
\end{gather}


\subsection{Objects with a single representative per pose}
\label{sec:averaging_single_representation}

Because the projection is unique in nondegenerate cases, the conclusion is straightforward for spherical objects, objects of revolution without rotoreflection invariance, and objects without proper symmetry. Since poses of such objects admit only one representative, a single tuple of representatives has to be considered.
For those objects, the average pose exists and corresponds simply to the projection of the arithmetic average of their representatives:
\begin{equation}
\boxed{
\mean(S) = \proj\left( \cfrac{\sum_i w_i \mathcal{R}(\mathcal{P}_i)}{\sum_i w_i} \right)
}
\end{equation}


\subsection{Objects with multiple representatives per pose}
\label{sec:averaging_multiple_representatives}

In the other cases, a pose admits several representatives and one should consider the different combinations of representatives to find the exact average -- assuming its existence and uniqueness in nonpathological cases.
This problem is not specific to our method and is similar to the issue encountered when averaging orientations of an object without proper symmetries through the arithmetic mean of their quaternions representatives, each orientation admitting two antipodal quaternions as representatives.
Because the number of combinations of representatives is exponential in the number of considered poses, the exact computation of the average might easily become expensive.

A common practice to circumvent this issue when averaging orientations based on a quaternion representation is to compute the arithmetic mean of a ``consistent'' combination of representatives and consider its projection as the average.
Such a combination is usually built by choosing a representative for an arbitrary initial pose, and then picking for each pose the nearest representative to the initial one \citep{gramkow2001}.
This approach is simple, but is in the general case ill-defined. Indeed, the chosen combination -- and hence the estimated average -- are in general dependent of the initial choice.
We depict an illustration on figure~\ref{fig:averaging_poses} of such a case, where three different choices for the initial pose lead to three different choices of ``consistent'' representatives combinations and therefore three different estimations of the average.

\begin{figure}
\centering
\colorlet{color11}{color1}
\colorlet{color12}{color2}
\colorlet{color21}{color1}
\colorlet{color22}{color2}
\colorlet{color31}{color1}
\colorlet{color32}{color2}

\definecolor{refpointcirclecolor}{HTML}{FDBB84}
\newcommand{\highlightpoint}[1]
{
\draw[refpointcirclecolor] {#1} circle (0.25);
}

\begin{tikzpicture}[scale=0.8]
\def\angleone{80}
\def\angletwo{20}
\def\anglethree{-20}

\begin{scope}[shift={(0,0)}];
\draw[very thin] (0,0) circle (1);
\highlightpoint{(\angletwo:1)};
\draw[dotted] (\angleone:1) node {\textcolor{color11}{$\bullet$}} -- (\angleone:-1) node{\textcolor{color12}{$\bullet$}};
\draw[dotted] (\angletwo:1) node{\textcolor{color11}{$\blacktriangle$}} -- (\angletwo:-1) node{\textcolor{color12}{$\blacktriangle$}};
\draw[dotted] (\anglethree:1) node{\textcolor{color11}{$\bigstar$}} -- (\anglethree:-1) node{\textcolor{color12}{$\bigstar$}};
\draw[thick, arrows=->] (0, -1.2) -- (0, -1.6);
\def\aravx{{(cos(\angleone) + cos(\angletwo) + cos(\anglethree))/3}}
\def\aravy{{(sin(\angleone) + sin(\angletwo) + sin(\anglethree))/3}}
\def\avangle{{atan2(\aravy, \aravx)}}
\begin{scope}[shift={(0, -2.8)}];
\draw[very thin] (0,0) circle (1);
\draw[dotted] (\avangle:1) node {\textcolor{color11}{$\bullet$}} -- (\avangle:-1) node {\textcolor{color12}{$\bullet$}};
\draw (0, -1) node [below] {(a)};
\end{scope}
\end{scope}

\begin{scope}[shift={(3,0)}];
\draw[very thin] (0,0) circle (1);
\highlightpoint{(\anglethree:1)};
\draw[dotted] (\angleone:1) node {\textcolor{color22}{$\bullet$}} -- (\angleone:-1) node{\textcolor{color21}{$\bullet$}};
\draw[dotted] (\angletwo:1) node{\textcolor{color21}{$\blacktriangle$}} -- (\angletwo:-1) node{\textcolor{color22}{$\blacktriangle$}};
\draw[dotted] (\anglethree:1) node{\textcolor{color21}{$\bigstar$}} -- (\anglethree:-1) node{\textcolor{color22}{$\bigstar$}};
\draw[thick, arrows=->] (0, -1.2) -- (0, -1.6);
\def\aravx{{(cos(\angleone + 180) + cos(\angletwo) + cos(\anglethree))/3}}
\def\aravy{{(sin(\angleone + 180) + sin(\angletwo) + sin(\anglethree))/3}}
\def\avangle{{atan2(\aravy, \aravx)}}
\begin{scope}[shift={(0, -2.8)}];
\draw[very thin] (0,0) circle (1);
\draw[dotted] (\avangle:1) node {\textcolor{color21}{$\bullet$}} -- (\avangle:-1) node {\textcolor{color22}{$\bullet$}};
\draw (0, -1) node [below] {(b)};
\end{scope}
\end{scope}

\begin{scope}[shift={(6,0)}];
\draw[very thin] (0,0) circle (1);
\highlightpoint{(\angleone:1)};
\draw[dotted] (\angleone:1) node {\textcolor{color31}{$\bullet$}} -- (\angleone:-1) node{\textcolor{color32}{$\bullet$}};
\draw[dotted] (\angletwo:1) node{\textcolor{color31}{$\blacktriangle$}} -- (\angletwo:-1) node{\textcolor{color32}{$\blacktriangle$}};
\draw[dotted] (\anglethree:1) node{\textcolor{color32}{$\bigstar$}} -- (\anglethree:-1) node{\textcolor{color31}{$\bigstar$}};
\draw[thick, arrows=->] (0, -1.2) -- (0, -1.6);
\def\aravx{{(cos(\angleone) + cos(\angletwo) + cos(\anglethree + 180))/3}}
\def\aravy{{(sin(\angleone) + sin(\angletwo) + sin(\anglethree + 180))/3}}
\def\avangle{{atan2(\aravy, \aravx)}}
\begin{scope}[shift={(0, -2.8)}];
\draw[very thin] (0,0) circle (1);
\draw[dotted] (\avangle:1) node {\textcolor{color31}{$\bullet$}} -- (\avangle:-1) node {\textcolor{color32}{$\bullet$}};
\draw (0, -1) node [below] {(c)};
\end{scope}
\end{scope}

\def\angleone{180-22}
\def\angletwo{180+25}
\def\anglethree{180+7}
\begin{scope}[shift={(4,-5.5)}, scale=1];
\begin{scope}[shift={(-1.4, 0)}];
\draw (0,0) circle (1);
\draw [dashed] (-1, 0) circle (0.5);
\draw [arrows=<->] (-1.8, -0.5) -- node[left] {$T/2$} (-1.8, 0.5);
\draw [dotted] (-1.8, -0.5) -- (-1, -0.5);
\draw [dotted] (-1.8, 0.5) -- (-1, 0.5);
\draw [arrows=<->] (-3, -1) -- node[left] {$T$} (-3, 1);
\draw [dotted] (-3, -1) -- (0, -1);
\draw [dotted] (-3, 1) -- (0, 1);
\draw[dotted] (\angleone:1) node {\textcolor{color31}{$\bullet$}} -- (\angleone:-1) node{\textcolor{color32}{$\bullet$}};
\draw[dotted] (\angletwo:1) node{\textcolor{color31}{$\blacktriangle$}} -- (\angletwo:-1) node{\textcolor{color32}{$\blacktriangle$}};
\draw[dotted] (\anglethree:1) node{\textcolor{color31}{$\bigstar$}} -- (\anglethree:-1) node{\textcolor{color32}{$\bigstar$}};
\end{scope};
\draw[thick, arrows=->] (-0.2, 0) -- (0.2, 0);
\def\aravx{{(cos(\angleone) + cos(\angletwo) + cos(\anglethree + 180))/3}}
\def\aravy{{(sin(\angleone) + sin(\angletwo) + sin(\anglethree + 180))/3}}
\def\avangle{{atan2(\aravy, \aravx)}}
\begin{scope}[shift={(1.4, 0)}];
\draw[very thin] (0,0) circle (1);
\draw[dotted] (\avangle:1) node {\textcolor{color31}{$\bullet$}} -- (\avangle:-1) node {\textcolor{color32}{$\bullet$}};
\end{scope}

\draw (0, -1) node [below] {(d)};
\end{scope}
\end{tikzpicture}
\caption{Estimating the average of poses with multiple representatives: illustration with the orientation of a 2D object with a 180\textdegree{} rotation symmetry, that can be represented by a point on a circle, or the antipodal point. We consider three poses to average (triangle, star and disk shapes).
(a, b, c) The choice of a ``consistent'' combination of poses representatives (blue clusters) in the sense of \citep{gramkow2001} is dependent of the initial choice (circled, first row), resulting potentially in different estimations of the average pose (second row). 
(d) We propose a definition of consistency -- which is in particular satisfied when the considered representatives are close enough one an other -- ensuring an unambiguous estimation of the average pose.}
\label{fig:averaging_poses}
\end{figure}

\bibliographyInSubfile

In this subsection, we propose an stricter definition of the \emph{consistency} of a combination of representatives and prove that it enables an unambiguous estimation of the mean.

\begin{definition}[Consistency] A tuple $(\mat{p}_i)_{i=1 \dots n} \in \prod_i \mathcal{R}(\mathcal{P}_i)$ is said \emph{consistent} if and only if
\begin{multline}
\forall (i,j) \in \llbracket 1, n \rrbracket^2, \forall \mat{q}_j \in \mathcal{R}(\mathcal{P}_j) \setminus \left\lbrace \mat{p_j} \right\rbrace, \\
\| \mat{p}_j - \mat{p}_i \| < \| \mat{q}_j - \mat{p}_i \|.
\end{multline}
\end{definition}

In other words, a consistent tuple is a set of pose representatives closer one another than to any other representatives.

\begin{proposition}[Uniqueness of a consistent tuple, up to symmetry]
\label{prop:uniqueness_up_to_symmetry_consistent_tuple}
If $(\mat{p}_i)_{i=1 \dots n} \in \prod_i \mathcal{R}(\mathcal{P}_i)$ is consistent, then the set of consistent tuples of $\prod_i \mathcal{R}(\mathcal{P}_i)$ is the set composed of $(\mat{p}_i)_{i=1 \dots n}$ and its symmetric tuples
\begin{equation}
\left\lbrace (s(\mat{p}_i))_{i=1 \dots n} \middle\vert s \in G_\mathcal{R} \right\rbrace.
\end{equation}
\end{proposition}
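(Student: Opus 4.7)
The plan is to prove both inclusions separately: first that every tuple obtained by applying a symmetry $s \in G_\mathcal{R}$ componentwise to the given consistent tuple is itself a consistent tuple of $\prod_i \mathcal{R}(\mathcal{P}_i)$, then that every consistent tuple is of this form.

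For the first inclusion, I would fix $s \in G_\mathcal{R}$ and use Proposition~\ref{prop:symmetric_equal_representatives} to see that $s(\mat{p}_i) \in \mathcal{R}(\mathcal{P}_i)$ for each $i$, so that $(s(\mat{p}_i))_{i=1\dots n}$ indeed belongs to $\prod_i \mathcal{R}(\mathcal{P}_i)$. The isometry property from Proposition~\ref{prop:automorphism} then gives $\|s(\mat{p}_j) - s(\mat{p}_i)\| = \|\mat{p}_j - \mat{p}_i\|$, while any alternative representative $\mat{r}_j \in \mathcal{R}(\mathcal{P}_j)\setminus\{s(\mat{p}_j)\}$ can be written as $\mat{r}_j = s(\mat{q}_j)$ for some $\mat{q}_j \in \mathcal{R}(\mathcal{P}_j)\setminus\{\mat{p}_j\}$ (again by bijectivity and Proposition~\ref{prop:symmetric_equal_representatives}), whence $\|\mat{r}_j - s(\mat{p}_i)\| = \|\mat{q}_j - \mat{p}_i\| > \|\mat{p}_j - \mat{p}_i\| = \|s(\mat{p}_j) - s(\mat{p}_i)\|$ by consistency of the original tuple.

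For the reverse inclusion, the key reformulation of consistency is that for every $i$, the element $\mat{p}_j$ is the \emph{unique} strictly closest point of $\mathcal{R}(\mathcal{P}_j)$ to $\mat{p}_i$. Given any other consistent tuple $(\mat{q}_i)_{i=1\dots n}$, I would first note that $\mat{q}_1, \mat{p}_1 \in \mathcal{R}(\mathcal{P}_1)$, so by Proposition~\ref{prop:symmetric_equal_representatives} there exists $s \in G_\mathcal{R}$ with $\mat{q}_1 = s(\mat{p}_1)$. The first inclusion already proved shows that $(s(\mat{p}_i))_{i=1\dots n}$ is itself a consistent tuple whose first component equals $\mat{q}_1$. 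Applying the unique-closest-point characterization with $i=1$ to both consistent tuples $(\mat{q}_i)$ and $(s(\mat{p}_i))$, each $\mat{q}_j$ and each $s(\mat{p}_j)$ is identified as the unique strictly closest element of $\mathcal{R}(\mathcal{P}_j)$ to $\mat{q}_1 = s(\mat{p}_1)$; hence $\mat{q}_j = s(\mat{p}_j)$ for all $j$, which is the desired conclusion.

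The only subtle step is the symmetric-tuple-is-consistent direction: one must be careful that the symmetry $s$ acts as a bijection on every $\mathcal{R}(\mathcal{P}_j)$ (not just on the single orbit $\mathcal{R}(\mathcal{P}_1)$), which is exactly what Proposition~\ref{prop:symmetric_equal_representatives} guarantees when read at each pose independently, combined with the fact that $s$ does not depend on the pose. Everything else is a direct consequence of the isometry and unique-closest characterizations, so no heavy computation is needed.
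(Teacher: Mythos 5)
Your proof is correct. It rests on the same two pillars as the paper's own argument: (i) tuples obtained by applying a fixed $s \in G_\mathcal{R}$ componentwise are again consistent, because $s$ is an isometry that permutes each set $\mathcal{R}(\mathcal{P}_j)$ (propositions~\ref{prop:symmetric_equal_representatives} and~\ref{prop:automorphism}); and (ii) the strict inequalities in the consistency definition force two consistent tuples that agree in one component to agree in all of them. Where you genuinely diverge is in how the conclusion is extracted from (ii). The paper uses the contrapositive form -- two distinct consistent tuples must differ in \emph{every} component -- to bound the number of consistent tuples by $|\mathcal{R}(\bullet)|$, and then matches this bound against the $|\mathcal{R}(\bullet)|$ symmetric copies of $(\mat{p}_i)_{i=1\dots n}$. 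You instead argue constructively: for an arbitrary consistent tuple $(\mat{q}_i)_{i=1\dots n}$ you exhibit the symmetry $s$ with $s(\mat{p}_1) = \mat{q}_1$ (which exists by proposition~\ref{prop:symmetric_equal_representatives}) and then identify $\mat{q}_j$ with $s(\mat{p}_j)$ for every $j$ via the unique-closest-point characterization applied to the common first component. Your version avoids the cardinality bookkeeping -- in particular it does not need the implicit fact, used by the paper, that distinct elements of $G_\mathcal{R}$ produce distinct symmetric tuples -- at the price of spelling out step (i) in more detail. Your closing remark is also well placed: the fact that a fixed $s$ maps each $\mathcal{R}(\mathcal{P}_j)$ bijectively onto itself follows from proposition~\ref{prop:symmetric_equal_representatives} combined with the group structure of $G_\mathcal{R}$ (composing $s$ with the full group reproduces the full group, hence the full orbit), and this is exactly what both proofs need to make the symmetric tuples consistent.
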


\begin{proof}
Let $(\mat{p}_i)_{i=1 \dots n}, (\mat{q}_i)_{i=1 \dots n} \in \prod_i \mathcal{R}(\mathcal{P}_i)$ be two different consistent tuples. There exists $j \in \llbracket 1, n \rrbracket$ such as $\mat{p}_j \neq \mat{q}_j$, and we know from the consistency definition of $(\mat{p}_i)_{i=1 \dots n}$ and $(\mat{q}_i)_{i=1 \dots n}$ that
\begin{equation}
\forall i \in \llbracket 1, n \rrbracket, 
\left\lbrace
\begin{matrix}
\| \mat{p}_j - \mat{p}_i \| < \| \mat{q}_j - \mat{p}_i \| \\
\| \mat{q}_j - \mat{q}_i \| < \| \mat{p}_j - \mat{q}_i \|.
\end{matrix} \right.
\end{equation}
If there existed $i \in \llbracket 1, n \rrbracket$ such as $\mat{p}_i = \mat{q}_i$, it would lead to the inequality $\| \mat{p}_j - \mat{p}_i \| < \| \mat{p}_j - \mat{p}_i \|$ which is a contradiction. The tuples $(\mat{p}_i)_{i=1 \dots n}$ and $(\mat{q}_i)_{i=1 \dots n}$ are thus pairwise disjoint, and there exists therefore at most $|\mathcal{R}(\bullet)|$ consistent tuples.

There are moreover exactly $|\mathcal{R}(\bullet)|$ different representative combinations symmetric to $(\mat{p}_i)_{i=1 \dots n}$ -- including itself (proposition~\ref{prop:symmetric_equal_representatives}):
\begin{equation}
\left\lbrace (s(\mat{p}_i))_{i=1 \dots n} \middle\vert s \in G_{\mathcal{R}} \right\rbrace.
\end{equation}
Those combinations are consistent as much as $(\mat{p}_i)_{i=1 \dots n}$, since symmetry operations are morphisms (proposition~\ref{prop:automorphism}).
Hence the uniqueness up to symmetry of a consistent tuple of representatives.
\qed
\end{proof}

\begin{proposition}[Invariance of the projection under symmetry of representatives]
\label{prop:invariance_of_projection_to_rep_sym}
Let $\mat{x} \in \mathbb{R}^N$ be a point of the ambient space, and $s \in G_\mathcal{R}$. The projection of $\mat{x}$ and its symmetric $s(\mat{x})$ correspond to the same pose:
\begin{equation}
\proj(s(\mat{x})) = \proj(\mat{x}).
\end{equation}
\end{proposition}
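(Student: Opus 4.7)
The plan is to show that for every candidate pose $\mathcal{P}$ the inner quantity $\min_{\mat{p} \in \mathcal{R}(\mathcal{P})} \|\mat{p} - \mat{x}\|^2$ is left unchanged when $\mat{x}$ is replaced by $s(\mat{x})$; once this is established, the outer $\argmin$ over $\mathcal{P}$ appearing in the definition \eqref{eq:projection} is the same for both inputs, which gives the conclusion directly.

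First I would use proposition~\ref{prop:automorphism} to rewrite, for any $\mat{p} \in \mathcal{R}(\mathcal{P})$,
\begin{equation}
\|\mat{p} - s(\mat{x})\|^2 = \|s^{-1}(\mat{p}) - \mat{x}\|^2,
\end{equation}
exploiting that $s$ is a bijective isometry and hence so is $s^{-1} \in G_\mathcal{R}$. Thus
\begin{equation}
\min_{\mat{p} \in \mathcal{R}(\mathcal{P})} \|\mat{p} - s(\mat{x})\|^2 = \min_{\mat{p} \in \mathcal{R}(\mathcal{P})} \|s^{-1}(\mat{p}) - \mat{x}\|^2.
\end{equation}

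Next I would invoke proposition~\ref{prop:symmetric_equal_representatives} to argue that $\mathcal{R}(\mathcal{P})$ is stable under the action of $G_\mathcal{R}$: for any $\mat{p} \in \mathcal{R}(\mathcal{P})$, the orbit $\{s'(\mat{p}) \mid s' \in G_\mathcal{R}\}$ equals $\mathcal{R}(\mathcal{P})$, so in particular $s^{-1}(\mat{p}) \in \mathcal{R}(\mathcal{P})$. Since $s^{-1}$ is a bijection of the finite set $\mathcal{R}(\mathcal{P})$ onto itself, the reindexing $\mat{q} = s^{-1}(\mat{p})$ simply permutes the minimization domain, giving
\begin{equation}
\min_{\mat{p} \in \mathcal{R}(\mathcal{P})} \|s^{-1}(\mat{p}) - \mat{x}\|^2 = \min_{\mat{q} \in \mathcal{R}(\mathcal{P})} \|\mat{q} - \mat{x}\|^2.
\end{equation}

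Combining the two equalities shows that the inner objective in \eqref{eq:projection} takes the same value at $\mat{x}$ and at $s(\mat{x})$ for every pose $\mathcal{P}$, so the set of minimizing poses coincides, and therefore $\proj(s(\mat{x})) = \proj(\mat{x})$. There is no real obstacle here; the only point requiring some care is justifying that $\mathcal{R}(\mathcal{P})$ is globally invariant under each element of $G_\mathcal{R}$, but this is exactly the content of proposition~\ref{prop:symmetric_equal_representatives} applied to an arbitrary representative of $\mathcal{P}$.
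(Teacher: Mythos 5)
Your proof is correct, but it follows a genuinely different route from the paper's. You argue abstractly: using proposition~\ref{prop:automorphism} you transfer the isometry $s$ onto the representative set, then use proposition~\ref{prop:symmetric_equal_representatives} to see that $s^{-1}$ merely permutes the finite set $\mathcal{R}(\mathcal{P})$, so the objective $\mathcal{P} \mapsto \min_{\mat{p} \in \mathcal{R}(\mathcal{P})} \|\mat{p} - \mat{x}\|^2$ in \eqref{eq:projection} is pointwise unchanged when $\mat{x}$ is replaced by $s(\mat{x})$, and the two $\argmin$ sets coincide. The paper instead dismisses the revolution and 2D cyclic cases as easily verified and works out the finite-proper-symmetry-group case computationally: it writes $s_{\mat{G}}(\mat{x}) = (\vect(\mat{M}\mat{G})^\top, \mat{t}^\top)^\top$, uses lemma~\ref{lem:commutation_g_lambda} to turn $\mat{M}\mat{G}\matLambda$ into $\mat{M}\matLambda\mat{G}$, reads off a new SVD $\mat{U}\mat{D}(\mat{G}^\top\mat{V})^\top$, and concludes that the projected rotation becomes $\hat{\mat{R}}\mat{G}$, which represents the same pose. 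Your argument is more uniform (no case analysis, no SVD machinery) and arguably cleaner, since it relies only on the structural facts already proved about $G_\mathcal{R}$; the cost is that it gives no explicit description of how the projection formula transforms, which the paper's computation provides and which can be useful in implementation. One small point worth making explicit in your write-up is that $s^{-1} \in G_\mathcal{R}$, which is guaranteed by the group property of $G_\mathcal{R}$ and is what licenses both the isometry step and the reindexing step; otherwise the argument is complete.
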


\begin{proof}
This result can be easily verified in the case of a revolution object with rotoreflection symmetry or a 2D cyclic object. Therefore, we only discuss the case of an object with a finite proper symmetry group.

Let $\mat{x} \in \mathbb{R}^{12}$ be a point of the ambient space, and $s_{\mat{G}} \in G_\mathcal{R}$, where $\mat{G} \in G$. We split $\mat{x}$ into two parts $\mat{M} \in \mathcal{M}_{3, 3}(\mathbb{R})$ and $\mat{t} \in \mathbb{R}^3$ such as
\begin{equation}
\mat{x} = (\vect(\mat{M})^\top, \mat{t}^\top)^\top.
\end{equation}
The symmetric of $\mat{x}$ can thus by definition be written as
\begin{equation}
s_{\mat{G}}(\mat{x}) = (\vect(\mat{M} \mat{G})^\top, \mat{t}^\top)^\top.
\end{equation}

The projection of $\mat{x}$ onto the pose space consists in the pose $[ \hat{\mat{R}}, \mat{t} ]$, with $\hat{\mat{R}} = \mat{U} \mat{S} \mat{V}^\top$, considering a SVD decomposition $\mat{M} \matLambda = \mat{U} \mat{D} \mat{V}^\top$ and using the same conventions for $\mat{U}, \mat{V}, \mat{S}$ and $\mat{D}$ than in subsection~\ref{sec:projection_finite_proper_symmetry_group} in which we detailed this result.

Similarly, the projection of $s_{\mat{G}}(\mat{x})$ can be deduced from a SVD decomposition of $\mat{M} \mat{G} \matLambda$.
We know from lemma~\ref{lem:commutation_g_lambda} that this latter term can be rearranged into
\begin{equation}
\mat{M} \mat{G} \matLambda = \mat{M} \matLambda \mat{G}.
\end{equation}
Thus, injecting the previous decomposition into this expression enables us to exhibit a SVD decomposition of $\mat{M} \mat{G} \matLambda$:
\begin{equation}
\begin{aligned}
\mat{M} \mat{G} \matLambda &= \mat{U} \mat{D} \mat{V}^\top \mat{G} \\
&=\mat{U} \mat{D} \tilde{\mat{V}}^\top
\end{aligned}
\end{equation}
where $\tilde{\mat{V}} = \mat{G}^\top \mat{V}$.
Because $G$ is a rotation matrix,
\begin{equation}
\begin{aligned}
\det(\tilde{\mat{V}}) &= \det(\mat{G})  \det(\mat{V}) \\
&= \det(\mat{V})
\end{aligned}
\end{equation}
and the projection of $s_{\mat{G}}(\mat{x})$ is therefore
\begin{equation}
\begin{aligned}
\proj(s_{\mat{G}}(\mat{x})) &= [\mat{U} \mat{S} \tilde{\mat{V}}^\top, \mat{t}] \\
&=[\hat{\mat{R}} \mat{G}, \mat{t}].
\end{aligned}
\end{equation}
Since $\mat{G}$ is a proper symmetry of the object,
\begin{equation}
[\hat{\mat{R}} \mat{G}, \mat{t}] = [\hat{\mat{R}}, \mat{t}]
\end{equation}
which concludes the proof.
\qed
\end{proof}

Based on those properties, it is possible to propose an unambiguous estimation of the mean, as follows:
\myframedminipage{%
\begin{definition}[Mean estimation]
Given a consistent tuple $(\mat{p}_i)_{i=1 \dots n} \in \prod_i \mathcal{R}(\mathcal{P}_i)$ of representatives of the poses $S = \left\lbrace \mathcal{P}_i \right\rbrace_{i=1 \dots n}$, we define as estimation of the mean of those poses
\begin{equation}
\widehat{\mean}(S) \triangleq \proj \left( \cfrac{\sum_i w_i \mat{p}_i}{\sum_i w_i} \right).
\end{equation}
\end{definition}}

\begin{proof}
We show here that this expression is well-defined, \ie that it does depends on the consistent tuple of representatives considered. Let $(\mat{p}_i)_{i=1 \dots n} \in \prod_i \mathcal{R}(\mathcal{P}_i)$ be a consistent tuple of representatives. The consistent tuples are the tuples symmetric to this one (proposition~\ref{prop:uniqueness_up_to_symmetry_consistent_tuple}):
\begin{equation}
\left\lbrace (s(\mat{p}_i))_{i=1 \dots n} \middle\vert s \in G_\mathcal{R} \right\rbrace. 
\end{equation}

Let us therefore consider an arbitrary consistent tuple $(s(\mat{p}_i))_{i=1 \dots n}$, with $s \in  G_\mathcal{R}$ and show that it leads to the same estimation $\mathcal{M}$ of the average pose than an estimation performed with $(\mat{p}_i)_{i=1 \dots n}$.

By definition,
\begin{equation}
\mathcal{M} = \proj \left( \cfrac{\sum_i w_i s(\mat{p}_i)}{\sum_i w_i} \right).
\end{equation}
Because of the linearity of symmetries (proposition~\ref{prop:linearity_representative_symmetry}), the arithmetic mean of $(s(\mat{p}_i))_{i=1 \dots n}$ corresponds to the symmetric of the arithmetic mean of $(\mat{p}_i)_{i=1 \dots n}$:
\begin{equation}
\cfrac{\sum_i w_i s(\mat{p}_i)}{\sum_i w_i} = s \left( \cfrac{\sum_i w_i \mat{p}_i}{\sum_i w_i} \right),
\end{equation}
hence this expression of $\mathcal{M}$:
\begin{equation}
\mathcal{M} = \proj \left( s \left( \cfrac{\sum_i w_i \mat{p}_i}{\sum_i w_i} \right) \right).
\end{equation}
Invariance of the projection under symmetry of representatives (proposition~\ref{prop:invariance_of_projection_to_rep_sym}) enables to conclude this proof, since
\begin{equation}
\mathcal{M} = \proj \left( \cfrac{\sum_i w_i \mat{p}_i}{\sum_i w_i} \right).
\end{equation}
\qed
\end{proof}

Such estimation corresponds most likely to the actual mean~\eqref{eq:mean_definition}, unfortunately we do not have a proof of this conjecture.


\subsection{Sufficient conditions of consistency}

While the average of poses can be easily estimated given a consistent combination of representatives, there are however cases where no such combination exist, \eg when trying to average poses spread out over the set of orientations such as illustrated on figure~\ref{fig:averaging_poses}abc. In such a case, one might have to perform an exhaustive evaluation of the Fréchet variance for the different combinations in order to pick the one corresponding to the actual mean. Fortunately, this case is of limited practical interest, as the mean makes little sense.

Consistency is nonetheless not trivial to establish in the general case, and therefore in this section, we provide simple sufficient conditions for a combination of representatives to be consistent.
Consistency is in particular satisfied when the considered representatives are \emph{close enough} one an other, relatively to the distance between their representatives:
\myframedminipage{%
\begin{proposition}[Close-enough representatives]
\label{prop:condition_representative_combination_consistency}
Let $(\mat{p}_i)_{i=1 \dots n} \in \prod_i \mathcal{R}(\mathcal{P}_i)$ be a tuple of pose representatives. If its elements are closer one another than $T/2$, \ie if
\begin{equation}
\label{eq:condition_representative_combination_consistency}
\forall (i,j) \in \llbracket 1, n \rrbracket^2, \| \mat{p}_i - \mat{p}_j \| < T/2,
\end{equation}
then this tuple is consistent.
\end{proposition}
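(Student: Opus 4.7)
The plan is to prove this by a direct triangle inequality argument, using the definition of $T$ (Definition~\ref{def:definition_T}) as the minimum separation between distinct representatives of the same pose.

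First, I would fix an arbitrary pair $(i,j) \in \llbracket 1, n \rrbracket^2$ and an arbitrary element $\mat{q}_j \in \mathcal{R}(\mathcal{P}_j) \setminus \{\mat{p}_j\}$. The goal reduces to showing $\| \mat{p}_j - \mat{p}_i \| < \| \mat{q}_j - \mat{p}_i \|$. The key observation is that $\mat{p}_j$ and $\mat{q}_j$ are two distinct representatives of the same pose $\mathcal{P}_j$, so by the very definition of $T$ we have $\| \mat{q}_j - \mat{p}_j \| \geq T$ (using here that $T$ does not depend on the pose chosen, which was established when $T$ was introduced).

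Next, I would invoke the reverse triangle inequality applied to the triple $(\mat{q}_j, \mat{p}_j, \mat{p}_i)$:
\begin{equation*}
\| \mat{q}_j - \mat{p}_i \| \;\geq\; \| \mat{q}_j - \mat{p}_j \| - \| \mat{p}_j - \mat{p}_i \| \;\geq\; T - \| \mat{p}_j - \mat{p}_i \|.
\end{equation*}
The hypothesis \eqref{eq:condition_representative_combination_consistency} gives $\| \mat{p}_j - \mat{p}_i \| < T/2$, hence $T - \| \mat{p}_j - \mat{p}_i \| > T/2 > \| \mat{p}_j - \mat{p}_i \|$, so combining with the previous chain yields $\| \mat{q}_j - \mat{p}_i \| > \| \mat{p}_j - \mat{p}_i \|$, which is exactly the consistency condition for the pair $(i,j)$ and the alternative representative $\mat{q}_j$.

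Since $(i,j)$ and $\mat{q}_j$ were arbitrary, the tuple is consistent, and the proof is complete. There is no real obstacle here — the argument is a one-line reverse triangle inequality. The only conceptual subtlety worth flagging in the write-up is that the threshold $T/2$ is sharp in this argument: one needs strict inequality in the hypothesis so that both $T - \|\mat{p}_j - \mat{p}_i\|$ and $\|\mat{p}_j - \mat{p}_i\|$ sit on opposite sides of $T/2$.
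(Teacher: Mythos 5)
Your proof is correct and is essentially identical to the paper's: both rest on combining the triangle inequality applied to $(\mat{q}_j, \mat{p}_j, \mat{p}_i)$ with the lower bound $\|\mat{q}_j - \mat{p}_j\| \geq T$ from Definition~\ref{def:definition_T} and the hypothesis $\|\mat{p}_j - \mat{p}_i\| < T/2$. Your remark on why the threshold $T/2$ is exactly what makes the strict inequality go through is a nice touch but does not change the substance.
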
}

\begin{proof}
Let us consider a tuple $(\mat{p}_i)_{i=1 \dots n} \in \prod_i \mathcal{R}(\mathcal{P}_i)$ that satisfies the condition~\eqref{eq:condition_representative_combination_consistency}. For any $(i,j) \in \llbracket 1, n \rrbracket^2$ and $\mat{q}_j \in \mathcal{R}(\mathcal{P}_j) \setminus \left\lbrace \mat{p}_j \right\rbrace$, the below properties hold true:
\begin{equation}
\!\begin{cases}
\|\mat{q}_j - \mat{p}_j \| \!\leq \!\| \mat{p}_j - \mat{p}_i \| \!+ \!\|\mat{q}_j - \mat{p}_i \| & \!\!\!\!\!\text{(triangle inequality)}\\
\|\mat{q}_j-\mat{p}_j \| \!\geq \!T & \!\!\!\!\!\text{(definition~\ref{def:definition_T})}\\
\| \mat{p}_j - \mat{p}_i \| \!< \!T/2.  & \!\!\!\!\!\text{(condition~\eqref{eq:condition_representative_combination_consistency})}\\
\end{cases}
\end{equation}
From those inequalities, we deduce that
\begin{equation}
\label{eq:consistency_proof}
\|\mat{p}_j-\mat{p}_i\| < \|\mat{q}_j-\mat{p}_i\|,
\end{equation}
hence the consistency of $(\mat{p}_i)_{i=1 \dots n}$. \qed
\end{proof}

A special case of practical value of this criterion is obtained when the considered representatives are included in a ball small enough. It is illustrated on figure~\ref{fig:averaging_poses}d, and is exploited in our application example section~\ref{sec:application_example}.
\myframedminipage{%
\begin{proposition}[Representatives within a ball]
\label{prop:condition_representative_combination_consistency_ball}
Let $(\mat{p}_i)_{i=1 \dots n} \in \prod_i \mathcal{R}(\mathcal{P}_i)$ be a tuple of pose representatives. If those representatives lie in a ball of radius $T/4$, \ie if
\begin{equation}
\exists \mat{c} \in \mathbb{R}^N / \forall i \in \llbracket 1, n \rrbracket^2, \| \mat{p}_i - \mat{c} \| < T/4,
\end{equation}
then this tuple is consistent.
\end{proposition}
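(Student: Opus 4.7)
The plan is to reduce this statement directly to the previous Proposition~\ref{prop:condition_representative_combination_consistency}, which already establishes consistency from a pairwise closeness condition of the form $\| \mat{p}_i - \mat{p}_j \| < T/2$. So the only thing I would need to show is that ``all representatives lying in a common ball of radius $T/4$'' implies ``all pairwise distances are strictly less than $T/2$''.

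The key step is a one-line triangle inequality. Assuming the hypothesis, there exists $\mat{c} \in \mathbb{R}^N$ such that $\| \mat{p}_i - \mat{c} \| < T/4$ for every $i \in \llbracket 1, n \rrbracket$. For any indices $(i,j) \in \llbracket 1, n \rrbracket^2$, applying the triangle inequality through the center $\mat{c}$ yields
\begin{equation}
\| \mat{p}_i - \mat{p}_j \| \leq \| \mat{p}_i - \mat{c} \| + \| \mat{c} - \mat{p}_j \| < \frac{T}{4} + \frac{T}{4} = \frac{T}{2}.
\end{equation}
Hence the hypothesis of Proposition~\ref{prop:condition_representative_combination_consistency} is satisfied, from which consistency of $(\mat{p}_i)_{i=1 \dots n}$ follows immediately.

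There is no real obstacle here: the result is a straightforward specialization of the previous proposition, packaged as an easily checkable sufficient condition (the existence of an enclosing small ball) that is often more convenient in practice than verifying all pairwise distances. The only subtlety worth flagging in the proof is that the strict inequality in the hypothesis propagates through the triangle inequality to give a strict inequality on pairwise distances, which is exactly what Proposition~\ref{prop:condition_representative_combination_consistency} requires.
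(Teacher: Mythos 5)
Your proof is correct and follows exactly the same route as the paper's: a triangle inequality through the ball's center $\mat{c}$ to bound each pairwise distance by $T/4 + T/4 < T/2$, followed by an appeal to Proposition~\ref{prop:condition_representative_combination_consistency}. Nothing is missing.
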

}

\begin{proof}
A tuple satisfying this condition also satisfies the one of proposition~\ref{prop:condition_representative_combination_consistency} because of the triangle inequality, since for any $(i,j) \in \llbracket 1, n \rrbracket^2$,
\begin{equation}
\begin{aligned}
\| \mat{p}_i - \mat{p}_j \| &\leq \| \mat{p}_i - \mat{c} \| + \| \mat{p}_j - \mat{c} \| \\
&< T/4 + T/4.
\end{aligned}
\end{equation}
\qed
\end{proof}

These sufficient conditions can easily be generalized by considering that only the orientation parts of poses representatives have actually to be close enough one an other. This is a direct consequence to the fact that the pose space can be decomposed into a Cartesian product of a position and an orientation space, and that symmetry considerations only affect orientation for a bounded object.
 
\bibliographyInSubfile

\section{Local properties}
\label{sec:local_properties}

While we focus in this article on global metric properties, the proposed distance can be shown locally equivalent to a Riemannian metric over the pose space manifold. We therefore briefly discuss in this section those aspects.

\paragraph{Object with finite proper symmetry group}
In the case of a 3D object of finite proper symmetry group, the pose space can be seen as a manifold of dimension 6 (3 for translation, and 3 for rotation).
Let us consider two poses of such an object, and $\mat{T}_1$ and $\mat{T}_2 \in SE(3)$ two associated rigid transformations, such as
\begin{equation}
\mydist([\mat{T}_1], [\mat{T}_2]) = \mydistnosym(\mat{T}_1, \mat{T}_2),
\end{equation}
\ie such as $\mat{T}_1^{-1} \circ \mat{T}_2$ to be a shortest displacement from pose $[\mat{T}_1]$ to pose $[\mat{T}_2]$. 
If the angle of the relative rotation between $\mat{T}_1$ and $\mat{T}_2$ is small, the corresponding displacement of a point $\mat{x} \in \mathbb{R}^3$ of the object between those two poses can be approximated by introducing the displacement vector $\mat{v} \in \mathbb{R}^3$ and the rotation vector $\mat{\omega} \in \mathbb{R}^3$ between $\mat{T}_1$ and $\mat{T}_2$  as follows:
\begin{equation}
(\mat{T}_1^{-1} \circ \mat{T}_2) (\mat{x}) \underset{\theta \rightarrow 0}{\sim} \mat{x} + \omega \times \mat{x} + \mat{v}.
\end{equation}
The distance between the two poses can then be approximated by
\begin{equation}
\mydist([\mat{T}_1], [\mat{T}_2]) \underset{\theta \rightarrow 0}{\sim} \cfrac{1}{S} \int_{\mathcal{S}} \mu(\mat{x}) \| \omega \times \mat{x} + \mat{v} \|^2 ds.
\end{equation}
When considering an infinitesimal displacement between $\mat{T}_1$ and $\mat{T}_2$, and assimilating $\mat{v}$ and $\mat{\omega}$ to translational and angular velocities, this expression corresponds to the notion of kinetic energy (up to a factor $2/S$).

It is also a quadratic form, and therefore the proposed distance is locally equivalent to a Riemannian distance associated with the following metric tensor $g$, defined for any two tangent vectors $(\mat{v}_1^\top, \mat{\omega}_1^\top)^\top$, $(\mat{v}_2^\top, \mat{\omega}_2^\top)^\top \in \mathbb{R}^3 \times \mathbb{R}^3$ by
\begin{multline}
g \left( (\mat{v}_1^\top, \mat{\omega}_1^\top)^\top, (\mat{v}_2^\top, \mat{\omega}_2^\top)^\top \right) \\
\triangleq \cfrac{1}{S} \int_{\mathcal{S}} \mu(\mat{x}) ( \mat{\omega}_1 \times \mat{x} + \mat{v}_1 )^\top (\mat{\omega}_2 \times \mat{x} + \mat{v}_2) ds.
\end{multline}
This Riemannian metric was already described in the litterature~\citep{zefran1996-a, lin2000}, and \citet{belta2002} notably suggested its use for interpolation on SE(3).
If the covariance matrix considered for the object is isotropic  -- \ie $\matLambda = \lambda \mat{I}$, with $\lambda \in \mathbb{R}^{+*}$, which is the case \eg for an object of spherical or cubic shape --, the proposed distance is moreover locally equivalent to the usual Riemannian distance~\eqref{eq:distance_classical_riemannian} over $SE(3)$.

\paragraph{Other objects classes}
Similarly, the pose space of a 3D revolution object can be seen as a 5D manifold. The proposed distance for such an object is indeed locally equivalent to a Riemannian distance over $\mathbb{S}^2 \times \mathbb{R}^3$ induced by embedding $\mathbb{S}^2$ as a sphere in a 3D Euclidean space, and considering the usual Euclidean distance regarding the translation part.

Poses of a 2D object with a finite proper symmetry group lie likewise on a 3D manifold, and the proposed distance is locally equivalent to a Riemannian distance over $\mathbb{S}^1 \times \mathbb{R}^2$  induced by embedding $\mathbb{S}^1$ as a circle in a 2D Euclidean space, \ie by considering as distance between two infinitesimally close orientations the angle of the relative rotation between those two (up to a scaling factor).

Finally, pose spaces of spherical 3D objects and circular 2D ones are respectively equivalent to $\mathbb{R}^3$ and $\mathbb{R}^2$, associated with the Euclidean distance.

\paragraph{Remarks} Despite this local equivalence, the proposed pose spaces can be topologically different from the manifolds evoked above, because of the discrete symmetries of the object. As an example, the proposed pose space for a revolution object with rotoreflection invariance is actually homeomorphic with $\mathbb{RP}^2 \times \mathbb{R}^3$, where $\mathbb{RP}^2$ is the real projective plane -- \ie a sphere with antipodal points associated -- instead of $\mathbb{S}^2 \times \mathbb{R}^3$.

Moreover, except for 3D spherical or 2D circular objects, the  distance~\eqref{eq:distance} is globally different from these Riemannian metrics. Compared to the proposed distance, those latter have several drawbacks.

They are indeed more expensive to estimate since they include trigonometric computations (\eg for distance~\eqref{eq:distance_classical_riemannian}). There is even no known closed-form expression for such distance in the case of an arbitrary 3D object of finite proper symmetry group. Moreover, they do not benefit from the same nice computational properties than the proposed distance regarding the problem of pose averaging, for which they may require iterative approaches~\citep{pennec1998}.
But more fundamentally, and as discussed in the introduction, our distance was proposed to quantify the \emph{similarity} between poses at a global scale. The notion of \emph{motion} between two poses, expressed in a Riemannian distance, is therefore irrelevant to the kind of applications we are interested in.

\bibliographyInSubfile

\section{Application example}
\label{sec:application_example}

In this section, we illustrate the use of our metric on the problem of rigid object instances detection and pose estimation. Given an input depth image of a scene containing potentially multiple instances of a rigid object, our goal consists in recovering the poses of these instances. We perform experiments with three different objects of different symmetry classes among those shown in table~\ref{tab:symmetries_classes} to illustrate the versatility of our approach:
\begin{itemize}
\item the Stanford bunny -- an object without proper symmetry.
\item a candlestick --  considered as a revolution object without rotoreflection invariance.
\item a cartoon-like space rocket -- which is invariant by rotation of 120\textdegree{} along its axis.
\end{itemize}
For practical reasons, our example is based on synthetic 3D data, depicted in figure~\ref{fig:pose_recovery}b. It was produced using an off-the-shelf stereo matching algorithm, on a pair of images of a virtual scene lit by a pseudo-random pattern projector (figure~\ref{fig:pose_recovery}a). Those were synthesized with the rendering engine Blender Cycles~\citep{blender}.
The reader is referred to the work of \citet{Bregier_2017_ICCV_Workshops} for a more quantitative analysis of the interest of the proposed distance for pose estimation.

\subsection{Mean Shift for pose recovery}
\label{sec:mean_shift}

\begin{figure*}
\centering
\newlength{\myfigwidth}
\setlength{\myfigwidth}{4cm}

\newlength{\myfigheight}
\setlength{\myfigheight}{3cm}

\begin{minipage}[t]{8cm}
\centering
\includegraphics[height=\myfigheight]{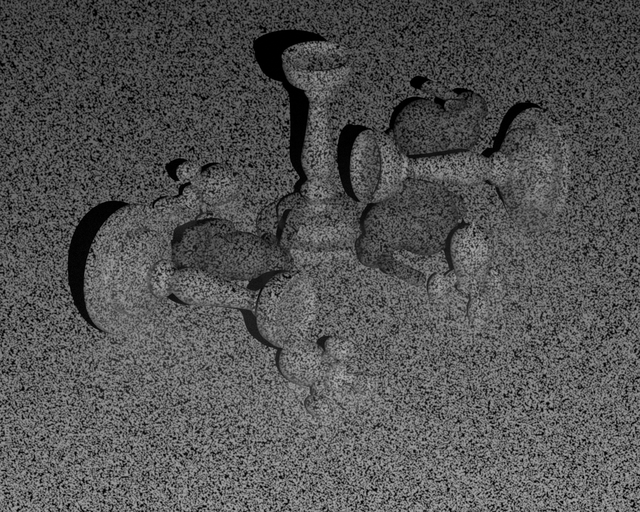}
\includegraphics[height=\myfigheight]{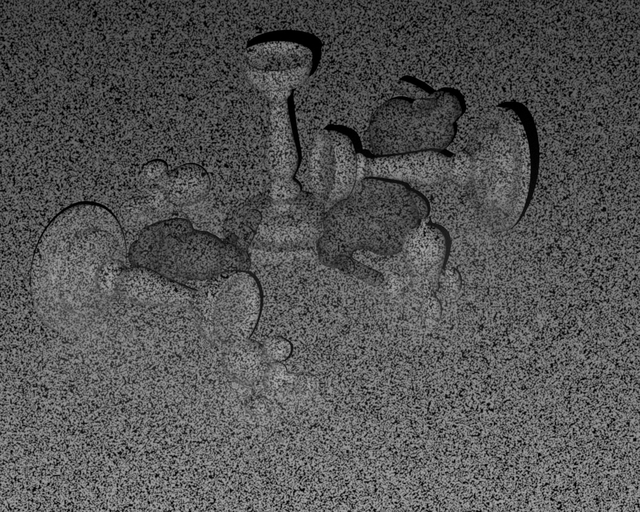}\\
(a) Stereo pair used for 3D reconstruction.
\end{minipage}
\begin{minipage}[t]{8cm}
\centering
\includegraphics[height=\myfigheight]{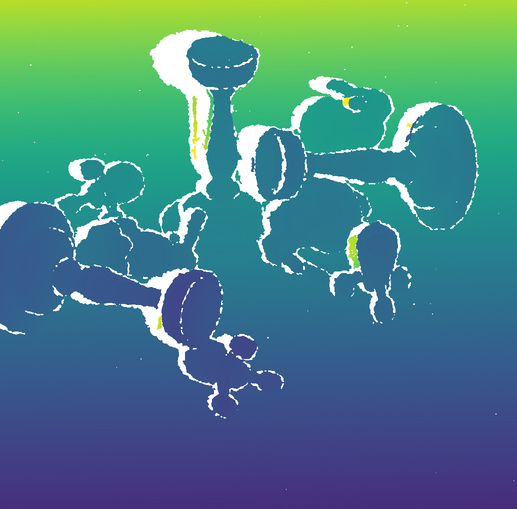}
\includegraphics[height=\myfigheight]{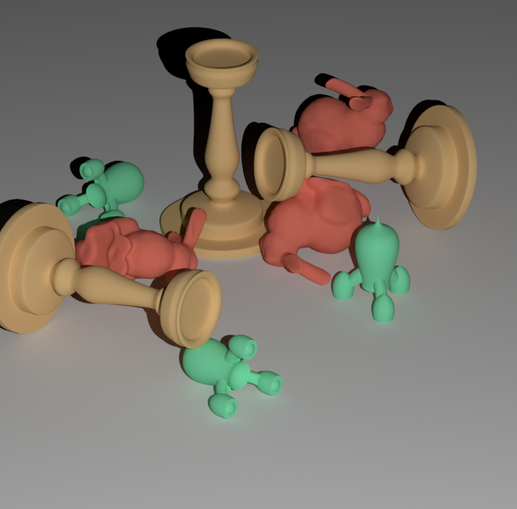}\\
(b) Reconstructed 3D range data (RGB channel solely for visualization purposes).
\end{minipage}

\includegraphics[width=\myfigwidth]{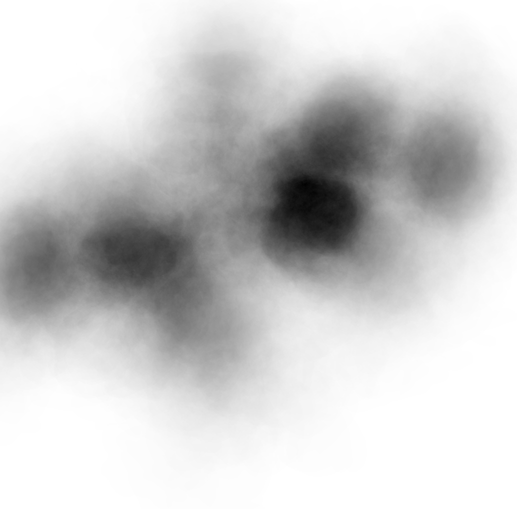}
\includegraphics[width=\myfigwidth]{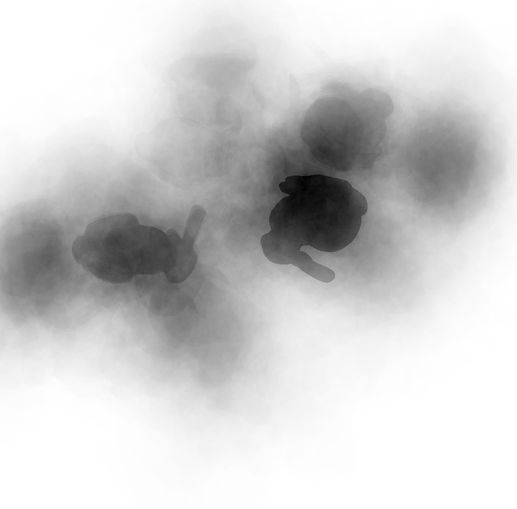}
\includegraphics[width=\myfigwidth]{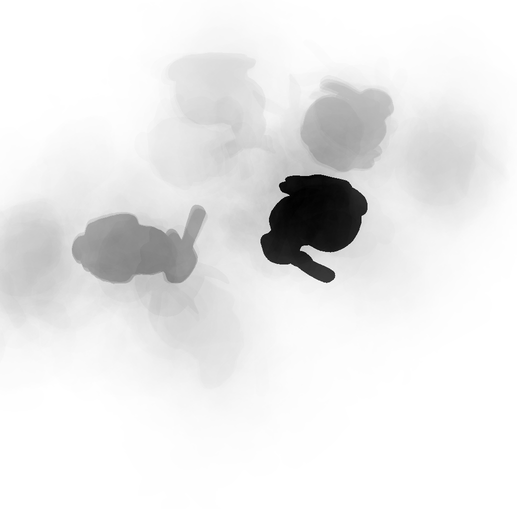}
\includegraphics[width=\myfigwidth]{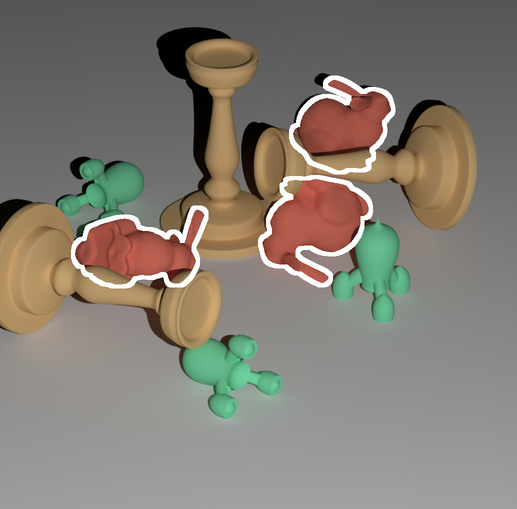}

\includegraphics[width=\myfigwidth]{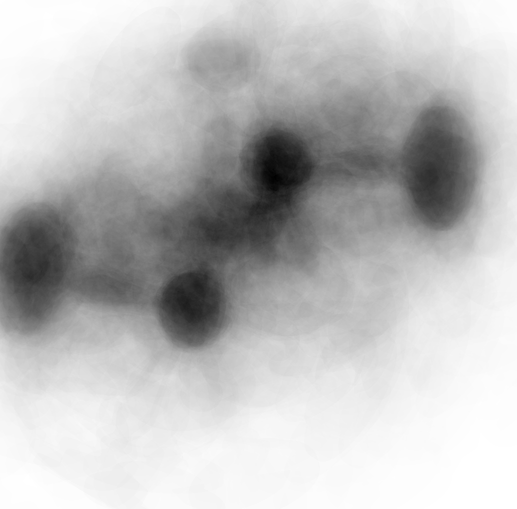}
\includegraphics[width=\myfigwidth]{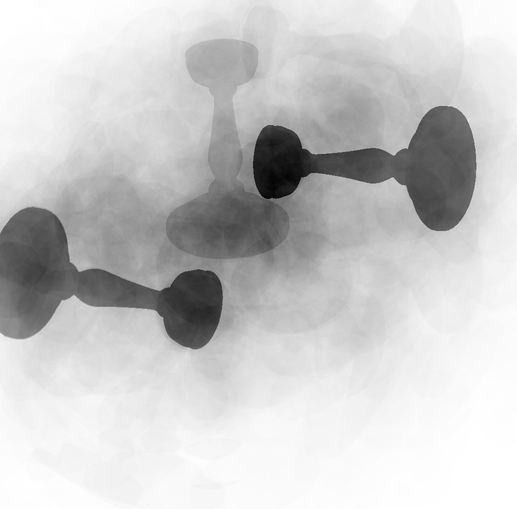}
\includegraphics[width=\myfigwidth]{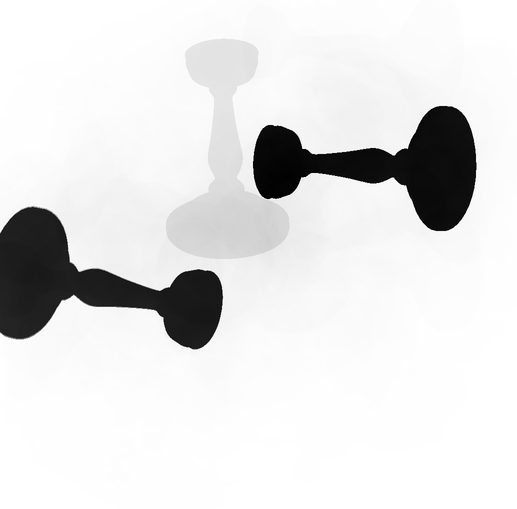}
\includegraphics[width=\myfigwidth]{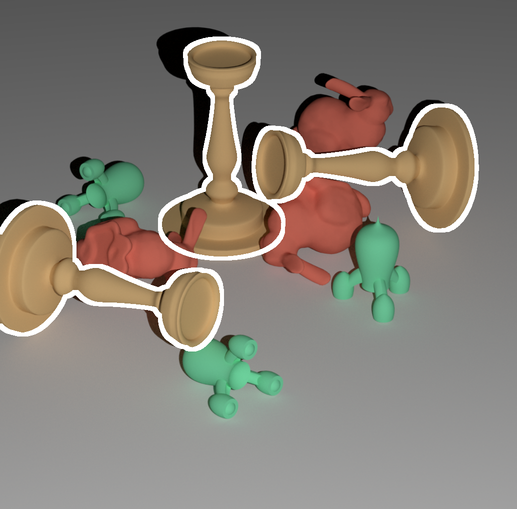}

\includegraphics[width=\myfigwidth]{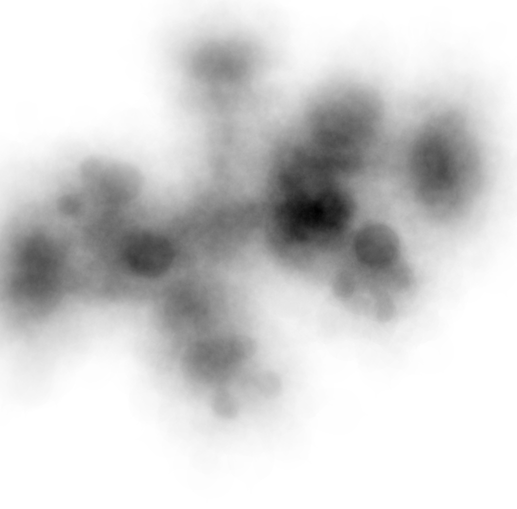}
\includegraphics[width=\myfigwidth]{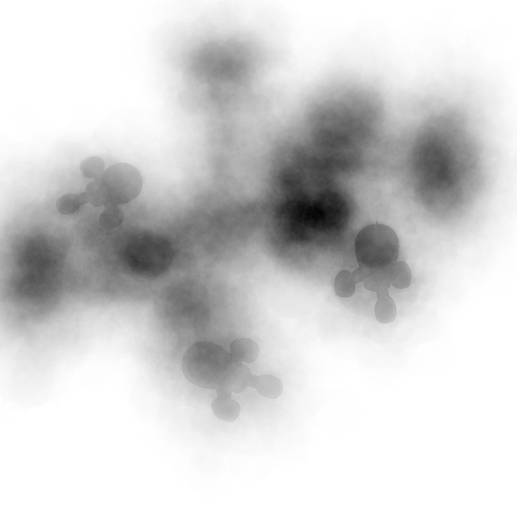}
\includegraphics[width=\myfigwidth]{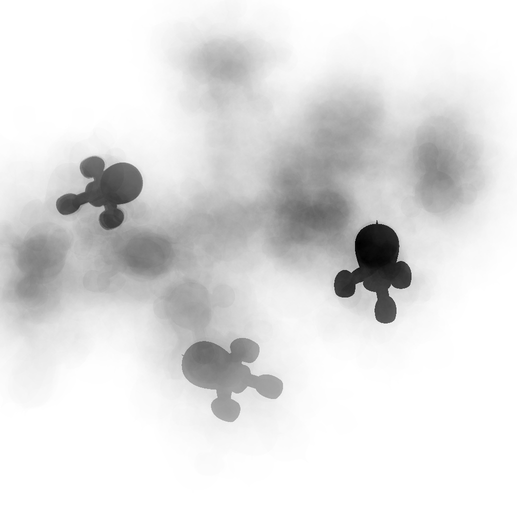}
\includegraphics[width=\myfigwidth]{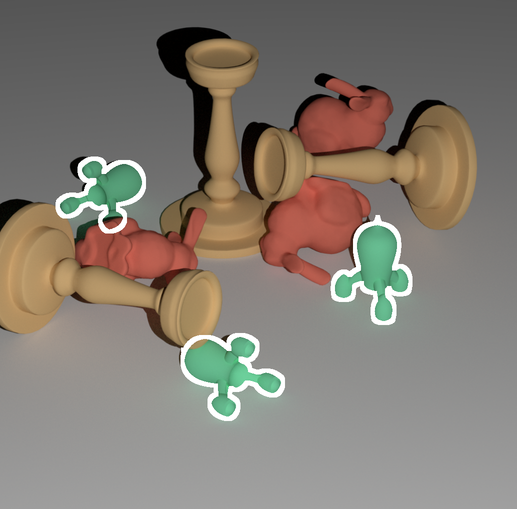}

(c) From left to right:
Pose distribution generated from the input 3D data using the method of~\citet{drost2010}.
Shifted poses using Mean Shift.
Shifted poses weighted by the density of the initial distribution.
Recovered poses of object instances. A pose distribution is represented by accumulating the 2D silhouettes on the image plane of the object at its different poses (the more silhouettes a pixel belongs to, the darker it is). The process is performed independently for the three different objects.

\caption{\label{fig:pose_recovery}Application example: object instances detection and pose estimation via a Mean Shift procedure to extract the main modes of an initial pose distribution. Illustration with three different objects of various symmetry properties.}
\end{figure*}

Among the existing work adapted for a depth image input, some popular approaches \citep{drost2010,fanelli2011,tejani2014} process in a bottom-up approach, generating votes for poses candidates in a Hough-like manner and, identifying those votes to the sampling of a pose distribution, look for its main modes which hopefully correspond to the actual poses of object instances. We place ourselves within such a modes-seeking framework.

Modes detection in a distribution on the pose space is not an easy problem. Grid-based accumulation techniques traditionally used in Hough-like methods are unpractical due to the high dimension of the pose space, except through the use of a sparse structure \citep{rodrigues2012} or solely as a preprocessing technique used on a few dimensions \citep{drost2010,rodrigues2012,tejani2014}. A popular approach for modes detection more adapted to high dimensional problems is Mean-Shift, a local and non parametric iterative method based on a kernel density estimation of the probability distribution. Unfortunately, this method is designed for vector spaces, which the pose space is not.
\citet{fanelli2011} and \citet{tejani2014} used nonetheless Mean Shift with a global parametrization of the pose space, but such approach suffers from the intrinsic drawbacks we evoked in the introduction.
\citet{tuzel2005} and later \citet{subbarao2006} proposed versions of Mean Shift for Lie groups and Riemannian manifolds that might circumvent those issues, but their approach is computationally expensive as it requires at each iteration to map the samples to the local tangent plane of the point to shift, compute the shift vector through the classical Mean Shift procedure and map it back to get the updated point. 

In this example, we show how standard Mean Shift algorithm can be adapted to perform modes detection on the pose space quite efficiently through the use of our distance, even for objects showing proper symmetries properties.

Given an input depth map, we generate a set of votes for object poses $\left\lbrace \mathcal{P}_i \right\rbrace_{i=1,\ldots,n}$ thanks to our own implementation of the method of \citet{drost2010}. It is based on a local aggregation scheme, and is performed by matching geometric features extracted from the input data with those extracted from a model of the object.
We use a sampling rate of $\tau_d = 0.025$ and consider every samples as reference points -- the interested reader is referred to the original description of~\citet{drost2010} regarding the meaning of those parameters.
This initial pose distribution is quite spread, as can be observed from the \emph{blurred} effect of the representation figure~\ref{fig:pose_recovery}c, column~1.

We then consider each of those votes as a starting pose for Mean Shift. A usual practice to speed up drastically computations when seeking modes is to consider only a subset taken from the votes as starting points \eg by random sampling, but we do not use such approach here to avoid the introduction of additional parameters.
For each of the poses to shift, we process iteratively following the usual Mean Shift procedure. Considering a flat Mean Shift kernel of radius $r$, we find the poses within the set of votes that are within a radius $r$ of the current pose, compute their mean, shift the current pose to this mean and repeat until convergence.

We choose arbitrarily the radius $r$ of our kernel to correspond to 1.5 times the smallest eigenvalue of the matrix $\matLambda$ for the bunny and the candlestick -- that is roughly 75\% of the smallest typical dimension of the object. 

For the rocket, we use a smaller radius of $\sqrt{3}/2$ times this eigenvalue, which is the greatest value that satisfies the condition of proposition~\ref{prop:condition_representative_combination_consistency_ball} (see appendix~\ref{ap:T_value_application_example}). A bigger radius value may also experimentally give good results, but does not provide the same theoretical guarantees.

Indeed, such choices of radii satisfy to the condition $r < T/4$, where $T$ is the minimum distance between representatives of the same pose (see definition~\ref{def:definition_T}). We know from section~\ref{sec:proximity_query} that given a representative $\mat{p}$ of the pose $\mathcal{P}$ to shift, poses closer than $r$ from $\mathcal{P}$ are the poses who have one representative  within a ball of radius $r$ around $\mat{p}$. These representatives can be retrieved efficiently through an off-the-shelf \emph{radius search} method. Moreover, because $r$ is chosen strictly smaller than $T/2$, the representatives retrieved by such query necessarily correspond to different poses and there are therefore no duplicates.
Furthermore, these representatives lie in a ball of radius $T/4$, therefore we have the insurance from proposition~\ref{prop:condition_representative_combination_consistency_ball} that we can unambiguously estimate the average of the corresponding poses as the projection on the pose space of the arithmetic mean of these representatives.

As a consequence, adapting the Mean Shift procedure to our pose space given the chosen radius only requires an additional step compared to the usual procedure in a vector space. This step consists in projecting the arithmetic mean of the retrieved representatives on the pose space, which can be performed as described in section~\ref{sec:projection}. Pseudo-code of the adapted Mean Shift algorithm is proposed in algorithm~\ref{al:mean_shift_algorithm}.
\begin{algorithm}
\small\sf\centering
\caption{Mean Shift algorithm within the pose space\label{al:mean_shift_algorithm}}
\begin{algorithmic}[1]
\REQUIRE{$\lbrace \mathcal{P}_i \rbrace_{i=1,\dots,n}$ a set of poses,  \\
$\mat{p}_{\text{in}}$ a representative of the pose to shift, 
\\ $r$ the Mean Shift radius.}
\ENSURE{A representative of the shifted pose.}

\STATE {$R \gets GetRepresentatives( \lbrace \mathcal{P}_i \rbrace_{i=1,\dots,n})$}

\textcolor{blue}{
\# Preprocessing step independent of $\mat{p}_{\text{in}}$. \\
\# $R$ contains all poses representatives: \\ 
\# $\forall i \in  \llbracket 1, n \rrbracket, \left\lbrace R[i,j] \right\rbrace_{j=1, \dots, |\mathcal{R}(\bullet)|} = \mathcal{R}(\mathcal{P}_i)$.
}
\STATE{$\mat{p} \gets \mat{p}_{in}$} 
\REPEAT 
\STATE{$\mat{p}_{\text{old}} \gets \mat{p}$}
\STATE{$\mathcal{N} \gets \text{RadiusSearch}(R, \mat{p}, r)$}
\IF {$\mathcal{N} \neq \emptyset$}
	\STATE {$\mat{m} \gets \left( \sum_{(i,j) \in \mathcal{N}} \mat{R}[i,j] \right) /  |\mathcal{N}|$}
	\STATE {$\mat{p} \gets \text{ClosestRepresentative}(\proj(\mat{m}))$}
\ENDIF
\UNTIL{$\mat{p} \neq \mat{p}_{\text{old}}$}
\RETURN {$\mat{p}$}
\end{algorithmic}
\end{algorithm}

The projection of the mean at each iteration is actually not required for the poses to shift towards meaningful modes in practice, and therefore we perform it only once after convergence. The pose distribution obtained after Mean Shift is \emph{sharper} than the original one, as can be seen on figure~\ref{fig:pose_recovery}c column 2 where the silhouettes of object instances emerge. 

We then estimate the probability density (up to a scaling factor) at a mode $\mathcal{M}$  by kernel density estimation:
\begin{equation}
s(\mathcal{M}) = \sum_i H \left( \cfrac{\mydist(\mathcal{M}, \mathcal{P}_i)}{r} \right)
\end{equation}
with $H$ the Epanechnikov kernel to which is associated the flat Mean Shift kernel \citep{fukunaga1975}:
\begin{equation}
H(d) = \left\lbrace 
\begin{array}{lc}
\cfrac{3}{4}(1-d^2) & \text{if } |d| \leq 1\\
0 & \text{otherwise.}
\end{array} \right.
\end{equation}

The most significant modes based on this estimate can then be extracted. Those poses are assumed to be good pose hypotheses for the object instances of the scene, and typically stand up from the weighted distribution (figure~\ref{fig:pose_recovery}c column 3).
We refine them further through \eg the ICP procedure \citep{besl1992}, and filter them by checking their consistency with the actual data in order to avoid false postives, to hopefully retrieve the poses of object instances in the 3D scene (figure~\ref{fig:pose_recovery}c column 4).


\paragraph{Theoretical limitation}
The probabilistic interpretation used here is abusive and should only be considered as a way to give the intuition of the Mean Shift approach. Kernel density estimation over a Riemannian manifold has been mathematically studied by \citet{pelletier2005}, but our approach does not enter into such framework as we do not consider a Riemannian distance. 
Some theoretical results might nonetheless be obtained, since our metric is equivalent to a Riemannian metric for small Mean Shift radii (see section~\ref{sec:local_properties}).
Such considerations however, are out of the scope of this work, and $s(\mathcal{M})$ can simply be considered as a score for the pose $\mathcal{M}$.


\subsection{Comparison with a $SE(3)$ metric}
\label{subsec:experimental_comparison_se3}

We compare these experimental results with those obtained using a more usual distance adapted to $SE(3)$
\begin{equation}
\label{eq:distance_se3_experimental_comparison}
\mydist(\mat{T}_1, \mat{T}_2) = \sqrt{\| \mat{t}_2 - \mat{t}_1 \|^2 + r^2 \| \mat{R}_2 - \mat{R}_1 \|^2}.
\end{equation}
We chose this particular distance because the Mean Shift approach depends on the ability to average multiple poses, and a Frobenius norm over the rotation space is quite suited for this task~\citep{curtis1993}.
To limit the comparison bias, we choose the scaling factor $r$ between the rotation and translation parts to be
\begin{equation}
r=\sqrt{\cfrac{\lambda_1^2+\lambda_2^2+\lambda_3^2}{3}},
\end{equation}
where $\lambda_1 \leq \lambda_2 \leq \lambda_3$ are the eigenvalues of $\matLambda$.
This choice is indeed consistent with our proposed metric,  in that the rotational part of the distance between two poses of an object without proper symmetry corresponds respectively to $2 \sqrt{2} r \sin(\theta/2)$ for distance~\eqref{eq:distance_se3_experimental_comparison}, and $2 \sqrt{I_{\mat{k}}} \sin(\theta/2)$ for the proposed one, where $\theta$ is the angle of the relative rotation between the two poses, and $I_{\mat{k}}$ is the inertia moment of the corresponding axis $\mat{k}$ (see section~\ref{subsec:rotation_anisotropy}). Considering a typical value of  $2/3(\lambda_1^2+\lambda_2^2+\lambda_3^2)$ for $I_{\mat{k}}$ enables to identify those two terms.

As illustrated on figure~\ref{fig:comparison_our_distance_with_se3}, we do not observe much differences between the two approaches for the \emph{bunny} object. This is actually not surprising because the two distances are in this case very similar, since the \emph{bunny} is not symmetric, and has a limited anisotropy.

However, the benefit of our metric appears for the \emph{candlestick} and the \emph{rocket}, that both are symmetric.
Initial votes for poses are indeed spread out over the space of rigid transformations $SE(3)$, and considering the SE(3) distance~\eqref{eq:distance_se3_experimental_comparison} therefore leads to the detection of multiple modes corresponding to the same instance, because of the symmetries. One would have to filter out these duplicated poses hypotheses prior to any practical application, and because of these, it is required to check numerous modes to find every instance of the scene. In our example (figure~\ref{fig:comparison_our_distance_with_se3}b) we had to test up to respectively the 4th and 8th mode to recover the poses of the 3 rockets and candlesticks present in the scene.

On the other hand, the proposed distance enables to account for the proper symmetries of the object, and thus to better exploit the information contained in the initial set of votes than the $SE(3)$ distance. In our example, the 3 first modes extracted indeed correspond to the 3 actual instances for each object, without any duplicates. Moreover, these modes have more support from the initial set of votes and therefore stand out more clearly from the noise, which is important for the robustness of the method. The pose distribution obtained after Mean Shift is indeed visually sharper (figure~\ref{fig:comparison_our_distance_with_se3}, left), and spurious modes  for the \emph{rocket} and the \emph{candlestick} have a score below respectively 59\% and 66\% of the ones of the modes corresponding to actual object instances, compared to 89\% and 98\% when using the distance~\eqref{eq:distance_se3_experimental_comparison}.

\begin{figure*}
\centering

\colorlet{color_good}{color1}

\colorlet{color_bad}{color2}


\colorlet{color_duplicate}{color3}

\newcommand{\drawbar}[5]{ 
\draw [#1, fill=#1] ({#2 * \xmodeoffset + \baroffset}, 0) rectangle ({(#2 + 1) * \xmodeoffset - \baroffset}, {#4 * \barscale});
\begin{scope}[shift={({(#2 + 0.5) * \xmodeoffset}, {#4 * \barscale})}];
\draw (0, \ymodelabeloffset) node {$\pgfmathprintnumber[fixed,precision=2]{#4}$};
\draw {(0, \ymodeimgoffset)} node [above, draw=black, very thin, inner sep=1pt] {\includegraphics[width=\modewidth]{#5}};
\end{scope};
\draw ({(#2 + 0.5) * \xmodeoffset},0) node [below] {#3};
}

\newcommand{\drawbargood}[4]{
\drawbar{color_good}{#1}{#2}{#3}{#4};
}

\newcommand{\drawbarduplicate}[4]{
\drawbar{color_duplicate}{#1}{#2}{#3}{#4};
\begin{scope}[shift={({(#1 + 0.5) * \xmodeoffset}, {#3 * \barscale + \ymodeimgoffset})}];
\draw [color=red, thick] (-0.4\modewidth, -4pt) -- (0.4\modewidth, {\modeheight + 7pt});
\end{scope};
}

\newcommand{\drawbarbad}[4]{
\drawbar{color_bad}{#1}{#2}{#3}{#4};
\begin{scope}[shift={({(#1 + 0.5) * \xmodeoffset}, {#3 * \barscale + \ymodeimgoffset})}];
\draw [color=red, thick] (-0.4\modewidth, -4pt) -- (0.4\modewidth, {\modeheight + 7pt});
\draw [color=red, thick] (0.4\modewidth, -4pt) -- (-0.4\modewidth, {\modeheight + 7pt});

\end{scope};
}


\begin{tikzpicture}
\newlength{\imgwidth};
\setlength{\imgwidth}{2.8cm};
\def\xmodesaggregate{-10cm};

\def\yoffset{3.2cm};

\def\xmodeoffset{2.2cm};
\newlength{\modewidth};
\setlength{\modewidth}{1.7cm};
\def\ymodeimgoffset{0.4cm};
\def\ymodelabeloffset{0.2cm};
\pgfmathsetmacro\modeheight{{1018/1034*\modewidth}};

\def\barwidth{0.5cm};
\def\barscale{0.6cm};
\pgfmathsetmacro\baroffset{{(\xmodeoffset - \barwidth)/2}};


\begin{scope}[shift={(0, {-0 * \yoffset})}];
\begin{scope}[shift={(0, {-0 * \yoffset})}];
\draw {(\xmodesaggregate, 0)} node [anchor=south west,inner sep=0] {\includegraphics[width=\imgwidth]{z_img_example_bunny_clusters.png}};
\begin{scope}[shift={(-3 * \xmodeoffset,0)}];
\def\wzero{662.538};
\def\wone{343.504};
\def\wtwo{177.622};
\def\wthree{162.079};
\def\wfour{148.288};
\def\wfive{136.345};
\pgfmathsetmacro\sone{{\wone / \wzero}};
\pgfmathsetmacro\stwo{{\wtwo / \wzero}};
\pgfmathsetmacro\sthree{{\wthree / \wzero}};
\pgfmathsetmacro\sfour{{\wfour / \wzero}};
\pgfmathsetmacro\sfive{{\wfive/ \wzero}};
\draw (0, {0.6 * \barscale}) node [rotate=90, align = center] {relative\\score};
\drawbargood{0}{1}{1}{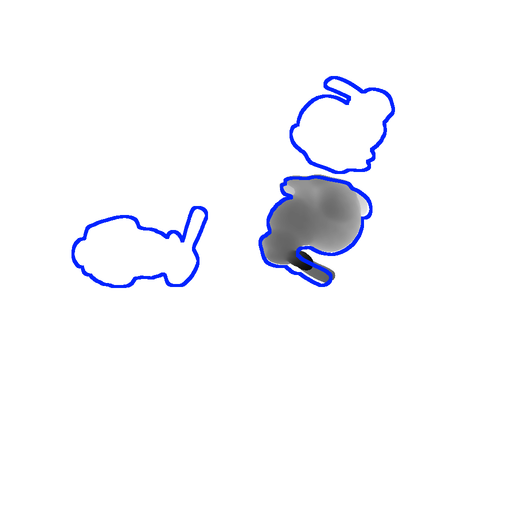};
\drawbargood{1}{2}{\sone}{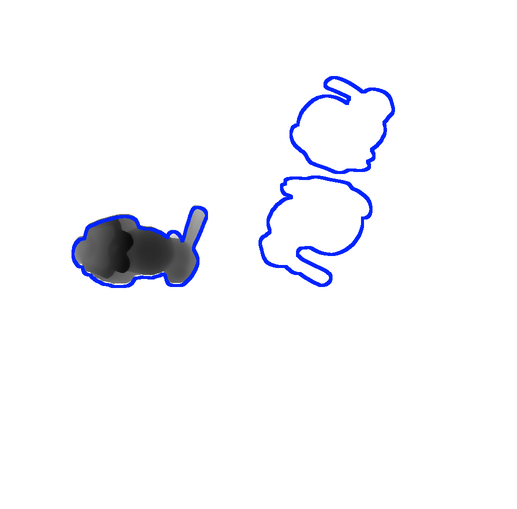};
\drawbargood{2}{3}{\stwo}{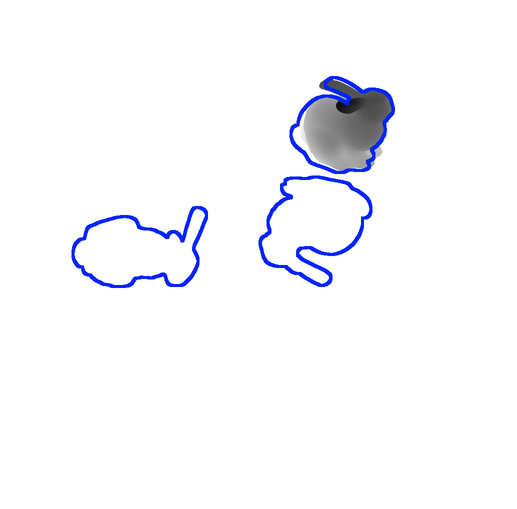};
\drawbarbad{3}{4}{\sthree}{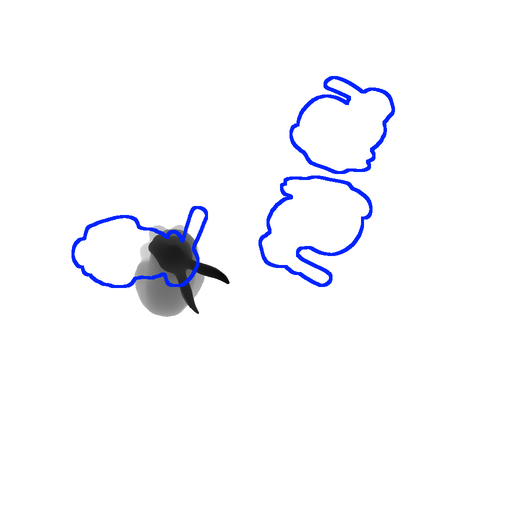};
\drawbarbad{4}{5}{\sfour}{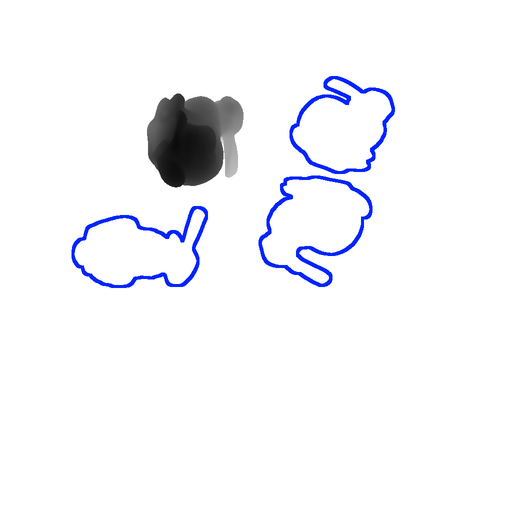};
\drawbarbad{5}{6}{\sfive}{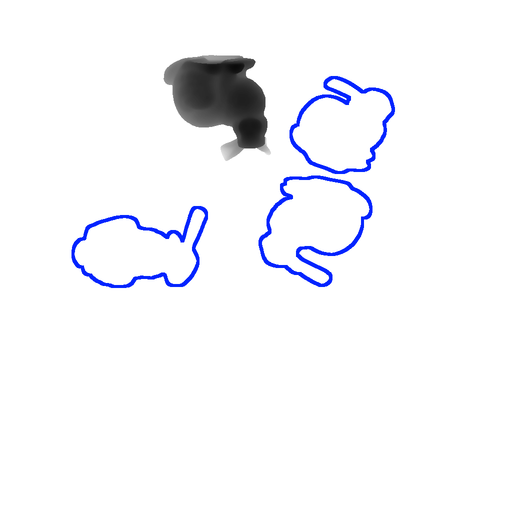};
\draw (0, 0) -- ({6 * \xmodeoffset}, 0);
\end{scope};
\end{scope};

\begin{scope}[shift={(0, {-1 * \yoffset})}];
\draw {(\xmodesaggregate, 0)} node [anchor=south west,inner sep=0] {\includegraphics[width=\imgwidth]{z_img_example_rocket_clusters.png}};
\begin{scope}[shift={(-3 * \xmodeoffset,0)}];
\def\wzero{361.606};
\def\wone{222.617};
\def\wtwo{213.099};
\def\wthree{126.49};
\def\wfour{105.275};
\def\wfive{91.9979};
\pgfmathsetmacro\sone{{\wone / \wzero}};
\pgfmathsetmacro\stwo{{\wtwo / \wzero}};
\pgfmathsetmacro\sthree{{\wthree / \wzero}};
\pgfmathsetmacro\sfour{{\wfour / \wzero}};
\pgfmathsetmacro\sfive{{\wfive/ \wzero}};
\draw (0, {0.6 * \barscale}) node [rotate=90, align = center] {relative\\score};
\drawbargood{0}{1}{1}{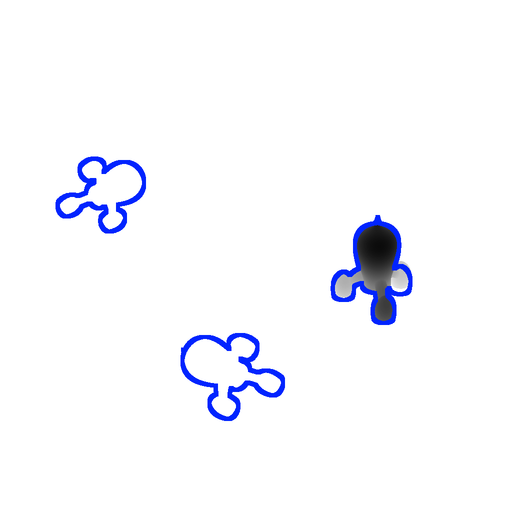};
\drawbargood{1}{2}{\sone}{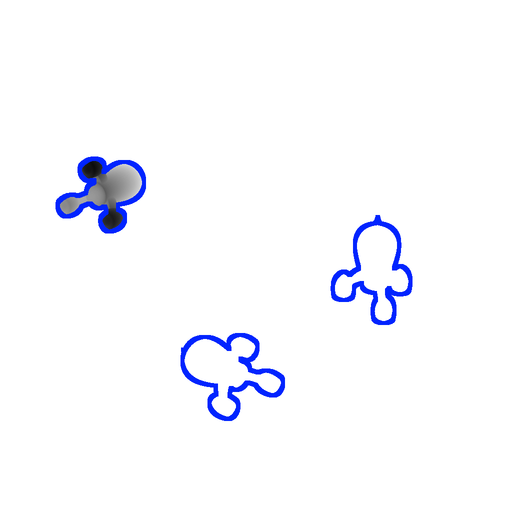};
\drawbargood{2}{3}{\stwo}{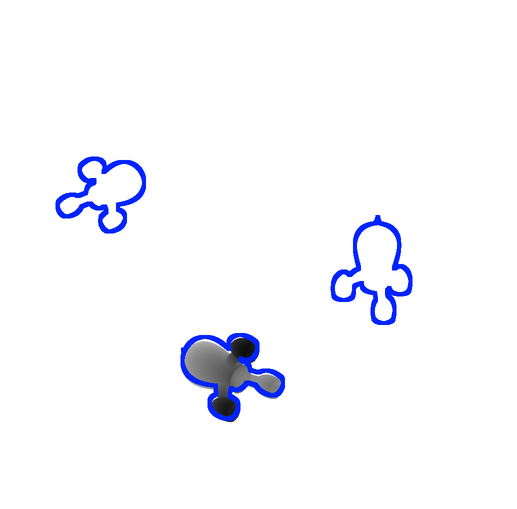};
\drawbarbad{3}{4}{\sthree}{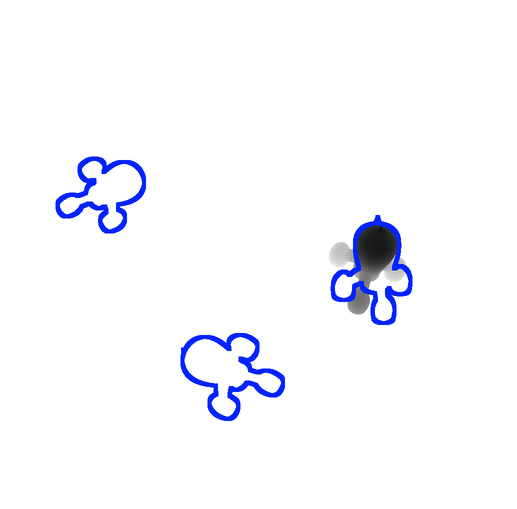};
\drawbarbad{4}{5}{\sfour}{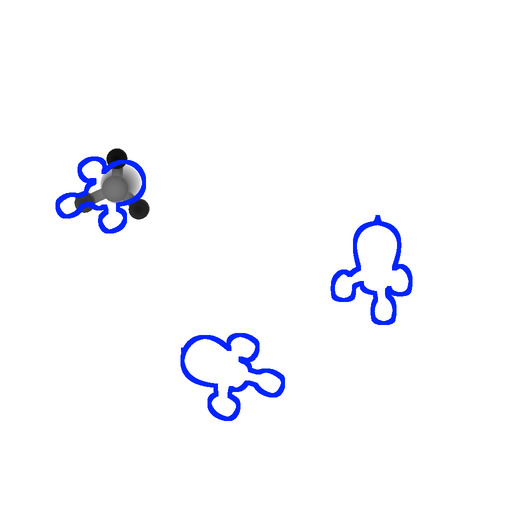};
\drawbarbad{5}{6}{\sfive}{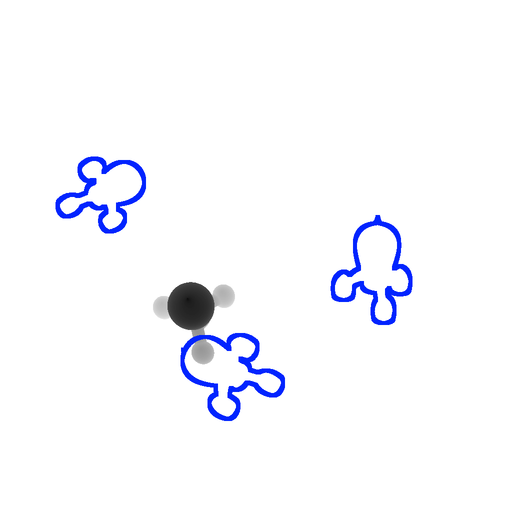};
\draw (0, 0) -- ({6 * \xmodeoffset}, 0);
\end{scope};
\end{scope};

\begin{scope}[shift={(0, {-2 * \yoffset})}];
\draw (0, -8pt) node [below, align = center] {(a) First modes of the pose distribution retrieved using the proposed metric.};
\draw {(\xmodesaggregate, 0)} node [anchor=south west,inner sep=0] {\includegraphics[width=\imgwidth]{z_img_example_candlestick_clusters.png}};
\begin{scope}[shift={(-3 * \xmodeoffset,0)}];
\def\wzero{1988.07};
\def\wone{1788.58};
\def\wtwo{566.989};
\def\wthree{368.171};
\def\wfour{306.798};
\def\wfive{302.319};
\pgfmathsetmacro\sone{{\wone / \wzero}};
\pgfmathsetmacro\stwo{{\wtwo / \wzero}};
\pgfmathsetmacro\sthree{{\wthree / \wzero}};
\pgfmathsetmacro\sfour{{\wfour / \wzero}};
\pgfmathsetmacro\sfive{{\wfive/ \wzero}};
\draw (0, {0.6 * \barscale}) node [rotate=90, align = center] {relative\\score};
\drawbargood{0}{1}{1}{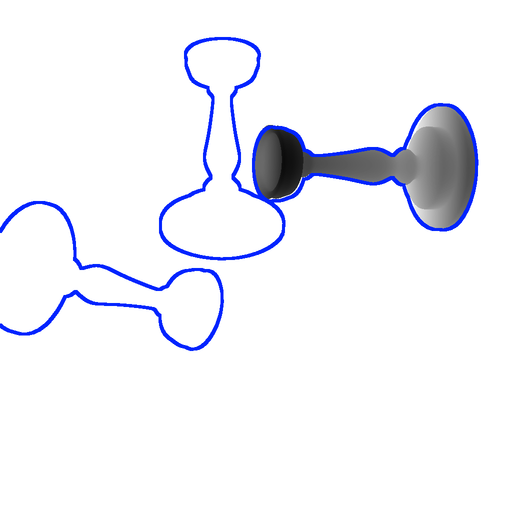};
\drawbargood{1}{2}{\sone}{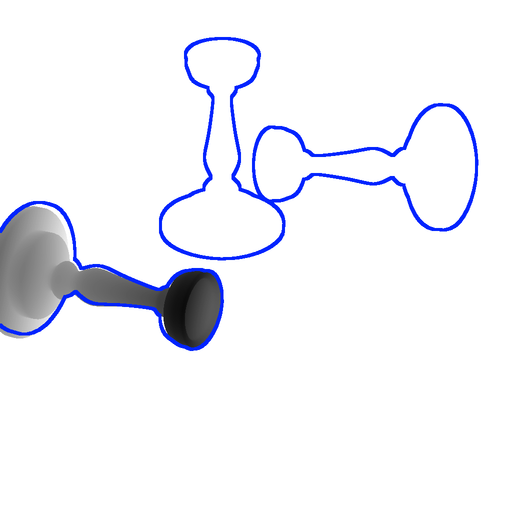};
\drawbargood{2}{3}{\stwo}{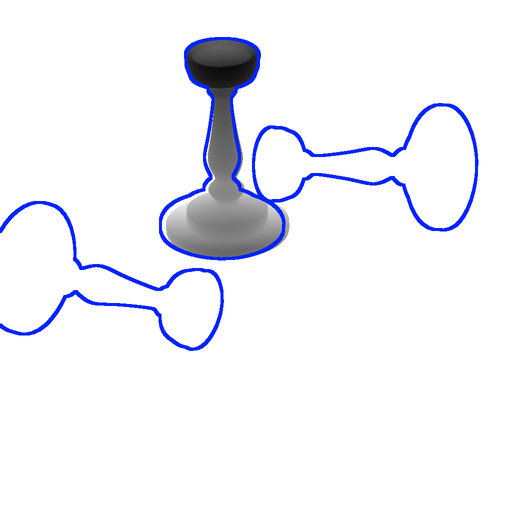};
\drawbarbad{3}{4}{\sthree}{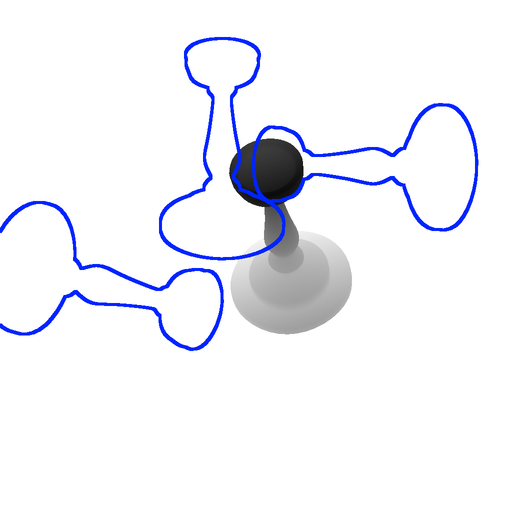};
\drawbarbad{4}{5}{\sfour}{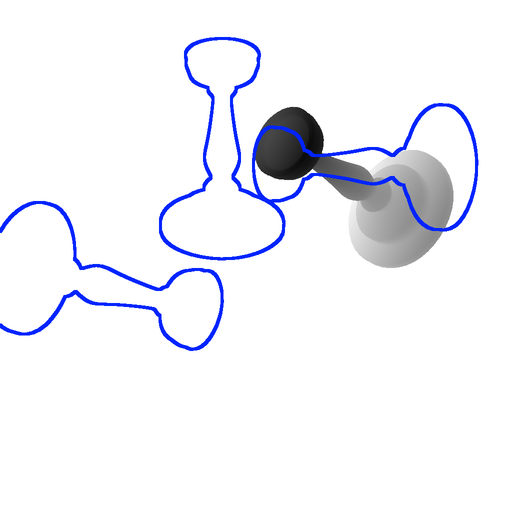};
\drawbarbad{5}{6}{\sfive}{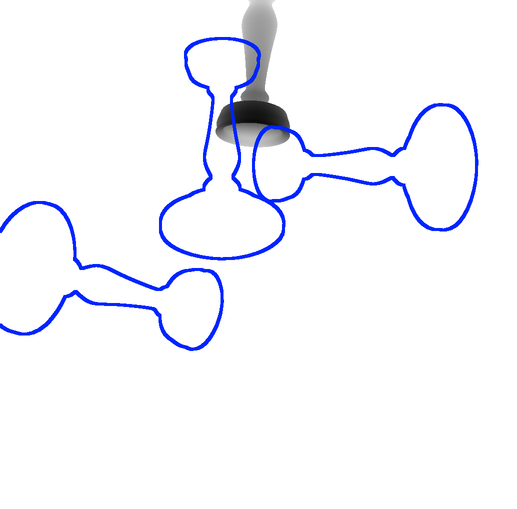};
\draw (0, 0) -- ({6 * \xmodeoffset}, 0);
\end{scope};
\end{scope};

\end{scope};


\begin{scope}[shift={(0, {-3 * \yoffset - 0.5cm})}];
\begin{scope}[shift={(0, {-0 * \yoffset})}];
\draw {(\xmodesaggregate, 0)} node [anchor=south west,inner sep=0] {\includegraphics[width=\imgwidth]{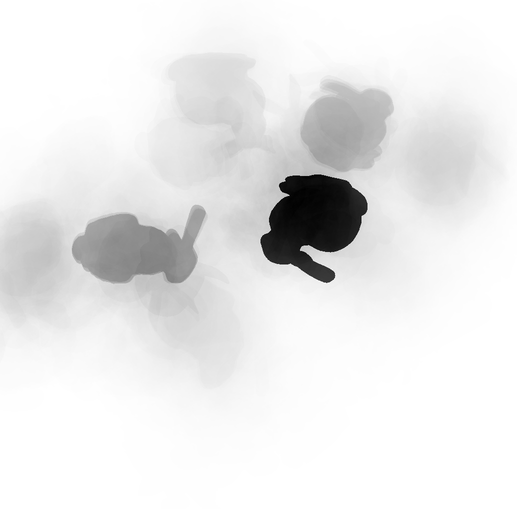}};
\begin{scope}[shift={(-3 * \xmodeoffset,0)}];
\def\wzero{649.818};
\def\wone{332.149};
\def\wtwo{181.151};
\def\wthree{152.338};
\def\wfour{139.325};
\def\wfive{131.656};
\pgfmathsetmacro\sone{{\wone / \wzero}};
\pgfmathsetmacro\stwo{{\wtwo / \wzero}};
\pgfmathsetmacro\sthree{{\wthree / \wzero}};
\pgfmathsetmacro\sfour{{\wfour / \wzero}};
\pgfmathsetmacro\sfive{{\wfive/ \wzero}};
\draw (0, {0.6 * \barscale}) node [rotate=90, align = center] {relative\\score};
\drawbargood{0}{1}{1}{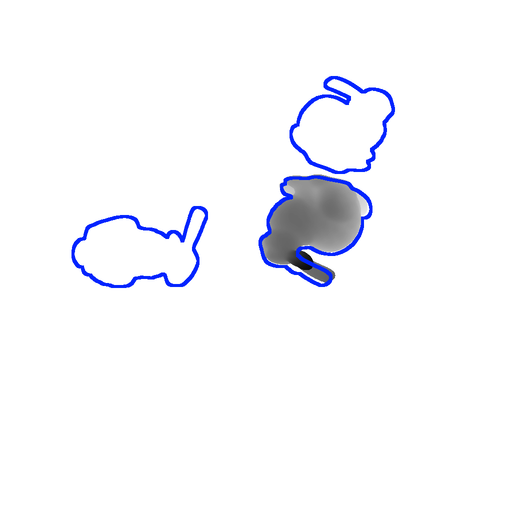};
\drawbargood{1}{2}{\sone}{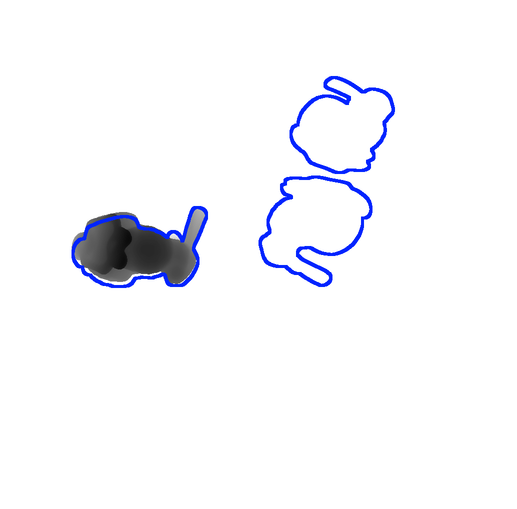};
\drawbargood{2}{3}{\stwo}{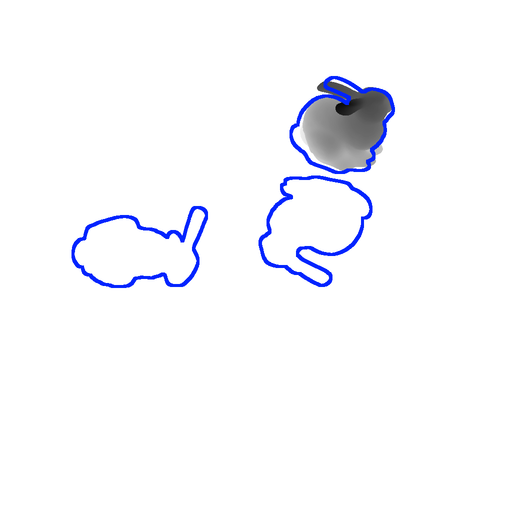};
\drawbarbad{3}{4}{\sthree}{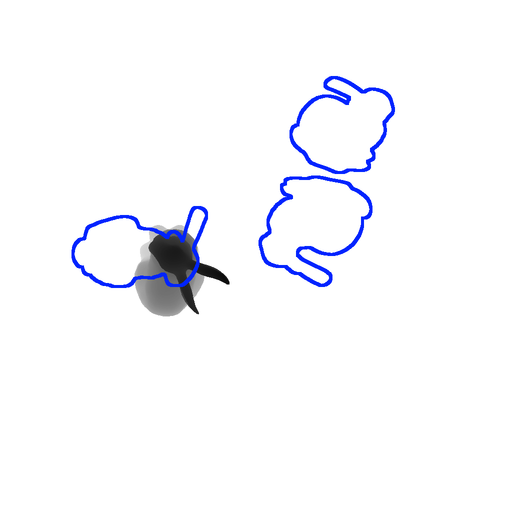};
\drawbarbad{4}{5}{\sfour}{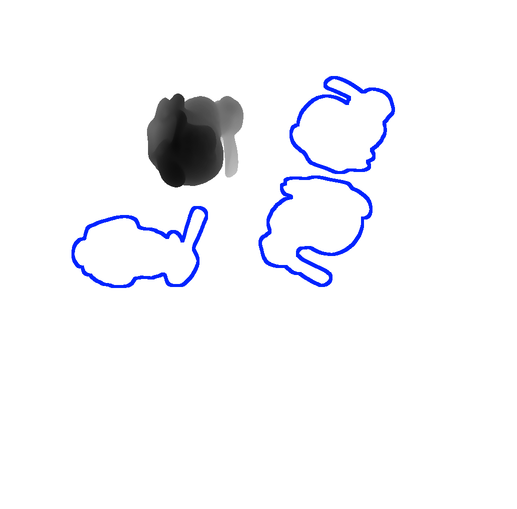};
\drawbarbad{5}{6}{\sfive}{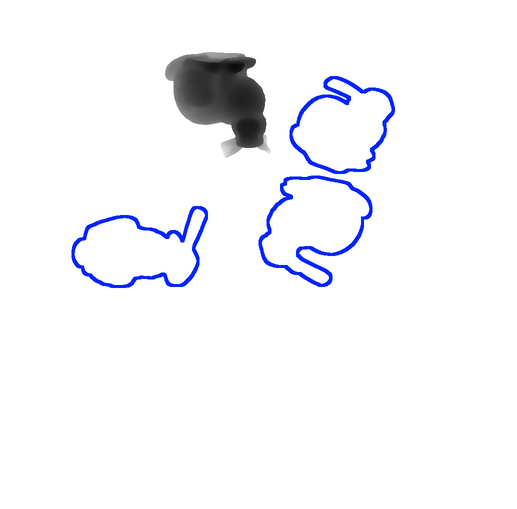};
\draw (0, 0) -- ({6 * \xmodeoffset}, 0);
\end{scope};
\end{scope};

\begin{scope}[shift={(0, {-1 * \yoffset})}];
\draw {(\xmodesaggregate, 0)} node [anchor=south west,inner sep=0] {\includegraphics[width=\imgwidth]{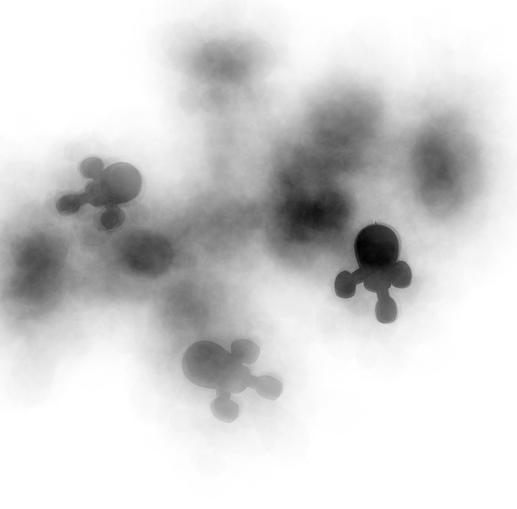}};
\begin{scope}[shift={(-3 * \xmodeoffset,0)}];
\def\wzero{196.605};
\def\wone{118.388};
\def\wtwo{102.116};
\def\wthree{89.5075};
\def\wfour{88.5678};
\def\wfive{78.993};
\pgfmathsetmacro\sone{{\wone / \wzero}};
\pgfmathsetmacro\stwo{{\wtwo / \wzero}};
\pgfmathsetmacro\sthree{{\wthree / \wzero}};
\pgfmathsetmacro\sfour{{\wfour / \wzero}};
\pgfmathsetmacro\sfive{{\wfive/ \wzero}};
\draw (0, {0.6 * \barscale}) node [rotate=90, align = center] {relative\\score};
\drawbargood{0}{1}{1}{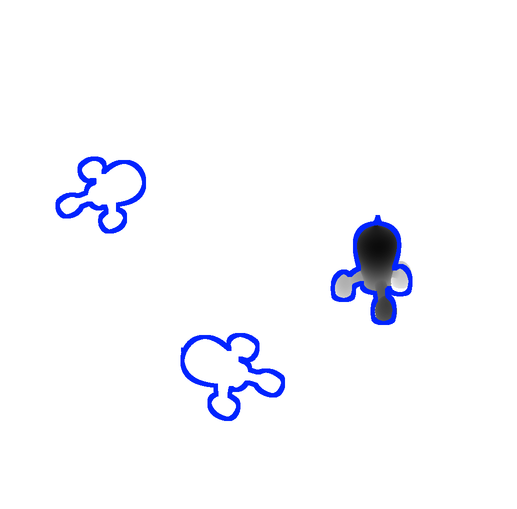};
\drawbarduplicate{1}{2}{\sone}{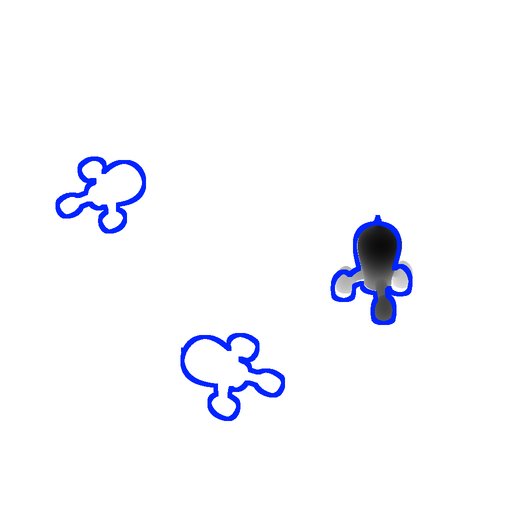};
\drawbargood{2}{3}{\stwo}{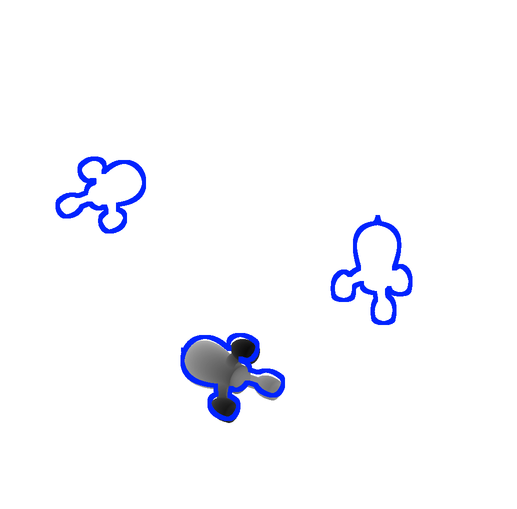};
\drawbargood{3}{4}{\sthree}{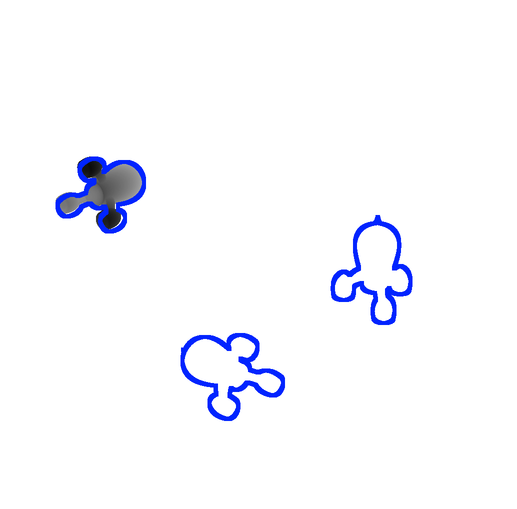};
\drawbarduplicate{4}{5}{\sfour}{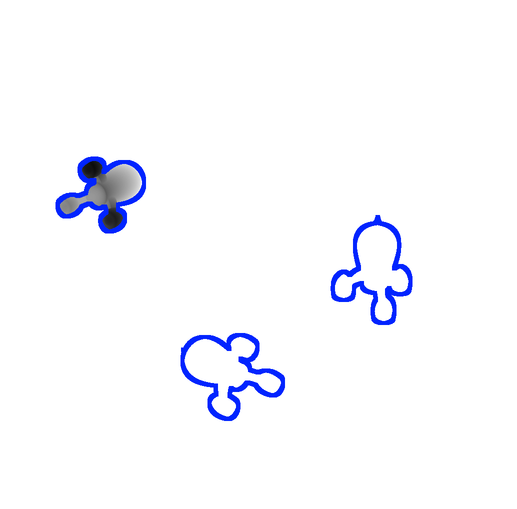};
\drawbarbad{5}{6}{\sfive}{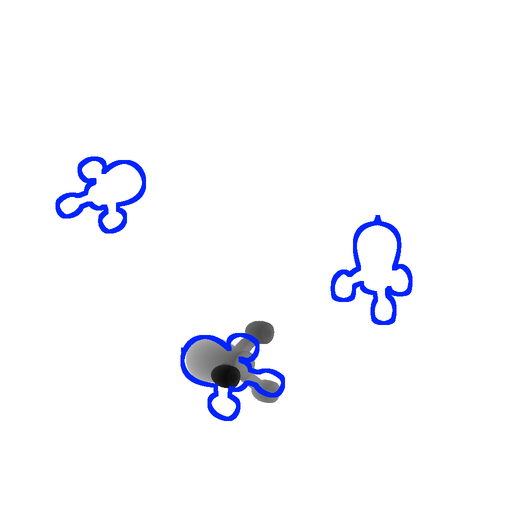};
\draw (0, 0) -- ({6 * \xmodeoffset}, 0);
\end{scope};
\end{scope};

\begin{scope}[shift={(0, {-2 * \yoffset})}];
\draw (0, -8pt) node [below, align = center] {(b) First modes of the pose distribution retrieved using the $SE(3)$ distance~\eqref{eq:distance_se3_experimental_comparison}.};
\draw {(\xmodesaggregate, 0)} node [anchor=south west,inner sep=0] {\includegraphics[width=\imgwidth]{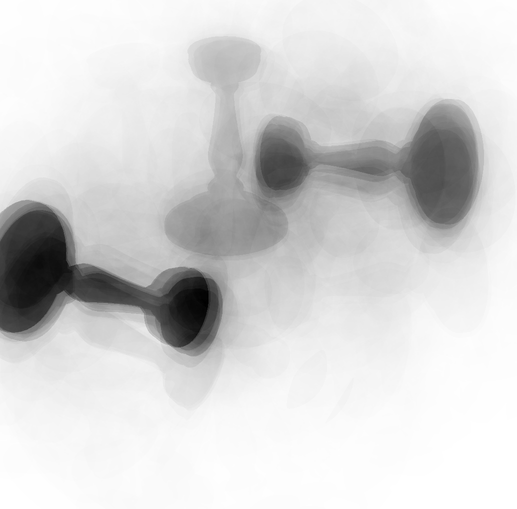}};
\begin{scope}[shift={(-3 * \xmodeoffset,0)}];
\def\wzero{485.365};
\def\wone{405.357};
\def\wtwo{264.003};
\def\wthree{253.508};
\def\wfour{251.887};
\def\wfive{245.773};
\def\wsix{243.878};
\def\wseven{230.597};
\def\wheight{227.006};
\pgfmathsetmacro\sone{{\wone / \wzero}};
\pgfmathsetmacro\stwo{{\wtwo / \wzero}};
\pgfmathsetmacro\sthree{{\wthree / \wzero}};
\pgfmathsetmacro\sfour{{\wfour / \wzero}};
\pgfmathsetmacro\sfive{{\wfive/ \wzero}};
\pgfmathsetmacro\ssix{{\wsix/ \wzero}};
\pgfmathsetmacro\sseven{{\wseven/ \wzero}};
\pgfmathsetmacro\sheight{{\wheight/ \wzero}};
\draw (0, {0.6 * \barscale}) node [rotate=90, align = center] {relative\\score};
\drawbargood{0}{1}{1}{z_img_example_modes_candlestick_peak_0_contours.png};
\drawbargood{1}{2}{\sone}{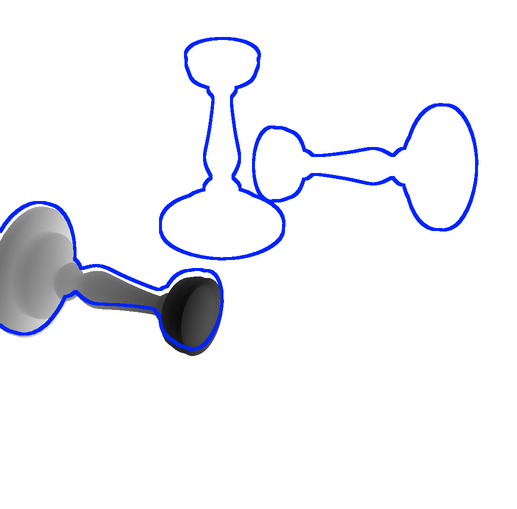};
\drawbarduplicate{2}{3}{\stwo}{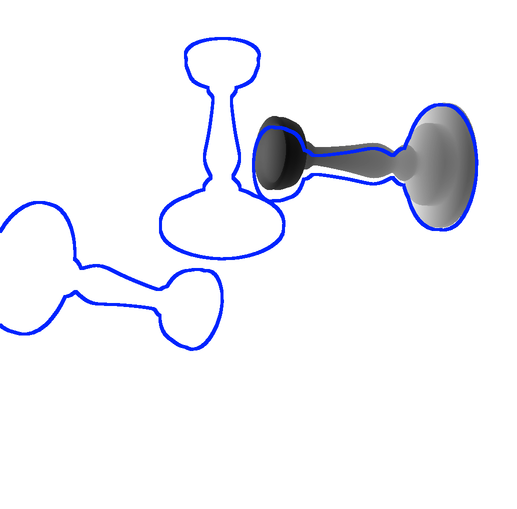};
\begin{scope}[shift={({(3 + 0.5) * \xmodeoffset}, 0)}];
\draw (0, 0) node [below] {\dots};
\draw (0, 1.7cm) node [text width=3cm, align=center] {4 other \\ duplicates.};
\end{scope};
\drawbargood{4}{8}{\sseven}{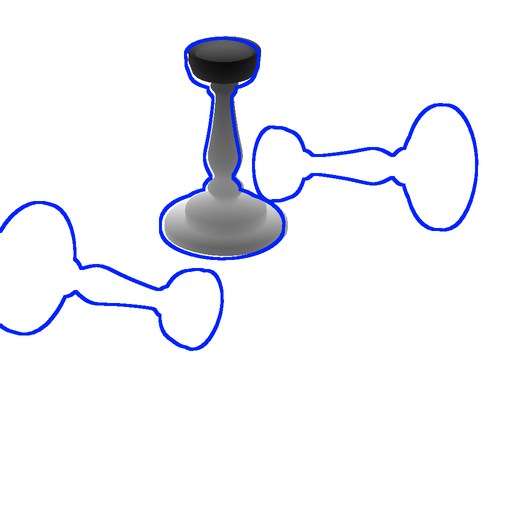};
\drawbarbad{5}{9}{\sheight}{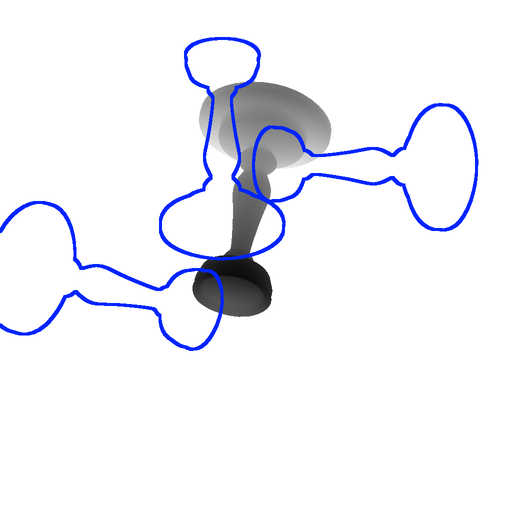};
\draw (0, 0) -- ({6 * \xmodeoffset}, 0);
\end{scope};
\end{scope};

\end{scope};
\end{tikzpicture}

\caption{\label{fig:comparison_our_distance_with_se3} Comparison of the proposed distance and a $SE(3)$ distance for pose estimation. Left: shifted poses weighted by the density
of the initial pose distribution. Right: first modes retrieved from the pose distribution, sorted by descending score (with the contours of the actual poses of object instances superimposed). Modes are supposed to be good hypotheses regarding the poses of actual instances, and are classified as true positives (blue), duplicates (green, strikethrough) and false positives (yellow, double strikethrough). Both distances perform similarly well for the \emph{bunny} object, which has no proper symmetry, and the first modes extracted correspond to the different object instances.
However, the $SE(3)$ distance does not account for symmetries of the \emph{rocket} and \emph{candlestick} objects, and therefore leads to the generation of duplicated pose hypotheses, requiring to consider many of them to recover the pose of every instances. The proposed distance better exploit the information contained in the initial pose distribution, leading to a generation of poses hypotheses with no duplicates, and with a greater relative score gap between modes corresponding to actual instances and spurrious ones.
}
\end{figure*}

\bibliographyInSubfile

\section{Summary and discussion}

In this paper, we address issues of the commonly used notion of pose of a rigid object, both in the 2D and 3D case.

While pose is usually assumed to be equivalent to a rigid transformation, this is not true in general due to potential symmetries of the object. We therefore propose a broader definition of the notion of pose, consisting in a distinguishable static state of the object. We show that with this definition, a pose can be considered as an equivalence class of the space of rigid transformations, thanks to the introduction of a proper symmetry group specific to the object. We believe this notion to be essential, as many of manufactured objects actually show some symmetry properties and could not be represented properly previously.

Based on this definition, we propose a metric over the pose space as a measure of the smallest displacement between two poses, the length of a displacement consisting in the RMS displacement distance of surface points of the object. Besides being defined for any physical rigid object, such metric is interesting in that it does not depend on some arbitrary choice of frames or of scaling factors, while accounting for the geometry of the object.

With computation efficiency in mind, we propose a coherent framework to represent poses in a Euclidean space of at most 12 dimensions, so as to enable efficient distance computations, neighborhood queries, and pose averaging, while providing theoretical proofs for those results.
 
Those developments enable the use of our metric for high level tasks such as pose estimation based on a set of votes, where it appears to provide better results than a metric suited for $SE(3)$.


\begin{acknowledgements}
We would like to thank the anonymous reviewers for their insightful comments and suggestions that greatly helped to improve this article.
Some of our illustrations are based on the following mesh models: ``Stanford bunny'', from the Stanford University Computer Graphics Laboratory; ``Eiffel Tower'' created by Pranav Panchal; and ``Şamdan 2'' (candlestick), from Metin N. Those were respectively available online at 
\myurl{http://graphics.stanford.edu/data/3Dscanrep}, and the GrabCAD and 3D Warehouse plateforms on May 2016.
\end{acknowledgements}

\appendix
\section{Distance simplification for a revolution object without rotoreflection invariance}
\label{app:proof_distance_simplification_revolution_object}

Using the same definition of $\matLambda$ as in section \ref{sec:no_invariance_object}, the rotation part of the proposed distance for a revolution object without rotoreflection invariance can be rewritten in the following way:
\begin{equation}
\label{eq:distance_rotation_revolution_object_via_covariance}
\begin{aligned}
&\mydistrotsquare(\mathcal{P}_1, \mathcal{P}_2) \\
&= \min_{\phi_1, \phi_2} \cfrac{1}{S} \int_{\mathcal{S}} \mu(\mat{x}) \| \mat{R}_2 \mat{R}_z^{\phi_2} \mat{x}  - \mat{R}_1 \mat{R}_z^{\phi_1} \mat{x}\|^2 ds \\
&=\min_{\phi_1, \phi_2} \| \mat{R}_2 \mat{R}_z^{\phi_2} \matLambda - \mat{R}_1 \mat{R}_z^{\phi_1} \matLambda \|_F^2.
\end{aligned}
\end{equation}

Frobenius norm being invariant under rotations, this expression can be rewritten with the relative rotation $\mat{R} \triangleq \mat{R}_1^{-1} \mat{R}_2$:
\begin{equation}
\mydistrotsquare(\mathcal{P}_1, \mathcal{P}_2) = \min_{\phi_1, \phi_2} \| \mat{R}_z^{-\phi_1} \mat{R} \mat{R}_z^{\phi_2} \matLambda -   \matLambda \|_F^2.
\end{equation}

We parametrize $\mat{R}$ using Euler angles $(\tilde{\psi}, \theta, \tilde{\phi}) \in \mathbb{R}^3$ such as $\mat{R} = \mat{R}_z^{\tilde{\psi}} \mat{R}_x^\theta \mat{R}_z^{\tilde{\phi}}$, considering the following elementary rotations:
\begin{equation}
\label{eq:elementary_rotations_definition}
\mat{R}_z^{\alpha} \triangleq 
\left( \begin{smallmatrix}
\cos(\alpha) & -\sin(\alpha) & 0 \\
\sin(\alpha) & \cos(\alpha) & 0 \\
0 & 0 &1
\end{smallmatrix} \right), 
\mat{R}_x^{\alpha} \triangleq 
\left( \begin{smallmatrix}
1 & 0 & 0 \\
0 & \cos(\alpha) & -\sin(\alpha) \\
0 & \sin(\alpha) & \cos(\alpha)
\end{smallmatrix} \right).
\end{equation}
Injecting this parametrization into the previous expression and performing the changes of variables $\psi \leftarrow \tilde{\psi} -\phi_1$ and $\phi \leftarrow \tilde{\phi} + \phi_2$ leads us to the following expression:
\begin{equation}
\begin{aligned}
\mydistrotsquare(\mathcal{P}_1, \mathcal{P}_2) &=   \min_{\phi_1, \phi_2} \| \mat{R}_z^{-\phi_1} \mat{R}_z^{\tilde{\psi}} \mat{R}_x^\theta \mat{R}_z^{\tilde{\phi}} \mat{R}_z^{\phi_2} \matLambda -   \matLambda \|_F^2 \\
&= \min_{\psi, \phi} \| \mat{R}_z^\psi \mat{R}_x^\theta \mat{R}_z^\phi \matLambda -  \matLambda \|_F^2.
\end{aligned}
\end{equation}
Because of the specific shape of $\matLambda$ (equation~\ref{eq:lambda_expression_revolution}), the term to minimize can be decomposed into two parts:
\begin{multline}
\| \mat{R}_z^\psi \mat{R}_x^\theta \mat{R}_z^\phi \matLambda -  \matLambda \|_F^2 = \lambda_z^2 \underbrace{\| \mat{R}_z^\psi \mat{R}_x^\theta \mat{R}_z^\phi \mat{e}_z - \mat{e}_z \|^2}_{a_{\psi, \phi}} \\
+ \!\lambda_r^2 \underbrace{(\| \mat{R}_z^\psi \mat{R}_x^\theta \mat{R}_z^\phi \mat{e}_x -\mat{e}_x \|^2  \!+ \| \mat{R}_z^\psi \mat{R}_x^\theta \mat{R}_z^\phi \mat{e}_y - \mat{e}_y \|^2)}_{b_{\psi, \phi}}.
\end{multline}

Developing this expression thanks to the definition of the elementary rotations \eqref{eq:elementary_rotations_definition}, we evaluate those terms into:
\begin{equation}
\left\lbrace
\begin{aligned}
a_{\psi, \phi} &= 2(1-\cos(\theta)) \\
b_{\psi, \phi} &= 4 - 2 \cos(\psi + \phi)(1 + \cos(\theta))). \\
\end{aligned}
\right.
\end{equation}
The first term is independent of $\psi$ and $\phi$. The second one can be minimized easily relatively to those two parameters, and  admits a minimum that appears to be equal to the first term: 
\begin{equation}
\min_{\psi, \phi} b_{\psi, \phi} = 2(1-\cos(\theta)). \\
\end{equation}

This result enables us to estimate the distance between the two poses in a closed form. However, having to refer to a relative rotation between the two poses and perform an Euler decomposition is cumbersome and would not enable to propose a representation of a pose efficient for neighborhood queries. We prefer instead to use the following property
\begin{equation}
\begin{aligned}
2(1-\cos(\theta)) &= \| \mat{R} \mat{e}_z - \mat{e}_z \|^2 \\
&= \| \mat{R}_2 \mat{e}_z - \mat{R}_1 \mat{e}_z \|^2
\end{aligned}
\end{equation}
in order to express the rotation part of the square distance as a function of the distance between the revolution axes of the object at the two poses:
\begin{equation}
\mydistrotsquare(\mathcal{P}_1, \mathcal{P}_2) = (\lambda_r^2 + \lambda_z^2) \| \mat{R}_2 \mat{e}_z - \mat{R}_1 \mat{e}_z \|^2.
\end{equation}


\section{Minimum distance between representatives of the same pose}
\label{ap:T_value_application_example}
In this appendix, we show how to compute the minimum distance  T between representatives of the same pose (see the definition~\ref{def:definition_T}) for the objects of our application example.

The bunny and the candlestick admit one representative per pose, hence $T=+\infty$ for those by convention.

The case of the rocket requires some calculus. For the sake of simplicity we consider an object frame whose z axis corresponds to the symmetry axis of the rocket. In this frame, the proper symmetry group of the rocket can be expressed as
\begin{equation}
G= \left\lbrace \mat{I}, \mat{R}_z^{2 \pi/3}, \mat{R}_z^{-2 \pi/3} \right\rbrace
\end{equation}
and the square root of the covariance matrix as
\begin{equation}
\matLambda= \diag(\lambda_r, \lambda_r, \lambda_z).
\end{equation}

We choose to consider the reference pose $\mathcal{P}_0$ and one of its representatives $\mat{p}$ (underbraced below) for the computation of T as it makes the computation simpler. Representatives $\mathcal{R}(\mathcal{P}_0)$ of this pose are
\begin{equation}
\left\lbrace 
\underbrace{\left( \begin{matrix} \vect(\matLambda) \\ \mat{0}_3 \end{matrix} \right)}_{\mat{p}},
\left( \begin{matrix} \vect(\mat{R}_z^{2 \pi/3} \matLambda) \\ \mat{0}_3 \end{matrix} \right),
\left( \begin{matrix} \vect(\mat{R}_z^{-2 \pi/3} \matLambda) \\ \mat{0}_3 \end{matrix} \right)
\right\rbrace.
\end{equation}

Thanks to those choices, we can evaluate T into:
\begin{equation}
\begin{aligned}
T&= \min_{\mat{q} \in \mathcal{R}(\mathcal{P}_0), \mat{q}\neq \mat{p}} \| \mat{q} - \mat{p} \| \\
&= \min \| \mat{R}_z^{\pm 2 \pi/3} \matLambda - \matLambda\|_F\\
&= \sqrt{6} \lambda_r.
\end{aligned}
\end{equation}

The threshold $\cfrac{T}{4}$ of proposition~\ref{prop:condition_representative_combination_consistency_ball} therefore corresponds  for the rocket to the value $\cfrac{\sqrt{3}}{2} \lambda_r$.

\section{Numerical recipes for a triangular mesh}
\label{sec:triangular_mesh}
Center of mass, area and covariance matrix of the surface of a triangular mesh $\mathcal{S} = \bigcup_i \mathcal{T}(\mat{a}_i, \mat{b}_i, \mat{c}_i)$ -- where $\mathcal{T}(\mat{a}, \mat{b}, \mat{c})$ is a triangle defined by three vertices $\mat{a}, \mat{b}, \mat{c} \in \mathbb{R}^3$  -- can be computed easily through the contributions of its triangles.

Let $\mathcal{T}(\mat{a}, \mat{b}, \mat{c})$ be a given triangle. Its area can be computed thanks to a cross product:
\begin{equation}
S_{\mat{a}, \mat{b}, \mat{c}} = \cfrac{\|(\mat{b} - \mat{a}) \times (\mat{c} - \mat{a})\|}{2},
\end{equation}
its center of mass through:
\begin{equation}
\mat{o}_{\mat{a}, \mat{b}, \mat{c}} = \cfrac{\mat{a} + \mat{b} + \mat{c}}{3},
\end{equation}
and its uncentered covariance matrix via:
\begin{equation}
\matSigma_{\mat{a}, \mat{b}, \mat{c}} = \cfrac{S_{\mat{a}, \mat{b}, \mat{c}}}{12} \left(9 \mat{o}_{\mat{a}, \mat{b}, \mat{c}} \mat{o}_{\mat{a}, \mat{b}, \mat{c}}^\top + \mat{a} \mat{a}^\top + \mat{b} \mat{b}^\top + \mat{c} \mat{c}^\top \right).
\end{equation}
From those results, we deduce the expression of the surface area of the mesh:
\begin{equation}
S= \sum_i S_{\mat{a}_i, \mat{b}_i, \mat{c}_i},
\end{equation}
its center of mass:
\begin{equation}
\mat{o}= \sum_i S_{\mat{a}_i, \mat{b}_i, \mat{c}_i} \mat{o}_{\mat{a}_i, \mat{b}_i, \mat{c}_i},
\end{equation}
and its normalized covariance matrix, if the center of mass of the mesh is chosen as origin of the object frame:
\begin{equation}
\matLambda^2 = \cfrac{1}{S} \sum_i \matSigma_{\mat{a}_i, \mat{b}_i, \mat{c}_i}.
\end{equation}

\bibliographystyle{spbasic}      
\bibliography{biblio}   

\end{document}